\documentclass[dvipsnames]{article}


\usepackage[final,nonatbib]{neurips_2024}


\usepackage[utf8]{inputenc} 
\usepackage[T1]{fontenc}    
\usepackage{hyperref}       
\usepackage{url}            
\usepackage{booktabs}       
\usepackage{amsfonts}       
\usepackage{nicefrac}       
\usepackage{microtype}      
\usepackage{xcolor}         

\usepackage{tikz}
\usepackage{pgfplots}
\pgfplotsset{compat=newest}
\usepgfplotslibrary{groupplots}
\usepgfplotslibrary{dateplot}

\usepackage[sortcites,backend=biber, style=numeric, doi=false, isbn=false, url=false, eprint=false]{biblatex}
\DeclareFieldFormat{sentencecase}{\MakeSentenceCase{#1}}
\renewbibmacro*{title}{%
  \ifthenelse{\iffieldundef{title}\AND\iffieldundef{subtitle}}
    {}
    {\ifthenelse{\ifentrytype{article}\OR\ifentrytype{inbook}%
      \OR\ifentrytype{incollection}\OR\ifentrytype{inproceedings}%
      \OR\ifentrytype{inreference}}
      {\printtext[title]{%
        \printfield[sentencecase]{title}%
        \setunit{\subtitlepunct}%
        \printfield[sentencecase]{subtitle}}}%
      {\printtext[title]{%
        \printfield[titlecase]{title}%
        \setunit{\subtitlepunct}%
        \printfield[titlecase]{subtitle}}}%
     \newunit}%
  \printfield{titleaddon}}
\addbibresource{bibliography.bib}

\usepackage{graphicx}
\usepackage{subfigure}
\usepackage{enumitem}
\usepackage{amsfonts}
\usepackage{amssymb}
\usepackage{mathtools}
\usepackage{amsthm}
\usepackage{multirow}
\PassOptionsToPackage{dvipsnames,table}{xcolor}
\usepackage{placeins}
\usepackage{bbm}
\usepackage{array}
\usepackage{stackrel}
\usepackage{algpseudocode}
\usepackage{algorithm}
\usepackage{colortbl}
\usepackage{tabu}
\usepackage{enumitem}
\usepackage{wrapfig}
\usepackage{thmtools}
\usepackage{thm-restate}
\usepackage{capt-of}


\declaretheorem[name=Corollary]{cor}

\newtheorem{definition}{Definition}

\newenvironment{assump}[1]
  {\innerassump}
  {\endinnerassump}

\newcommand{\continuation}{??}

\newenvironment{customthm}[1]
  {\innercustomthm}
  {\endinnercustomthm}

\declaretheorem[name=Remark]{remark}

\usepackage{xspace}
\newcommand\ie{i.\,e.\xspace}
\newcommand\eg{e.\,g.\xspace}

\newcommand{\ind}{\perp\!\!\!\!\perp} 

\DeclareMathOperator*{\argmin}{arg\,min}
\newcommand*\diff{\mathop{}\!\mathrm{d}}



\newcommand{\abs}[1]{\left\lvert #1 \right\rvert}

\newcommand{\norm}[1]{\left\lVert#1\right\rVert}

\makeatletter
\newcommand{\omult}{\mathbin{\mathpalette\make@circled*}}
\newcommand{\make@circled}[2]{%
  \ooalign{$\m@th#1\smallbigcirc{#1}$\cr\hidewidth$\m@th#1#2$\hidewidth\cr}%
}
\newcommand{\smallbigcirc}[1]{%
  \vcenter{\hbox{\scalebox{0.77778}{$\m@th#1\bigcirc$}}}%
}
\makeatother

\usepackage{scalerel,stackengine,amsmath}
\newcommand\equalhat{\mathrel{\stackon[1.5pt]{=}{\stretchto{%
    \scalerel*[\widthof{=}]{\wedge}{\rule{1ex}{3ex}}}{0.5ex}}}}

\usepackage{pifont}
\newcommand{\cmark}{\textcolor{ForestGreen}{\ding{51}}}%
\newcommand{\xmark}{\textcolor{BrickRed}{\ding{55}}}%

\newcolumntype{P}[1]{>{\centering\arraybackslash}p{#1}}

\newcommand\Overline[2][1pt]{%
    \begin{tikzpicture}[baseline=(a.base)]
      \node[inner xsep=0pt,inner ysep=0pt] (a) {$#2$};
      \draw[line width= #1] (a.north west) -- (a.north east);
    \end{tikzpicture}
    }

\newcommand\Underline[2][1pt]{%
    \begin{tikzpicture}[baseline=(a.base)]
      \node[inner xsep=0pt,inner ysep=0pt] (a) {$#2$};
      \draw[line width= #1] (a.south west) -- (a.south east);
    \end{tikzpicture}
    }

\newcommand*\circled[1]{\tikz[baseline=(char.base)]{\node[shape=circle,draw=darkred,minimum size=0.3cm,inner sep=0pt,fill=lightred] (char) {\fontfamily{phv}\selectfont \scriptsize #1};}}

\newcommand{\longCDTE}{{conditional distribution of the treatment effect}\xspace}
\newcommand{\CDTE}{CDTE\xspace}
\newcommand{\longAUlearner}{\emph{AU-learner}\xspace}
\newcommand{\AUCNFs}{AU-CNFs\xspace}

\definecolor{orange}{HTML}{FFCC99}
\definecolor{lightorange}{HTML}{FFE6CC}
\definecolor{yellow}{HTML}{FFFF88}
\definecolor{lightgray}{HTML}{EEEEEE}
\definecolor{darkgray}{HTML}{D0D0D0}
\definecolor{lightred}{HTML}{FAD9D5}
\definecolor{darkred}{HTML}{AE4132}
\definecolor{darkgreen}{HTML}{008001}
\definecolor{darkblue}{HTML}{2507FD}

\title{Quantifying Aleatoric Uncertainty of the Treatment Effect: A Novel Orthogonal Learner}

\author{
    Valentyn Melnychuk\textsuperscript{1,*}, Stefan Feuerriegel\textsuperscript{1}, Mihaela van der Schaar\textsuperscript{2}\\
    \textsuperscript{1}LMU Munich \& Munich Center for Machine Learning (MCML), Germany\\ \textsuperscript{2}University of Cambridge \& Alan Turing Institute, United Kingdom\\
    \textsuperscript{*}Correspondence: \texttt{melnychuk@lmu.de} \\
}

\begin{document}

\maketitle

\begin{abstract}
Estimating causal quantities from observational data is crucial for understanding the safety and effectiveness of medical treatments. However, to make reliable inferences, medical practitioners require not only estimating averaged causal quantities, such as the conditional average treatment effect, but also understanding the randomness of the treatment effect as a random variable. This randomness is referred to as \emph{aleatoric uncertainty} and is necessary for understanding the probability of benefit from treatment or quantiles of the treatment effect. Yet, the aleatoric uncertainty of the treatment effect has received surprisingly little attention in the causal machine learning community. To fill this gap, we aim to quantify the aleatoric uncertainty of the treatment effect at the covariate-conditional level, namely, the conditional distribution of the treatment effect (CDTE). Unlike average causal quantities, the CDTE is \emph{not} point identifiable without strong additional assumptions. As a remedy, we employ partial identification to obtain sharp bounds on the CDTE and thereby quantify the aleatoric uncertainty of the treatment effect. We then develop a novel, orthogonal learner for the bounds on the CDTE, which we call AU-learner. We further show that our AU-learner has several strengths in that it satisfies Neyman-orthogonality and, thus, quasi-oracle efficiency. Finally, we propose a fully-parametric deep learning instantiation of our AU-learner.    
\end{abstract}

\vspace{-0.1cm}
\section{Introduction} 

Estimating causal quantities from observational data is crucial for decision-making in medicine \cite{bica2021real,buell2024individualized,curth2024using,feuerriegel2024causal,kuzmanovic2024causal}. For example, medical practitioners are interested in estimating the effect of chemotherapy vs. immunotherapy on patient survival from electronic health records to understand the best treatment strategies in cancer care. Here, common estimation targets are \emph{averaged} causal quantities such as the average treatment effect (ATE) and the conditional average treatment effect (CATE), yet averaged causal quantities do not allow for understanding the variability of the treatment effect.  

What is needed for the reliability of causal quantities in medicine? To obtain \emph{reliable} causal quantities,  one often needs to ``move beyond the mean'' \cite{heckman1997making,kneib2023rage} and consider the inherent randomness in the treatment effect as a random variable. This randomness is referred to as \emph{aleatoric uncertainty} \cite{kallus2022what,ruiz2022non, cui2024policy}. Quantifying the aleatoric uncertainty of the treatment effect is relevant in medical practice to understand the probability of benefit from treatment \cite{fan2010sharp,kallus2022what} and the quantiles and variance of the treatment effect \cite{fan2010sharp, aronow2014sharp, firpo2019partial,kaji2023assessing,cui2024policy}. As an example, averaged quantities such as the CATE would simply suggest a positive effect for some patients, while the probability of benefit from treatment can inform patients about the odds of being negatively affected by the treatment. Hence, aleatoric uncertainty of the treatment effect promises additional, fine-grained insights beyond simple averages.

Methods for quantifying the aleatoric uncertainty of the treatment effect have gained surprisingly little attention in the causal machine learning community. So far, machine learning for treatment effect estimation was primarily focused on estimating averaged causal quantities \cite{kunzel2019metalearners,kennedy2023towards,nie2021quasi,curth2021nonparametric,vansteelandt2023orthogonal,morzywolek2023general,shalit2017estimating,johansson2022generalization,zhang2020learning}. 
Some research aims to quantify the epistemic uncertainty in treatment effect estimation \cite{jesson2020identifying} or the total uncertainty (but without distinguishing the types of uncertainty) \cite{kivaranovic2020conformal,lee2020robust,alaa2024conformal}. Other works focused on the aleatoric uncertainty of the potential outcomes  \cite{firpo2007efficient,chernozhukov2013inference,kim2018causal,melnychuk2023normalizing,kennedy2023semiparametric}\footnote{In the Neyman-Rubin potential outcomes framework \cite{rubin1974estimating}, a potential outcome $Y[a]$ refers to the value an outcome variable would take for an individual under a specific treatment or intervention $a$. Each individual has multiple potential outcomes -- one for each possible treatment condition -- but only one of these outcomes is observed.} or on contrasts between distributions of potential outcomes (also known as distributional treatment effects) \cite{park2021conditional,chikahara2022feature,fawkes2024doubly,kallus2023robust,martinez2023efficient}.\footnote{Notably, (a)~the distributional treatment effects and (b)~the distribution of the treatment effect (our setting) are both different interpretationally and inferentially. That is, (a)~provide contrasts between distributions of potential outcomes and are point identifiable, and (b)~work on the distribution of the difference of potential outcomes and are only partially identifiable. See  Appendix~\ref{app:ext-rw} for further details.} However, to the best of our knowledge, there is no comprehensive meta-learning theory for the estimation of \emph{the aleatoric uncertainty in the treatment effect}.  

In this paper, we aim to quantify the aleatoric uncertainty of the treatment effect at the covariate-conditional level in the form of a \longCDTE (\CDTE). Knowing the \CDTE would automatically allow one to compute the above-mentioned quantities of aleatoric uncertainty, namely, the probability of benefit from treatment and the quantiles and variance of the treatment effect, at both population and covariate-conditional levels. 

\vspace{-0.2cm}
\subsection{Challenges}

Yet, the identification and estimation of the \CDTE in contrast to CATE come with three \textbf{challenges} as follows (see Fig.~\ref{fig:ident-est-challenges}): 

Challenge \circled{1} is that the \CDTE does \textbf{not} allow for \emph{point identifiable}, neither in the potential outcomes framework nor in randomized control trials due to the fundamental problem of causal inference as counterfactual outcomes can not be observed \cite{manski1997monotone,fan2010sharp}. We thus employ \emph{partial identification} \cite{ho2017partial} to obtain bounds on the \CDTE and thereby quantify the aleatoric uncertainty of the treatment effect. Specifically, we focus on Makarov bounds \cite{makarov1982estimates,williamson1990probabilistic,zhang2024bounds} that give sharp bounds for both the cumulative distribution function (CDF) and the quantiles of the \CDTE. 

\begin{figure}
    \centering
    \setlength{\fboxsep}{1pt}
    \vspace{-0.6cm}
    \includegraphics[width=\textwidth]{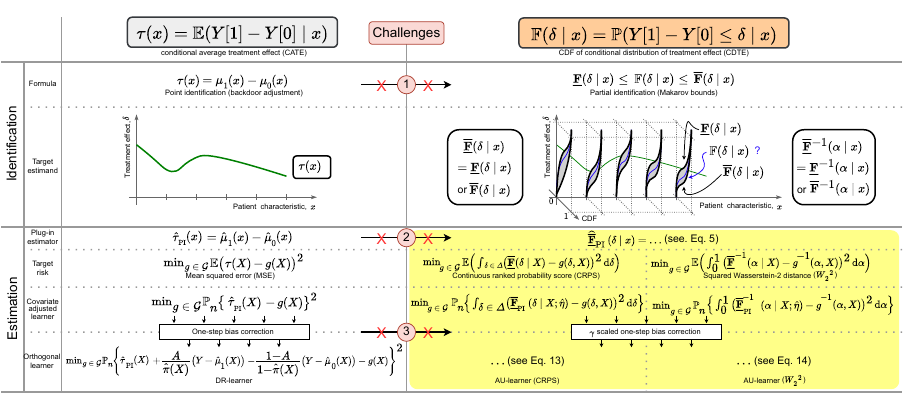}
    \vspace{-0.75cm}
    \caption{Identification and estimation of the \longCDTE (\CDTE) ($=$our setting) compared to the (well-studied) identification and estimation of the CATE. In this paper, we focus specifically on the CDF of the \CDTE, $\mathbb{P}(Y[1] - Y[0] \le \delta \mid x)$, shown in \colorbox{orange}{orange}. Our main contribution relates to the estimation, shown in \colorbox{yellow}{yellow}. However, moving from CATE identification and estimation to our setting comes with important challenges: \protect\circled{1}~CATE (shown in \textbf{\textcolor{darkgreen}{green}}) is point identifiable but the \CDTE is \emph{not} (shown in \textbf{\textcolor{darkblue}{blue}}); \protect\circled{2}~there is \emph{no} closed-form expression of the target estimand in terms of nuisance functions and, because of that, CATE learners cannot be directly adapted for estimation; and \protect\circled{3}~CATE is an unconstrained target estimand whereas Makarov bounds (shown in $\protect\Underline[1.2pt]{\Overline[1.2pt]{\text{\colorbox{darkgray}{gray}}}}$) are monotonous and contained in the interval $[0, 1]$.}
    \label{fig:ident-est-challenges}
    \vspace{-0.6cm}
\end{figure}

Challenge \circled{2} is that there is \textbf{no} closed-form expression of the target estimand in terms of nuisance functions. Because of this, existing CATE learners cannot be directly adapted to our task of estimating Makarov bounds. For example, there are \emph{no} orthogonal learners in the general setting, and existing approaches only use na{\"i}ve plug-in estimators/learners. 
Furthermore, even the derivation of the orthogonal loss is non-trivial as there is no efficient influence function at hand for the Makarov bounds. 

Challenge \circled{3} is that CATE is an unconstrained target estimand whereas Makarov bounds are \textbf{monotonous} and \textbf{contained} in the interval $[0, 1]$. Notably, any constraints of the target estimand could be violated by orthogonal learners \cite{vansteelandt2023orthogonal,van2024combining}. Therefore, an orthogonal learner for Makarov bounds needs to be carefully adapted, especially to perform well in low-sample settings. 

\vspace{-0.2cm}
\subsection{Our contributions}

\begin{wrapfigure}{r}{5.6cm}
    \vspace{-1.4cm}
    \hspace{-0.1cm}
    \resizebox{5.6cm}{!}{
    \begin{minipage}{5.6cm}
        \centering
        \includegraphics[width=\textwidth]{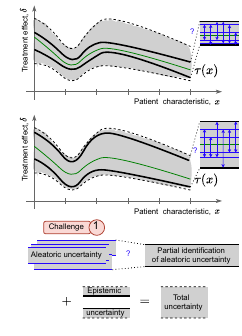}
        \vspace{-0.9cm}
        \caption{Total uncertainty of the treatment effect can have different sources. Both upper and lower plots have the same total uncertainty but vastly different aleatoric and epistemic components. Yet, aleatoric uncertainty is non-identifiable (see Challenge \protect\circled{1}).}
        \label{fig:total-uncertainty}
        \vspace{-1cm}
    \end{minipage}}
\end{wrapfigure}

In this paper, we develop a novel, orthogonal learner for estimating Makarov bounds which we call \longAUlearner, which allows to quantify the \textbf{a}leatoric \textbf{u}ncertainty of the treatment effect. Our \longAUlearner addresses all of the above-mentioned challenges \circled{1} -- \circled{3}. Further, our \longAUlearner has several useful theoretical properties, such as satisfying Neyman-orthogonality and, thus, quasi-oracle efficiency \cite{nie2021quasi}. Finally, we propose a flexible, fully-parametric deep learning instantiation of our \longAUlearner. For this, we make use of conditional normalizing flows and call our method \AUCNFs.

To summarize, our contributions are as follows: \footnote{Code is available at \url{https://github.com/Valentyn1997/AU-CNFs}.}
\begin{enumerate}[leftmargin=0.5cm,itemsep=-0.55mm]
    \item We derive a novel, orthogonal leaner called \longAUlearner to quantify the aleatoric uncertainty of the treatment effect. For this, we estimate Makarov bounds on the CDF/quantiles of the \longCDTE (\CDTE).
    \item We prove several favorable theoretical properties of our \longAUlearner, such as Neyman-orthogonality and, thus, quasi-oracle efficiency.
    \item We propose a flexible deep learning instantiation of our \longAUlearner based on conditional normalizing flows, which we call \AUCNFs, and demonstrate its effectiveness over several benchmarks. 
\end{enumerate}

\vspace{-0.1cm} 
\section{Related Work}  \label{sec:related-work}

In the following, we briefly summarize the existing works on uncertainty quantification in the potential outcomes framework; on the identification of the \CDTE; and on the estimation of Makarov bounds. For a more detailed overview of literature, we refer to Appendix~\ref{app:ext-rw}.

\textbf{Uncertainty quantification in the potential outcomes framework.} The (total) uncertainty of a predictive model in machine learning is generally split into (a)~epistemic and (b)~aleatoric uncertainty \cite{hullermeier2021aleatoric,gruber2023sources}.\footnote{Epistemic uncertainty \cite{hullermeier2021aleatoric} relates to the uncertainty of fitting a model on finite data and reduces to zero as data size grows. In contrast, aleatoric uncertainty originates from the inherent randomness of the outcome and is irreducible wrt. data size.} This split is important, as it informs a decision-maker about the source of uncertainty (see Fig.~ \ref{fig:total-uncertainty}), especially in the context of the potential outcomes framework. (a)~Epistemic uncertainty was studied for predictive models targeting at identifiable averaged causal quantities, such as conditional average potential outcomes (CAPOs) and CATE \cite{jesson2020identifying,jesson2021quantifying}. (b)~Aleatoric uncertainty, on the other hand, is \emph{only} identifiable for potential outcomes \cite{manski1997monotone}. Prominent methods focus on interventional (counterfactual) quantities such as: (i)~CDF/quantiles estimation \cite{firpo2007efficient,chernozhukov2013inference,balakrishnan2023conservative,ham2024doubly,ma2024diffpo}; (ii)~density estimation \cite{kim2018causal,muandet2021counterfactual,kennedy2023semiparametric,melnychuk2023normalizing,vanderschueren2023noflite,pham2024scalable,martinez2024counterfactual}; and (iii)~distributional distances (also known as distributional treatment effects) estimation \cite{park2021conditional,chikahara2022feature,fawkes2024doubly,kallus2023robust,martinez2023efficient}. Yet, our work differs substantially from the above, as we aim at inferring the aleatoric uncertainty of the treatment effect, which is only partially identifiable.

\textbf{Identification of the distribution of the treatment effect.} Point identification of the distribution of the treatment effect (or, equivalently, a joint distribution of potential outcomes) is only possible under additional assumptions on the data-generating mechanism. A common example is, \eg, invertibility of latent outcome noise \cite{alaa2017bayesian, alaa2018bayesian,balakrishnan2023conservative}. Other works have rather focused on partial identification \cite{ho2017partial}. For example,  \cite{manski1997monotone} proposed sharp bounds for the distribution of the treatment effect under a monotonicity assumption. Later, assumption-free sharp bounds were proposed for both the joint CDF of potential outcomes \cite{fan2010sharp} and for the variance of treatment effect \cite{aronow2014sharp}, both known as Fr{\'e}chet-Hoeffding bounds \cite{frechet1951tableaux,hoeffding1940masstabinvariante}. Finally, \cite{fan2009partial,fan2010sharp,firpo2019partial} proposed sharp bounds on the CDF/quantiles of the treatment effect without any additional assumptions, so-called \emph{Makarov bounds} \cite{makarov1982estimates,williamson1990probabilistic}. Makarov bounds were further generalized \cite{lu2018treatment,zhang2024bounds} and applied to other settings \cite{frandsen2021partial, fan2014identifying, fan2017partial,kaji2023assessing,cui2024policy} but different from ours.  

\textbf{Estimation of Makarov bounds.} Table~\ref{tab:methods-comparison} provides a comparison of key methods for estimating Makarov bounds, at both covariate-conditional and population levels. Existing methods build mainly upon plug-in (single-stage) estimators/learners. Examples are methods tailored for randomized controlled trials \cite{fan2010sharp, ruiz2022non} and for potential outcomes framework \cite{ruiz2022non,lee2024partial,cui2024policy}. Crucially, these methods are \emph{not} orthogonal and, thus, are sensitive to the misspecification of the nuisance functions. Nevertheless, we include the latter methods \cite{lee2024partial,cui2024policy} as baselines for our experiments as they use a highly flexible CDF estimator based on kernel density estimators. Some works also developed efficient estimators for Makarov bounds at the population level (analogous to the two-stage Neyman-orthogonal learners at the covariate-conditional level) but only in highly restricted settings. In particular, \cite{kallus2022what} is restricted to binary outcomes, \cite{semenova2023adaptive} assumed a known propensity score, and \cite{ji2023model} made special optimization assumptions.\footnote{The work in \cite{ji2023model} assumes the possibility of finding feasible Kantorovich dual functions to a target functional (\eg, CDF of the treatment effect). By doing so, the authors are able to infer valid partial identification bounds on very general functionals; yet, the sharpness can not be practically guaranteed.} In addition, all three works \cite{kallus2022what,semenova2023adaptive,ji2023model} suggest fixing a value of $\delta/\alpha$, which the CDF/quantiles of the treatment effect are evaluated at; when our work suggests targeting at several values of $\delta/\alpha$ at once. Therefore, the previous methods are \emph{not} applicable to our general setting of estimating covariate-conditional level Makarov bounds. 

\textbf{Research gap.} To the best of our knowledge, we are the first to propose an orthogonal learner for estimating Makarov bounds on the CDF/quantiles of \longCDTE.

\begin{table*}[t]
    \vspace{-0.6cm}
    \caption{Overview of methods for estimating Makarov bounds on the CDF/quantiles of the \CDTE.}
    \label{tab:methods-comparison}
    \begin{center}
        \vspace{-0.1cm}
        \scalebox{0.9}{
            \scriptsize
            \begin{tabular}{p{2.4cm}P{1.3cm}p{2.6cm}p{1.3cm}P{1cm}p{4.1cm}}
                \toprule
                \multirow{2}{*}{Work} & Covariate-conditional & \multirow{2}{*}{Limitations} & Estimator/ learner & \multirow{2}{*}{Orthogonal} &  \multirow{2}{*}{Instantiation}\\
                \midrule
                \citeauthor*{fan2010sharp}~\cite{fan2010sharp} & (\cmark) & \textcolor{ForestGreen}{---} & Plug-in & \xmark & Empirical CDF\\
                \citeauthor*{ruiz2022non}~\cite{ruiz2022non} & \cmark & \textcolor{ForestGreen}{---} & Plug-in & \xmark & Conditional mean with hold-out residuals \\
                \citeauthor*{lee2024partial}~\cite{lee2024partial}, \citeauthor*{cui2024policy}~\cite{cui2024policy} & \cmark & \textcolor{ForestGreen}{---} & Plug-in & \xmark & Kernel density estimation \\
                \citeauthor*{kallus2022what}~\cite{kallus2022what} & \xmark &  \textcolor{BrickRed}{Binary outcome} & A-IPTW/NO & \cmark & Random forest \& causal forest \\ 
                \citeauthor*{ji2023model}~\cite{ji2023model} & \xmark &  \textcolor{BrickRed}{Optimization assumptions} & 
                A-IPTW/NO  & \cmark  & Homoskedastic Gaussian linear model \\
                \citeauthor*{semenova2023adaptive}~\cite{semenova2023adaptive} & \xmark & \textcolor{BrickRed}{Propensity score is known} & IPTW & \cmark  & --- \\ 
                \midrule \textbf{Our paper} (\longAUlearner) & \cmark & \textcolor{ForestGreen}{---} & A-IPTW/NO & \cmark & Conditional normalizing flows (\AUCNFs) \\
                \bottomrule
                \multicolumn{6}{p{10cm}}{(A-)IPTW: (augmented) inverse propensity of treatment weighted; NO: Neyman-orthogonal}
            \end{tabular}}
    \end{center}
    \vspace{-0.7cm}
\end{table*}

\vspace{-0.1cm}
\section{Identification of Distribution of Treatment Effect} \label{sec:ident}

\textbf{Notation.} Let capital letters $X, A, Y, \Delta$ denote random variables and small letters $x, a, y, \delta$ their realizations from domains $\mathcal{X}, \mathcal{A}, \mathcal{Y}, \mathit{\Delta}$. Let $\mathbb{P}(Z)$ denote a distribution of some random variable $Z$, and let $\mathbb{P}(Z = z)$ be the corresponding density or probability mass function. Furthermore, $\pi(x) = \mathbb{P}(A = 1 \mid X = x)$ is propensity score, $\mu_a(x) = \mathbb{E}(Y \mid X = x, A = a)$ are conditional expectations, and ${\mathbb{F}}_a(y \mid x) = {\mathbb{P}}(Y \le y \mid x, a)$ is a conditional outcome CDF. For other conditional quantities or distributions, we use short forms whenever possible; \eg, $\mathbb{E}(Y \mid x) = \mathbb{E}(Y \mid X = x)$. Further, $\mathbb{P}_n\{f(Z)\} = \frac{1}{n}\sum_{i=1}^n f(z_i)$ is a sample average of a random $f(Z)$, where $n$ is the sample size. We denote linear rectifier functions as $[x]_+ = \max(x, 0)$ and $[x]_- = \min(x, 0)$, and sup/inf convolutions of two functions \cite{stromberg1994study} $f_1(\cdot \mid x), f_2(\cdot \mid x)$ as $(f_1 \, \overline{*} \, f_2)_{\mathcal{Y}}(\delta \mid x) = \sup_{y \in \mathcal{Y}} \{f_1(y \mid x) - f_2(y - \delta \mid x)\}$ and $(f_1 \, \underline{*} \, f_2)_{\mathcal{Y}}(\delta \mid x) = \inf_{y \in \mathcal{Y}} \{f_1(y \mid x) - f_2(y - \delta \mid x)\}$. 

\textbf{Problem setup.} We consider the standard setting of the Neyman–Rubin potential outcomes framework \cite{rubin1974estimating}. That is, we have an observational dataset $\mathcal{D}$ with a binary treatment $A \in \mathcal{A} = \{0, 1\}$, potentially high-dimensional covariates $X \in \mathcal{X} \subseteq \mathbb{R}^{d_x}$ and a continuous outcome $Y \in \mathcal{Y} \subseteq \mathbb{R}$. For instance, a typical scenario is in cancer therapy, where the outcome is tumor growth, the treatment is whether chemotherapy is given, and the covariates include patient details like age and sex. We define a joint random variable $Z = (X, A, Y)$. $\mathcal{D} = \{x_i, a_i, y_i\}_{i=1}^n$ is sampled i.i.d. from the observational distribution $\mathbb{P}(Z) = \mathbb{P}(X, Y, A)$, where $n$ is the sample size. The potential outcomes framework then makes three (causal) assumptions, \ie, (1)~\emph{consistency}: if $A = a$, then $Y[a] = Y$; (2)~\emph{overlap}: $\mathbb{P}(0 \le \pi(X) \le 1) = 1$; and (3)~\emph{exchangeability}: $A \ind (Y[0], Y[1]) \mid X$.   

\textbf{Treatment effect distribution.} In this paper, we refer to the \emph{treatment effect} $\Delta = Y[1] - Y[0]$ as a random variable.\footnote{Also known as an individual treatment effect. The term `individual` should not be confused with the term `covariate-conditional', which refers to the causal quantity and not the random variable itself.} The CATE is given by $\tau(x) = \mathbb{E}(\Delta \mid x)$, which is identifiable as $\mu_1(x) - \mu_0(x)$ under the causal assumptions (1)--(3). We are interested in identifying a \emph{\longCDTE (\CDTE)}, specifically, its CDF or quantiles:
\begingroup\makeatletter\def\f@size{9}\check@mathfonts
\begin{align}
     \mathbb{F}(\delta \mid x) &= \mathbb{P}(\Delta \le \delta \mid x) = \mathbb{P}(Y[1] - Y[0] \le \delta \mid x), \quad \delta \in \mathit{\Delta} \\
     \mathbb{F}^{-1}(\alpha \mid x) &= \inf \{\delta \in \mathit{\Delta} \mid \alpha \le  \mathbb{F}(\delta \mid x)\}, \quad \alpha \in [0, 1].
\end{align}
\endgroup
The CDF and the quantiles of the \CDTE are point \emph{non}-identifiable due to the fundamental problem of causal inference, \ie, that the counterfactual outcome, $Y[1 - A]$, is never observed. This is illustrated in Fig.~\ref{fig:non-id-example}, where both conditional potential outcome distributions are identifiable as $\mathbb{P}(Y[a] \mid x) = \mathbb{P}(Y \mid x,a)$; but a conditional joint distribution, $\mathbb{P}(Y[0], Y[1] \mid x)$, and the \CDTE, $\mathbb{P}(\Delta \mid x)$, are \underline{not}.

\textbf{Partial identification of the \CDTE.} \citeauthor*{fan2010sharp}~\cite{fan2010sharp} proposed pointwise sharp bounds on the CDF and the quantiles of the \CDTE, so-called \emph{Makarov bounds} \cite{makarov1982estimates,williamson1990probabilistic,zhang2024bounds}. Given that the outcome $Y$ is continuous, the Makarov bounds for the CDF of the \CDTE are given by linearly rectified sup/inf convolutions \cite{stromberg1994study} of conditional CDFs of potential outcomes:
\begingroup\makeatletter\def\f@size{9}\check@mathfonts
\begin{equation} \label{eq:mb-cdf}
    \begin{gathered}
    \underline{\mathbf{F}}(\delta \mid x) \le \mathbb{F}(\delta \mid x) \le \overline{\mathbf{F}}(\delta \mid x), \\ 
        \underline{\mathbf{F}}(\delta \mid x) = \left[(\mathbb{F}_1 \, \overline{*} \, \mathbb{F}_0)_{\mathcal{Y}}(\delta \mid x) \right]_+ \quad \text{and} \quad  \overline{\mathbf{F}}(\delta \mid x) = 1 + \left[(\mathbb{F}_1 \, \underline{*} \, \mathbb{F}_0)_\mathcal{Y}(\delta \mid x) \right]_-,
    \end{gathered}
\end{equation}
\endgroup
where $\delta \in \mathit{\Delta}$, and $\mathbb{F}_a(y \mid x)$ is the CDF of $\mathbb{P}(Y[a] \mid x)$.  Similarly, Makarov bounds can be formulated for the quantiles of the \CDTE:
\begingroup\makeatletter\def\f@size{8}\check@mathfonts
\begin{equation} \label{eq:mb-icdf}
    \begin{gathered}
     \overline{\mathbf{F}}{}^{-1}(\alpha \mid x) \le \mathbb{F}^{-1}(\alpha \mid x) \le \underline{\mathbf{F}}{}^{-1}(\alpha \mid x), \\ 
        \underline{\mathbf{F}}{}^{-1}(\alpha \mid x) = \begin{cases}
            (\mathbb{F}_1^{-1} \, \underline{*} \, \mathbb{F}_0^{-1})_{[\alpha, 1]}(\alpha \mid x), & \text{if } \alpha \neq 0,\\
            \mathbb{F}_1^{-1}(0 \mid x) - \mathbb{F}_0^{-1}(1 \mid x), & \text{if } \alpha = 0, 
        \end{cases} \quad \text{and} \quad  \overline{\mathbf{F}}{}^{-1}(\alpha \mid x) = \begin{cases}
            (\mathbb{F}_1^{-1} \, \overline{*} \, \mathbb{F}_0^{-1})_{[0, \alpha]}(\alpha - 1 \mid x), & \text{if } \alpha \neq 1,\\
            \mathbb{F}_1^{-1}(1 \mid x) - \mathbb{F}_0^{-1}(0 \mid x), & \text{if } \alpha = 1, 
        \end{cases}
    \end{gathered}
\end{equation}
\endgroup
where $\alpha \in [0, 1]$, and $\mathbb{F}_a^{-1}(u \mid x)$ are the quantiles of $\mathbb{P}(Y[a] \mid x)$. Then, under the causal assumptions~(1)--(3), conditional distributions of potential outcomes coincide with observed ones, \ie, $\mathbb{P}(Y[a]) = \mathbb{P}(Y \mid x, a)$. Notably, Makarov bounds on the CDFs are CDFs themselves, but these CDFs do not correspond to the solution of the partial identification task (which implies pointwise sharpness). 
We refer to Appendix~\ref{app:makarov-bounds} for more illustrations about the inference of the Makarov bounds, the explanation of pointwise sharpness, and Makarov bounds for categorical/mixed-type outcomes.  

\begin{figure}
    \vspace{-0.7cm}
    \centering
    \setlength{\fboxsep}{1pt}
    \includegraphics[width=\textwidth]{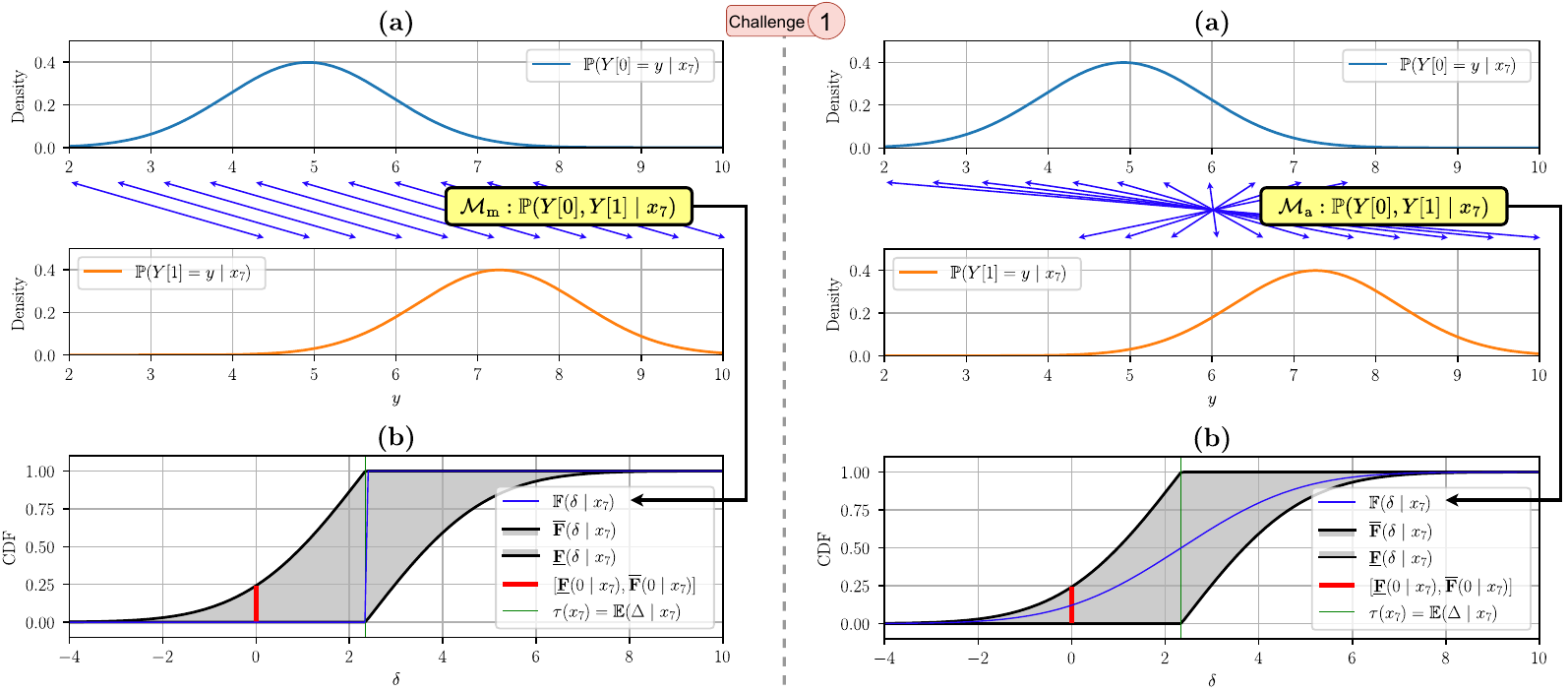}
    \vspace{-0.75cm}
    \caption{An example showing point non-identifiability of the distribution of the treatment effect based on the $i=7$-th instance of the semi-synthetic IHDP100 dataset \cite{hill2011bayesian}. Shown are two data-generation models, indistinguishable in potential outcomes framework or RCTs, \ie,  a monotone, $\mathcal{M}_{\text{m}}$, and an antitone, $\mathcal{M}_{\text{a}}$. For both models we also plot \textbf{(a)} conditional densities of potential outcomes, $\mathbb{P}(Y[a] = y \mid x_7)$ and conditional joint laws of potential outcomes, $\mathbb{P}(Y[0], Y[1] \mid x_7)$; and \textbf{(b)} corresponding CDFs of the \CDTE (shown in \textbf{\textcolor{darkblue}{blue}}), $\mathbb{F}(\delta \mid x_7) = \mathbb{P}(Y[1] - Y[0] \le \delta \mid x_7)$, together with Makarov bounds (shown in $\protect\Underline[1.2pt]{\Overline[1.2pt]{\text{\colorbox{darkgray}{gray}}}}$) and point identifiable CATE (shown in \textbf{\textcolor{darkgreen}{green}}), $\tau(x_7) = \mathbb{E}(\Delta \mid x_7) \approx 2.342$. Non-identifiability of the \CDTE is easy to see: Both data-generation models have the same conditional distributions of potential outcomes but different conditional joint laws and, thus, different CDTEs.  The latter figures, \textbf{(b)}, also demonstrate the bounds on the probability of benefit from treatment (a special case of Makarov bounds), $\mathbb{P}(Y[1] - Y[0] \le 0 \mid x_7) \in [0, 0.242]$. Hence, Makarov bounds are informative almost everywhere (except $\delta = \tau(x_7)$).}
    \label{fig:non-id-example}
    \vspace{-0.4cm}
\end{figure}

\vspace{-0.1cm}
\section{An AU-learner for estimating Makarov bounds} \label{sec:meta-learners}
In the following, we develop a theory of orthogonal learning for Makarov bounds, which then gives rise to our \longAUlearner. For this, we first review the plug-in learner and its shortcomings. Motivated by this, we then derive two-stage learners. Here, we first present a novel CA-learner in an intermediate step and finally our \longAUlearner. Note that both are novel but we frame our contributions around the \longAUlearner because of favorable theoretical properties. For notation, we use over- and underlines as in, \eg, $\overline{\underline{\mathbf{F}}}(\delta \mid x)$ to refer to the upper and/or lower bound.

\vspace{-0.2cm}
\subsection{Single-stage learners for Makarov bounds} \label{sec:single-stage-learners}
\textbf{Plug-in learner.} A na{\"i}ve way to construct estimators for Makarov bounds \cite{fan2010sharp,ruiz2022non,lee2024partial,cui2024policy} is to estimate conditional outcome CDFs, $\widehat{\mathbb{F}}_a(y \mid x)$, and plug them into Eq.~\eqref{eq:mb-cdf}:
\begingroup\makeatletter\def\f@size{9}\check@mathfonts
\begin{equation} \label{eq:pi}
    \widehat{\underline{\mathbf{F}}}_{\text{PI}}(\delta \mid x) = \big[(\widehat{\mathbb{F}}_1 \, \overline{*} \, \widehat{\mathbb{F}}_0)_{\mathcal{Y}}(\delta \mid x) \big]_+ \quad \text{and} \quad  \widehat{\overline{\mathbf{F}}}_{\text{PI}}(\delta \mid x) = 1 + \big[(\widehat{\mathbb{F}}_1 \, \underline{*} \, \widehat{\mathbb{F}}_0)_\mathcal{Y}(\delta \mid x) \big]_-.
\end{equation}
\endgroup
The plug-in learner for the bounds of the quantiles, $\widehat{\underline{\overline{\mathbf{F}}}}_{\text{PI}}{\!\!\!\!}^{-1}(\alpha \mid x)$, can be obtained in a similar way but where one uses Eq.~\eqref{eq:mb-icdf} and relies on the estimators of either the conditional outcome quantiles or inverse of the estimated conditional outcome CDFs, $\widehat{\mathbb{F}}_a^{-1}(u \mid x)$.

\textbf{Shortcomings.} The plug-in learner suffers from two important shortcomings \cite{morzywolek2023general}. \textbf{(a)}~The plug-in learner does \emph{not} account for the selection bias, meaning that ${\mathbb{F}}_1$ is estimated better for the treated population and ${\mathbb{F}}_0$ for the untreated. Hence, it could be necessary to \emph{re-weight the loss} wrt. to the propensity score. A remedy is to employ an {inverse propensity of treatment weighted (IPTW) learner} for both ${\mathbb{F}}_0$ and ${\mathbb{F}}_1$ \cite{balakrishnan2023conservative,ham2024doubly} (see Appendix~\ref{app:iptw-cdfs}). \textbf{(b)}~The plug-in learner does \emph{not} target the Makarov bounds directly but rather the conditional outcome distributions. Therefore, it is \emph{unclear} how to incorporate an inductive bias that \emph{the Makarov bounds are less heterogeneous than either of the conditional outcome CDFs} (\ie, the Makarov bounds can depend on a subset of covariates $X$). The second shortcoming thus motivates our derivation of two-stage learners. 

\vspace{-0.2cm}
\subsection{Two-stage learners for Makarov bounds} \label{sec:two-stage-learners}

In order to address the above shortcomings of the plug-in learners, a two-stage learning theory was proposed \cite{chernozhukov2018double,foster2023orthogonal,morzywolek2023general,van2023adaptive}. Yet, the two-staged learning theory is primarily built for simple target estimands (\eg, CATE) and therefore requires \emph{non-trivial adaptations} by us to extend to Makarov bounds, which we do in the following.

\textbf{Working model \& target risk.} Our two-stage learners seek to find the best approximation of the ground-truth Makarov bounds, functional target estimands, by a (parametric) \emph{working model} $\mathcal{G}$. Formally, the working model is given $\mathcal{G} = \{g(\delta, x) \mid g: \mathit{\Delta} \times \mathcal{X} \to [0, 1]; \,  g(\cdot, x) \text{is non-decreasing}\}$. The best approximation $g_*$ is then obtained by minimizing a (population) \emph{target risk} via $g_* = \argmin_{g \in \mathcal{G}} \mathcal{L}(g)$.\footnote{By postulating a restricted working model class, 
$\mathcal{G}$, we might compromise on the sharpness and end up having looser bounds. Yet, this looseness bias is only relevant in the infinite data regime. In the finite-sample regime, the feasibility of the low-error estimation is a much more important problem.} In our setting, $\mathcal{L}$ is chosen as some distributional distance between the target estimands and the working model. Specifically, we use continuous ranked probability score (CRPS) \cite{gneiting2007strictly,su2015distances} as a target risk for learning Makarov bounds on the CDF via
\begingroup\makeatletter\def\f@size{9}\check@mathfonts
\begin{equation} \label{eq:risk-crps}
     \overline{\underline{\mathcal{L}}}_\text{CRPS}(g) = \mathbb{E} \left(\int_{\mathit{\Delta}}\left(\underline{\overline{\mathbf{F}}}(\delta \mid X) - g(\delta, X) \right)^2 \diff{\delta} \right),
\end{equation}
\endgroup
or squared Wasserstein-2 distance as a target risk for learning Makarov bounds on the quantiles via
\begingroup\makeatletter\def\f@size{9}\check@mathfonts
\begin{equation} \label{eq:risk-w22}
     \overline{\underline{\mathcal{L}}}_{W^2_2}(g^{-1}) =  \mathbb{E} \left(\int_{0}^1\left(\underline{\overline{\mathbf{F}}}{}^{-1}(\alpha \mid X) - g^{-1}(\alpha, X) \right)^2 \diff{\alpha} \right).
\end{equation}
\endgroup
The target risks in Eq.~\eqref{eq:risk-crps} and \eqref{eq:risk-w22} cannot be directly minimized as we do not observe the ground-truth Makarov bounds. Yet, due to the identifiability results in Sec.~\ref{sec:ident}, the Makarov bounds depend on the nuisance functions, ${\mathbb{F}}_a(y \mid x)$,  which can be estimated from the observational data. We perform that in the following.

\begin{wrapfigure}{r}{5.25cm}
    \vspace{-1cm}
    \hspace{-0.1cm}
    \resizebox{5.25cm}{!}{
    \begin{minipage}{5.25cm}
        \centering
        \includegraphics[width=\textwidth]{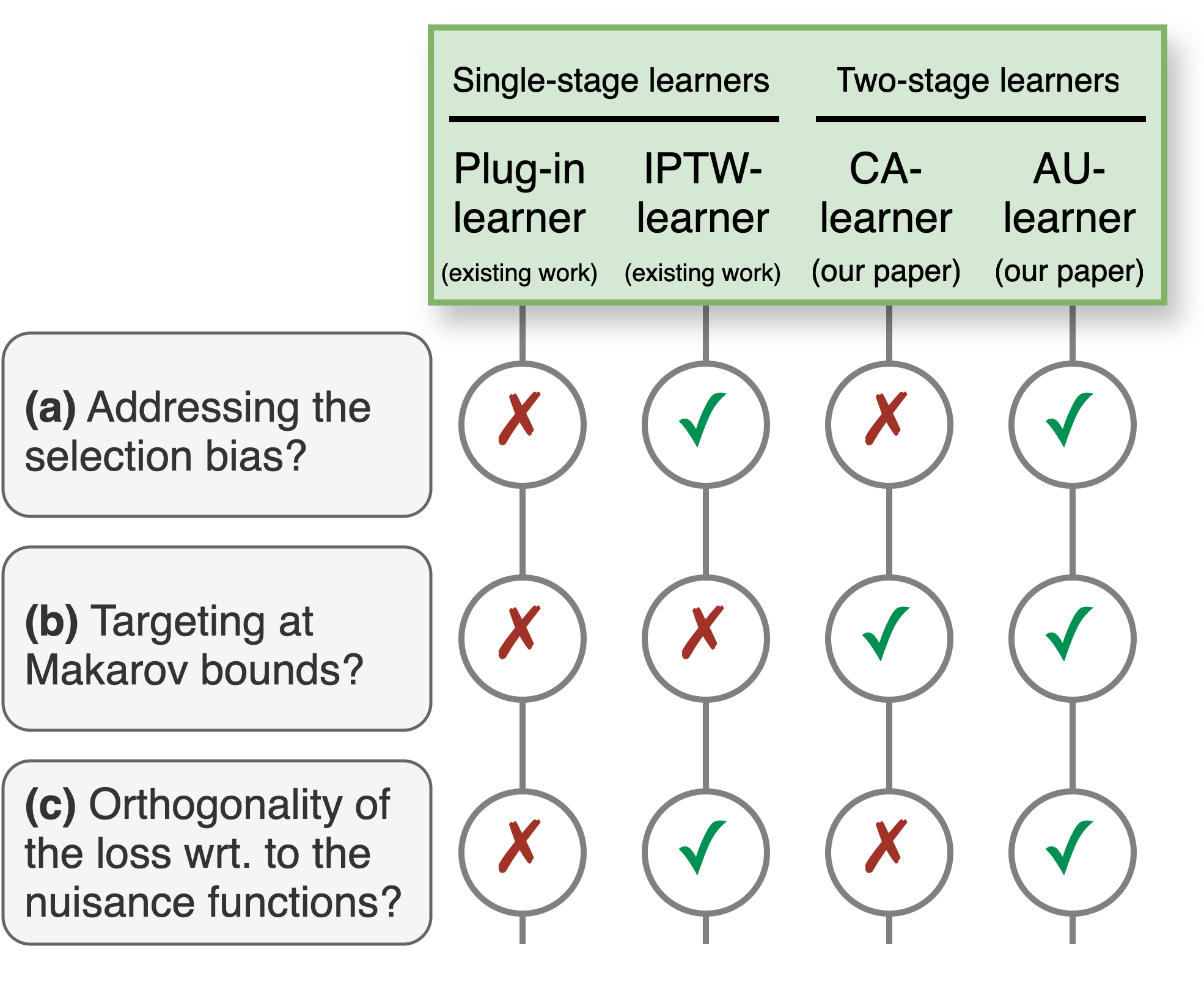}
        \vspace{-0.7cm}
        \caption{Comparison of learners for estimating Makarov bounds.}
        \label{fig:learners}
        \vspace{-0.4cm}
    \end{minipage}}
\end{wrapfigure}

From now on, we denote the ground-truth nuisance functions as $\eta$ and their estimates as $\hat{\eta}$. Also, we make the dependence on the target risks of the nuisance functions explicit; that is, we write the target risk as $\mathcal{L}(g, \eta)$ and the target estimands as $\underline{\overline{\mathbf{F}}}(\delta \mid X; \eta)$ and $\underline{\overline{\mathbf{F}}}{}^{-1}(\alpha \mid X; \eta)$.

\textbf{Covariate-adjusted learner.} A straightforward way to estimate and then minimize the target risk is to plug-in the estimates of conditional outcome CDFs, $\widehat{\mathbb{F}}_a(y \mid x)$, into Eq.~\eqref{eq:risk-crps} and \eqref{eq:risk-w22}, respectively. This yields the so-called \emph{covariate-adjusted (CA) learner}, which aims at minimizing the following losses (empirical risks):
\begingroup\makeatletter\def\f@size{9}\check@mathfonts
\begin{align}
    & \widehat{\overline{\underline{\mathcal{L}}}}_\text{PI, CRPS}(g, \hat{\eta}=(\widehat{\mathbb{F}}_0, \widehat{\mathbb{F}}_1)) = \mathbb{P}_n\Big\{\int_{\mathit{\Delta}}\left(\underline{\overline{\mathbf{F}}}_{\text{PI}}(\delta \mid X; \hat{\eta}) - g(\delta, X) \right)^2 \diff{\delta}  \Big\}, \label{eq:ca-crps} \\
    & \widehat{\overline{\underline{\mathcal{L}}}}_{\text{PI}, W^2_2}(g^{-1}, \hat{\eta} =(\widehat{\mathbb{F}}_0^{-1}, \widehat{\mathbb{F}}_1^{-1})) = \mathbb{P}_n\Big\{ \int_{0}^1\left(\underline{\overline{\mathbf{F}}}_{\text{PI}}{\!\!\!\!}^{-1}(\alpha \mid X; \hat{\eta}) - g^{-1}(\alpha, X)  \right)^2 \diff{\alpha} \Big\}, \label{eq:ca-w22}
\end{align}
\endgroup
where we call both $\underline{\overline{\mathbf{F}}}_{\text{PI}}(\delta \mid x; \hat{\eta}) = \widehat{\overline{\underline{\mathbf{F}}}}_{\text{PI}}(\delta \mid x)$ and $\underline{\overline{\mathbf{F}}}_{\text{PI}}{\!\!\!\!}^{-1}(\alpha \mid x; \hat{\eta}) = \widehat{\underline{\overline{\mathbf{F}}}}_{\text{PI}}{\!\!\!\!}^{-1}(\alpha \mid x)$ \emph{pseudo-CDFs} and \emph{pseudo-quantiles}, respectively, and where both can be obtained from Eq.~\eqref{eq:pi}. 

The CA-learner addresses the shortcoming~\textbf{(b)} of the plug-in learner from above in that loss minimization in the equations above targets directly at Makarov bounds. However, the shortcoming~\textbf{(a)} of the selection bias still persists. Furthermore, a new shortcoming~\textbf{(c)} now emerges: The losses can be highly sensitive to badly estimated nuisance functions so that $\widehat{\mathcal{L}}_{\text{PI}}(g, \eta)$ and $\widehat{\mathcal{L}}_{\text{PI}}(g, \hat{\eta})$ differ significantly. Next, we develop an orthogonal learner that addresses all of the shortcomings.

\textbf{One-step bias correction.} In order to address the before-mentioned shortcomings of the CA-learner, we employ the concept of (Neyman-)orthogonal losses \cite{chernozhukov2018double,foster2023orthogonal}. Informally, orthogonal losses are first-order insensitive to the misspecification of the nuisance functions, which introduces many favorable properties such as quasi-oracle efficiency \cite{nie2021quasi} and double robustness \cite{ying2024geometric}. The CA-learner losses in Eq.~\eqref{eq:ca-crps} and \eqref{eq:ca-w22} can be made orthogonal by performing a \emph{one-step bias correction} \cite{bickel1993efficient,kennedy2022semiparametric}. The one-step bias correction requires the knowledge of an \emph{efficient influence function}, which has not yet been derived for Makarov bounds ($\rightarrow$\,Challenge \circled{2}). Hence, the following theorem presents one of our main theoretical results.

\begin{restatable}[Efficient influence function for Makarov bounds]{thrm}{eif} \label{thrm:eif}
    Let $\mathbb{P}$ denotes $\mathbb{P}(Z) = \mathbb{P}(X, A, Y)$, and let $\textcolor{BrickRed}{y^{{\overline{*}}}_{\mathcal{Y}}}(\cdot \mid x)$ and $\textcolor{BrickRed}{u^{{\underline{*}}}_{[\alpha, 1]}}(\cdot \mid x)$  be argmax/argmin sets of the convolutions $(\textcolor{BrickRed}{\mathbb{F}_1} \, {\overline{*}} \, \textcolor{BrickRed}{\mathbb{F}_0})_{\mathcal{Y}}(\cdot \mid x)$ and $(\textcolor{BrickRed}{\mathbb{F}_1^{-1}} \, \underline{{*}} \, \textcolor{BrickRed}{\mathbb{F}_0^{-1}})_{[\alpha, 1]}(\cdot - 0 \mid x)$, respectively. Then, under mild conditions on the conditional outcome distributions and for almost all values of $\delta \in \mathit{\Delta}$ and for all values of $\alpha \in (0, 1)$ (see Appendix~\ref{app:proofs}), average Makarov bounds are pathwise differentiable. Further, the corresponding efficient influence functions, $\phi$, are as follows:
    \scriptsize
    \begin{align}
        & \phi(\mathbb{E}\big(\overline{\underline{\mathbf{F}}}(\delta \mid X)\big); \textcolor{BrickRed}{\mathbb{P}})  = {\overline{\underline{{C}}}}(\delta, Z; \textcolor{BrickRed}{\eta}) + \overline{\underline{\mathbf{F}}}(\delta \mid X; \textcolor{BrickRed}{\eta}) - \textcolor{BrickRed}{\mathbb{E}}\big(\overline{\underline{\mathbf{F}}}(\delta \mid X; \textcolor{BrickRed}{\eta})\big) \\  & \phi(\mathbb{E}\big(\overline{\underline{\mathbf{F}}}{}^{-1}(\alpha \mid X)\big); \textcolor{BrickRed}{\mathbb{P}}) = {\overline{\underline{{C}}}{}^{-1}}(\alpha, Z; \textcolor{BrickRed}{\eta}) + \overline{\underline{\mathbf{F}}}{}^{-1}(\alpha \mid X; \textcolor{BrickRed}{\eta}) - \textcolor{BrickRed}{\mathbb{E}}\big(\overline{\underline{\mathbf{F}}}{}^{-1}(\alpha \mid X; \textcolor{BrickRed}{\eta})\big), \nonumber \\
        \begin{split}
             & {\underline{{C}}}(\delta, Z; \textcolor{BrickRed}{\eta}) = 
             I(X; \textcolor{BrickRed}{\eta}) \bigg[ \frac{A}{\textcolor{BrickRed}{\pi}(X)} \Big( \mathbbm{1}\{Y \le  y^{*}\} - \textcolor{BrickRed}{\mathbb{F}_1}\big(y^{*} \mid X\big)\Big) -  \frac{1-A}{1-\textcolor{BrickRed}{\pi}(X)} \Big( \mathbbm{1}\{Y \le  y^{*} - \delta \} - \textcolor{BrickRed}{\mathbb{F}_0}\big(y^{*} - \delta \mid X\big)\Big) \bigg] \label{eq:bias-corr-crps},
        \end{split} \\
        \begin{split}
            & {\underline{C}{}^{-1}}(\alpha, Z; \textcolor{BrickRed}{\eta}) = \frac{A}{\textcolor{BrickRed}{\pi}(X)} \Bigg(\frac{\mathbbm{1}\{Y \le  \textcolor{BrickRed}{\mathbb{F}_1^{-1}}\big( u^* \mid X \big) \} - u^*}{\textcolor{BrickRed}{\mathbb{P}}\big(Y = \textcolor{BrickRed}{\mathbb{F}_1^{-1}}( u^* \mid X ) \mid X, A = 1 \big)} \Bigg) \label{eq:bias-corr-w22}  
            - \frac{1-A}{1-\textcolor{BrickRed}{\pi}(X)} \Bigg( \frac{\mathbbm{1}\{Y \le  \textcolor{BrickRed}{\mathbb{F}_0^{-1}}\big( u^* - \alpha + 0\mid X \big) \} - \big(u^* - \alpha + 0\big)}{\textcolor{BrickRed}{\mathbb{P}}\big(Y = \textcolor{BrickRed}{\mathbb{F}_0^{-1}}( u^* - \alpha + 0 \mid X ) \mid X, A = 0\big)} \Bigg), 
        \end{split}
    \end{align}
    \normalsize
    where $I(X; \textcolor{BrickRed}{\eta}) = \mathbbm{1}\big\{(\textcolor{BrickRed}{{\mathbb{F}}_1} \, {\overline{*}} \, \textcolor{BrickRed}{\mathbb{F}_0})_{\mathcal{Y}}(\delta \mid X) > 0 \big\}$; $y^*$ is some value from the finite set $\textcolor{BrickRed}{y^{{\overline{*}}}_\mathcal{Y}}(\delta \mid X)$; $u^{*}$ is some value from the finite set $\textcolor{BrickRed}{u^{{\underline{*}}}_{[\alpha, 1]}}(\alpha \mid X)$; and ${\overline{{C}}}(\delta, Z; \textcolor{BrickRed}{\eta})$ and ${\overline{{C}}{}^{-1}}(\delta, Z; \textcolor{BrickRed}{\eta})$ can be then obtained by swapping the symbols \mbox{$\{ \overline{*}, >, y^{{\overline{*}}}_{\mathcal{Y}}, u^{{\underline{*}}}_{[\alpha, 1]}, -0,  +0\}$} to $\{ \underline{*}, <, y^{{\underline{*}}}_{\mathcal{Y}}, u^{{\overline{*}}}_{[0, \alpha]}, -1, +1\}$. 
\end{restatable}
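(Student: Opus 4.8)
The plan is to compute the efficient influence function by the standard pathwise-derivative / submodel approach, treating the average Makarov bound as a functional of $\mathbb{P}$ and differentiating it along a one-dimensional parametric submodel $\{\mathbb{P}_t\}$ with score $s(Z)$. First I would isolate the only genuinely novel ingredient, namely the dependence of $\underline{\mathbf{F}}(\delta\mid x;\eta) = [(\mathbb{F}_1\,\overline{*}\,\mathbb{F}_0)_{\mathcal{Y}}(\delta\mid x)]_+$ on $\eta=(\mathbb{F}_0,\mathbb{F}_1,\pi)$ through a \emph{supremum} over $y\in\mathcal{Y}$. The key step here is an envelope/Danskin-type argument: since $y^{\overline{*}}_{\mathcal{Y}}(\delta\mid x)$ is the (finite, by the ``mild conditions'') argmax set, the directional derivative of $(\mathbb{F}_1\,\overline{*}\,\mathbb{F}_0)_{\mathcal{Y}}(\delta\mid x)$ with respect to $t$ equals $\partial_t\big(\mathbb{F}_{1,t}(y^*\mid x) - \mathbb{F}_{0,t}(y^*-\delta\mid x)\big)$ evaluated at any maximizer $y^*$, with no contribution from the movement of $y^*$ itself. (For the conclusion ``for almost all $\delta$'' one needs that the maximizer is generically unique; the measure-zero exceptional set of $\delta$ is where it is not, and that is where pathwise differentiability can fail.) The indicator $I(X;\eta)=\mathbbm{1}\{(\mathbb{F}_1\,\overline{*}\,\mathbb{F}_0)_{\mathcal{Y}}(\delta\mid X)>0\}$ arises from differentiating $[\cdot]_+$, again away from the measure-zero set where the convolution equals exactly zero.

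Next I would handle the two ``easy'' pieces. The map $\mathbb{F}_a(\cdot\mid x)\mapsto$ (its value at a fixed argument) has the well-known CDF influence function $\frac{\mathbbm{1}\{A=a\}}{\mathbb{P}(A=a\mid x)}(\mathbbm{1}\{Y\le y\} - \mathbb{F}_a(y\mid x))$ — this is where the inverse-propensity weights $A/\pi(X)$ and $(1-A)/(1-\pi(X))$ enter; note $\pi$ itself does not contribute to the EIF because $\mathbb{E}[\mathbbm{1}\{Y\le y\}-\mathbb{F}_a(y\mid x)\mid X,A=a]=0$ (the nuisance is ``Neyman-orthogonal in $\pi$'' automatically, a Robins–Rotnitzky-type phenomenon). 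Chaining this with the envelope step, and substituting $y=y^*$ for the $\mathbb{F}_1$ term and $y=y^*-\delta$ for the $\mathbb{F}_0$ term, produces exactly $\underline{C}(\delta,Z;\eta)$ in Eq.~\eqref{eq:bias-corr-crps}. The remaining piece is the ``covariate-shift'' part of the functional $\mathbb{E}(\overline{\underline{\mathbf{F}}}(\delta\mid X;\eta))$: differentiating the outer expectation with respect to the marginal $\mathbb{P}(X)$ gives the centered term $\overline{\underline{\mathbf{F}}}(\delta\mid X;\eta) - \mathbb{E}(\overline{\underline{\mathbf{F}}}(\delta\mid X;\eta))$, which is the standard plug-in-minus-mean contribution. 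Summing the three contributions and checking $\mathbb{E}[\phi\cdot s]=\partial_t$-derivative for every score $s$ in the tangent space (which here is the full nonparametric tangent space, so the influence function is automatically \emph{efficient}) completes the CDF case. The upper bound $\overline{\mathbf{F}}$ follows verbatim with $\sup\to\inf$, $[\cdot]_+\to[\cdot]_-$, and the argmax set replaced by the argmin set — this is the symbol-swap rule in the statement.

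For the quantile bounds I would repeat the same three-step decomposition but now with the quantile functionals $\mathbb{F}_a^{-1}(u\mid x)$. Two changes: (i) the convolution $(\mathbb{F}_1^{-1}\,\underline{*}\,\mathbb{F}_0^{-1})_{[\alpha,1]}$ is an \emph{infimum} over $u$, so the same envelope argument applies with the argmin set $u^{\underline{*}}_{[\alpha,1]}(\alpha\mid x)$ and no boundary/rectifier term (there is no $[\cdot]_+$ here, hence no indicator $I$); (ii) the influence function of a quantile $\mathbb{F}_a^{-1}(u\mid x)$ is $-\frac{1}{f_a(\mathbb{F}_a^{-1}(u\mid x)\mid x)}\cdot\frac{\mathbbm{1}\{A=a\}}{\mathbb{P}(A=a\mid x)}(\mathbbm{1}\{Y\le \mathbb{F}_a^{-1}(u\mid x)\}-u)$, which is exactly why the density $\mathbb{P}(Y=\mathbb{F}_a^{-1}(u^*\mid x)\mid X,A=a)$ appears in the denominator of Eq.~\eqref{eq:bias-corr-w22}; the shift of argument by $-\alpha$ in the $\mathbb{F}_0^{-1}$ slot comes directly from the form of the convolution, and the ``$+0$'' bookkeeping tracks the one-sided (inf) convention for $\alpha\neq 0$. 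Again the outer expectation contributes the centering term, and the symbol swap $\{\underline{*},u^{\underline{*}}_{[\alpha,1]},-0,+0\}\to\{\overline{*},u^{\overline{*}}_{[0,\alpha]},-1,+1\}$ gives the upper bound $\overline{\mathbf{F}}{}^{-1}$.

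The main obstacle — and the place where the ``mild conditions'' and the ``almost all $\delta$ / all $\alpha\in(0,1)$'' qualifiers do real work — is rigorously justifying the envelope step when the argmax/argmin of the (inf-)convolution is not unique, and controlling the exceptional sets: one must show (a) that for all but measure-zero $\delta$ the maximizer in $(\mathbb{F}_1\,\overline{*}\,\mathbb{F}_0)_{\mathcal{Y}}(\delta\mid x)$ is attained at a single interior point where $\mathbb{F}_1,\mathbb{F}_0$ are differentiable, so that the directional derivative is linear in the score and the remainder is genuinely second-order; (b) that the rectifier $[\cdot]_+$ is differentiated away from its kink, i.e.\ the convolution is nonzero for a.e.\ $\delta$ on the relevant region; and (c) for the quantile case, that the outcome densities $f_a$ are positive and continuous at the relevant quantiles so the denominators in Eq.~\eqref{eq:bias-corr-w22} are well-defined (this is why $\alpha=0,1$ are excluded and handled separately by the closed forms in Eq.~\eqref{eq:mb-icdf}). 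Establishing pathwise differentiability then amounts to verifying the remainder $R(\mathbb{P}_t,\mathbb{P}) = \mathbb{E}(\overline{\underline{\mathbf{F}}}(\delta\mid X;\eta_t)) - \mathbb{E}(\overline{\underline{\mathbf{F}}}(\delta\mid X;\eta)) - \mathbb{E}[\phi\cdot s_t] = o(t)$, which follows from the envelope inequality (sup of differences vs.\ difference at a fixed point) once uniqueness of the maximizer is in hand; I expect the detailed bookkeeping of these exceptional $\delta$-sets and the interchange of $\partial_t$ with $\sup_y$ and with $\int$ to be the bulk of the Appendix~\ref{app:proofs} argument.
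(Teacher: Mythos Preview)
Your proposal is correct and follows essentially the same route as the paper: decompose the EIF into a covariate-marginal centering term plus a correction term, invoke an envelope/Danskin argument so that the movement of the maximizer $y^*$ (resp.\ $u^*$) contributes nothing by the interior first-order condition, and then plug in the standard conditional CDF (resp.\ quantile) influence functions to obtain the IPTW-type corrections $\underline{\overline{C}}$ and $\underline{\overline{C}}{}^{-1}$. One small correction: the ``almost all $\delta$'' qualifier in the paper comes solely from the rectifier kink (your item (b)), not from non-uniqueness of the maximizer; under the paper's assumptions the argmax set is merely \emph{finite}, and the first-order condition $\mathbb{P}(Y=y^*\mid x,A{=}1)=\mathbb{P}(Y=y^*-\delta\mid x,A{=}0)$ holds at every element of it, so uniqueness is not needed for the envelope step.
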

\vspace{-0.5cm}
\begin{proof}\renewcommand{\qedsymbol}{}
    See Appendix~\ref{app:proofs}.
\end{proof}
\vspace{-0.1cm}

In the above theorem, we use \textcolor{BrickRed}{red color} to show the nuisance functions of $\textcolor{BrickRed}{\mathbb{P}}$ that are influencing the target estimand, \ie, averaged Makarov bounds. Therein, we also provide a Corollary~\ref{cor:eif-risks}, where we derive efficient influence functions for the target risks from Eq.~\eqref{eq:risk-crps} and \eqref{eq:risk-w22}, namely $\phi(\mathcal{L}(g); \mathbb{P})$. 

\textbf{Note on infinite argmax/argmin sets.} When argmax/argmin sets of the sup/inf-convolutions are infinite, average Makarov bounds are pathwise non-differentiable, and, thus, one-step bias correction is not possible as statistical inference becomes non-regular \cite{hirano2012impossibility}. This result also holds for other causal quantities that contain sup/inf operators (\eg, for the policy value of the optimal treatment strategy \cite{robins2004optimal, luedtke2016statistical}). Although, there exist approaches to perform inference in the non-regular setting \cite{luedtke2016statistical}, we focus solely on the regular setting where pathwise differentiability holds (see Appendix~\ref{app:proofs} for a discussion on the generality of such a setting).

\textbf{Orthogonal leaner (\longAUlearner).} Given the derived efficient influence function for the target risks, we perform a $\gamma$-scaled one-step bias correction \cite{kennedy2022semiparametric} of the CA-learner losses, namely, $\widehat{\mathcal{L}}_{\text{PI}}(g, \hat{\eta}) + \gamma \mathbb{P}_n \big\{ \phi(\mathcal{L}(g); \hat{\mathbb{P}}) \big\}$. The latter then yields our novel orthogonal \longAUlearner (see Corollary~\ref{cor:one-step-risks} in Appendix~\ref{app:proofs}). Our \longAUlearner effectively resolves all the above-mentioned shortcomings  (see a comparison in Fig.~\ref{fig:learners}). Formally, it aims at minimizing one of the following losses:
\begingroup\makeatletter\def\f@size{8}\check@mathfonts
\begin{align}
    \widehat{\overline{\underline{\mathcal{L}}}}_\text{AU, CRPS}(g, \hat{\eta} & = (\hat{\pi}, \widehat{\mathbb{F}}_0, \widehat{\mathbb{F}}_1)) = \mathbb{P}_n\Big\{\int_{\mathit{\Delta}}\left( {\underline{\overline{\mathbf{F}}}}_{\text{AU}}(\delta, Z; \hat{\eta}, \gamma) - g(\delta, X) \right)^2 \diff{\delta} \Big\}, \label{eq:au-crps} \\
    \widehat{\overline{\underline{\mathcal{L}}}}_{\text{AU}, W^2_2}(g^{-1}, \hat{\eta} & = (\hat{\pi}, \widehat{\mathbb{F}}_0^{-1}, \widehat{\mathbb{F}}_1^{-1})) = \mathbb{P}_n\Big\{ \int_{0}^1\left( \underline{\overline{\mathbf{F}}}_{\text{AU}}{\!\!\!\!}^{-1}(\alpha, Z; \hat{\eta}, \gamma) - g^{-1}(\alpha, X)  \right)^2 \diff{\alpha} \Big\}, \label{eq:au-w22} \\
    {\underline{\overline{\mathbf{F}}}}_{\text{AU}}(\delta, Z; \hat{\eta}, \gamma) &= {\underline{\overline{\mathbf{F}}}}_{\text{PI}}(\delta \mid X; \hat{\eta}) +  \gamma \,\underline{\overline{C}}(\delta, Z; \hat{\eta}) \quad  \text{and} \quad \underline{\overline{\mathbf{F}}}_{\text{AU}}{\!\!\!\!}^{-1}(\alpha, Z; \hat{\eta}, \gamma) = \underline{\overline{\mathbf{F}}}_{\text{PI}}{\!\!\!\!}^{-1}(\alpha \mid X; \hat{\eta}) + \gamma \, \underline{\overline{C}}^{-1}(\alpha, Z; \hat{\eta}), \nonumber
\end{align}
\endgroup
where $\underline{\overline{{C}}}(\delta, Z; \hat{\eta})$ and $\underline{\overline{{C}}}{}^{-1}(\alpha, Z; \hat{\eta})$ are given by Eq.~\eqref{eq:bias-corr-crps} and \eqref{eq:bias-corr-w22}, respectively; and $\gamma \in (0, 1]$ is a scaling hyperparameter. We present a meta-algorithm of our \longAUlearner (with the CRPS target risk) based on cross-fitting in Algorithm~\ref{alg:au-crps} (\longAUlearner with the $W_2^2$ target risk follows analogously).

\textbf{Scaling hyperparameter.} The scaling hyperparameter $\gamma$ is introduced to tackle Challenge \circled{3} from above, namely, that the pseudo-CDF term, ${\underline{\overline{\mathbf{F}}}}_{\text{AU}}(\delta, Z; \hat{\eta}, \gamma)$, is not guaranteed to be a valid CDF for $\gamma > 0$ (both monotonicity wrt. $\delta$ and $[0, 1]$-constraint can be violated).\footnote{The issue of pseudo-outcomes violating constraints was also raised wrt. DR-learner for CATE \cite{vansteelandt2023orthogonal,van2024combining} when the outcome space is bounded (\eg, $\mathcal{Y} = [0, 1]$) and for the CDFs of potential outcomes \cite{ham2024doubly}. Yet, the optimal solution remains an open research question.} The same happens with the pseudo-quantiles of the \longAUlearner, $\underline{\overline{\mathbf{F}}}_{\text{AU}}{\!\!\!\!}^{-1}(\alpha, Z; \hat{\eta}, \gamma)$, which could be non-monotonous wrt. $\alpha$. The main intuition behind scaling is that it interpolates between the full \longAUlearner ($\gamma = 1$), that has favorable theoretical properties; and the CA-learner ($\gamma = 0$), for which the pseudo-CDFs and pseudo-quantiles are valid CDFs and quantiles, respectively (we refer to Appendix~\ref{app:scaling} with visual examples). Hence, scaling mimics a learning rate of a Newton-Raphson method (usually considered as an analogy to the one-step bias correction \cite{fisher2021visually}). We found fixed values for the scaling hyperparameter $\gamma$ to work well in all of our experiments and to improve the low-sample performance of our \longAUlearner.

\vspace{-0.2cm}
\subsection{Theoretical properties of \newline AU-learner} 
\begin{wrapfigure}{R}{0.55\textwidth}
    \vspace{-1.5cm}
    \begin{minipage}{0.55\textwidth}
        \vspace{-0.4cm}
        \footnotesize
        \begin{algorithm}[H]
            \caption{\small \longAUlearner (CRPS) via cross-fitting} \label{alg:au-crps}
            \begin{algorithmic}[1]
                \footnotesize
                \State {\textbf{Input.}} Training dataset $\mathcal{D} = \{x_i, a_i, y_i\}_{i=1}^n$, scaling $\gamma \in (0, 1]$, folds $K \ge 2$, $\delta$-grid $\{\delta_j \in \mathit{\Delta}\}_{j=1}^{n_\delta}$ 
                \State {\textbf{Output.}} Estimator of Makarov bounds $\widehat{\underline{\overline{g}}}(\delta, x)$
                \For{$k \in \{1, \dots, K\}$} \Comment{\textit{First stage}} 
                    \State Use $\{x_i, a_i, y_i\}_{k-1 \neq (i \!\! \mod \! K)}$ to fit $\hat{\eta} = (\hat{\pi}, \widehat{\mathbb{F}}_0, \widehat{\mathbb{F}}_1)$ 
                    \For{$i: k-1 = (i \!\! \mod \! K)$}
                        \State Use $\hat{\eta}$ to infer the pseudo-CDF for the $\delta$-grid:
                        \Statex \hskip\algorithmicindent \hskip\algorithmicindent $\big\{{\underline{\overline{\mathbf{F}}}}_{\text{AU}}(\delta_j, z_i; \hat{\eta})\big\}_{j=1}^{n_\delta}$ (Eq.~\eqref{eq:au-crps})
                    \EndFor
                \EndFor
                \State Fit the working model based on $\delta$-grid: \Comment{\textit{Second stage}}
                \Statex $\widehat{\underline{\overline{g}}} = \argmin_{g \in \mathcal{G}} \widehat{\overline{\underline{\mathcal{L}}}}_\text{AU, CRPS}(g, \hat{\eta})$ 
            \end{algorithmic}
        \end{algorithm}
        \vspace{-0.5cm} 
    \end{minipage}
    \vspace{-0.4cm}
\end{wrapfigure}

In the following, we formulate our second main theoretical result. For the results to hold, the nuisance functions $\eta = (\pi, {\mathbb{F}}_0, {\mathbb{F}}_1) / \eta = (\pi, {\mathbb{F}}_0^{-1}, {\mathbb{F}}_1^{-1})$ need to be estimated independently from the second stage model $g / g^{-1}$. This could be done by either assuming a not-too-flexible class of models (such as a Donsker class of estimators and fitting all the models on the same dataset $\mathcal{D}$) or by using a generic approach of cross-fitting \cite{foster2023orthogonal,chernozhukov2018double}. 

\begin{restatable}[Neyman-orthogonality of AU-learner (informal)]{thrm}{ortho} \label{thrm:ortho}
    Under the assumptions of the Theorem~\ref{thrm:eif}, the following holds for \longAUlearner from Algorithm~\ref{alg:au-crps} with the scaling hyperparameter $\gamma = 1$:
    \vspace{-0.3cm}
    \begin{enumerate}[leftmargin=0.5cm,itemsep=-1.55mm]
        \item \textbf{Neyman-orthogonality.} Losses in Eq.~\eqref{eq:au-crps} and Eq.~\eqref{eq:au-w22} are first-order insensitive wrt. to the misspecification of the nuisance functions.
        \item \textbf{Quasi-oracle efficiency.} The bias from the misspecification of the nuisance functions is of second order. 
    \end{enumerate}
\end{restatable}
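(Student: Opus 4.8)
The plan is to exploit the fact that the AU-learner losses are quadratic in the working model, so that Neyman-orthogonality reduces to a statement about the conditional mean of the pseudo-outcome, which in turn is controlled directly by Theorem~\ref{thrm:eif}.

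First I would pass to the population losses $\overline{\underline{\mathcal{L}}}_{\text{AU,CRPS}}(g,\eta)$ and $\overline{\underline{\mathcal{L}}}_{\text{AU},W_2^2}(g^{-1},\eta)$ obtained from Eq.~\eqref{eq:au-crps}--\eqref{eq:au-w22} by replacing $\mathbb{P}_n$ with $\mathbb{E}$ and $\hat\eta$ with a generic nuisance $\eta$. Because each loss has the form $\mathbb{E}\int(\overline{\underline{\mathbf{F}}}_{\text{AU}}(\cdot,Z;\eta)-g(\cdot,X))^2$, its Gateaux derivative in $g$ in a direction $h$ equals $-2\,\mathbb{E}\int(\mathbb{E}[\overline{\underline{\mathbf{F}}}_{\text{AU}}(\delta,Z;\eta)\mid X]-g(\delta,X))\,h(\delta,X)\,d\delta$, and analogously for $g^{-1}$. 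Hence the first-stage nuisances enter the $g$-gradient only through $m(\delta,x;\eta):=\mathbb{E}[\overline{\underline{\mathbf{F}}}_{\text{AU}}(\delta,Z;\eta)\mid X=x]$, so Neyman-orthogonality (the vanishing of the mixed $\eta$-then-$g$ derivative) is equivalent to $\frac{d}{dt}\big|_{t=0}m(\delta,x;\eta_0+t(\eta-\eta_0))=0$ for every admissible direction $\eta-\eta_0$, where $\eta_0$ denotes the true nuisances; note this uses quadraticity and so holds independently of $g_0$.

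The core step is the cancellation $\frac{d}{dt}\big|_{t=0}\mathbb{E}[\overline{\underline{C}}(\delta,Z;\eta_0+t\Delta\eta)\mid X]=-\frac{d}{dt}\big|_{t=0}\overline{\underline{\mathbf{F}}}_{\text{PI}}(\delta\mid X;\eta_0+t\Delta\eta)$, which is the localized (conditional-on-$X$) form of the pathwise-derivative identity underlying Theorem~\ref{thrm:eif}. I would establish it by: (i) an envelope argument --- on the event $I(X;\eta_0)=1$, since $y^{\overline{*}}_{\mathcal{Y}}(\delta\mid X)$ is a finite argmax set, the regularity conditions of Theorem~\ref{thrm:eif} make the derivative of the rectified sup-convolution $\overline{\underline{\mathbf{F}}}_{\text{PI}}$ in $\eta$ equal the partial derivative of $\mathbb{F}_1(y^*\mid X;\cdot)-\mathbb{F}_0(y^*-\delta\mid X;\cdot)$ at a fixed maximizer $y^*$, the rectifier contributing nothing off a $\delta$-null set; and (ii) differentiating $\mathbb{E}[\overline{\underline{C}}(\delta,Z;\eta)\mid X]$ directly --- the perturbation of $\pi$ multiplies the conditional residual $\mathbb{E}[\mathbbm{1}\{Y\le y^*\}-\mathbb{F}_1(y^*\mid X)\mid X,A{=}1]$, which vanishes at $\eta_0$, while the perturbations of $\mathbb{F}_1,\mathbb{F}_0$ produce exactly $-\partial_t[\mathbb{F}_1(y^*\mid X;\cdot)-\mathbb{F}_0(y^*-\delta\mid X;\cdot)]$ on $I(X;\eta_0)=1$. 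Summing the two derivatives gives $0$; the substitutions listed in Theorem~\ref{thrm:eif} handle the upper bound, and the quantile case is the same envelope/residual argument applied to the inf-convolution $(\mathbb{F}_1^{-1}\,\underline{*}\,\mathbb{F}_0^{-1})_{[\alpha,1]}$, using positivity of the conditional densities at the relevant quantiles so that the density-weighted residuals in Eq.~\eqref{eq:bias-corr-w22} are well-defined and conditionally mean-zero. This proves claim~1.

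For claim~2 (quasi-oracle efficiency) I would Taylor-expand $m(\delta,x;\hat\eta)$ to second order around $\eta_0$: the zeroth-order term is the true bound $\overline{\underline{\mathbf{F}}}(\delta\mid x;\eta_0)$ (conditional unbiasedness at truth, since $\mathbb{E}[\overline{\underline{C}}(\delta,Z;\eta_0)\mid X]=0$), the first-order term vanishes by the previous step, and the remainder is a bilinear form in $(\hat\pi-\pi,\widehat{\mathbb{F}}_a-\mathbb{F}_a)$, yielding a mixed-bias bound $\|m(\cdot,\cdot;\hat\eta)-\overline{\underline{\mathbf{F}}}(\cdot\mid\cdot;\eta_0)\|_{L_2}\lesssim\|\hat\pi-\pi\|_{L_2}\,\max_a\|\widehat{\mathbb{F}}_a-\mathbb{F}_a\|_{L_2}$. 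Feeding this pseudo-outcome error into the standard excess-risk decomposition for second-stage empirical risk minimization with an orthogonal loss (e.g. the quasi-oracle bounds of Nie \& Wager \cite{nie2021quasi} / Foster \& Syrgkanis \cite{foster2023orthogonal}, whose applicability is secured by the cross-fitting in Algorithm~\ref{alg:au-crps} or a Donsker restriction, which kills the $(\mathbb{P}_n-\mathbb{E})$-part of the correction term) gives an oracle excess risk plus a term of the above product order. The hard part will be step~(i): making the envelope/differentiation-under-the-sup argument rigorous for the rectified sup/inf-convolutions --- ruling out first-order effects of the moving argmax $y^*$ (resp. $u^*$), handling the kink of $[\cdot]_+$ on the null set $\{(\mathbb{F}_1\,\overline{*}\,\mathbb{F}_0)_{\mathcal{Y}}(\delta\mid x)=0\}$, and controlling the density-ratio terms near the boundary of $[0,1]$ in the quantile case --- which is precisely what the ``mild conditions on the conditional outcome distributions'' and the finite-argmax restriction of Theorem~\ref{thrm:eif} are there to supply, so the work is in carefully importing those hypotheses rather than in new ideas.
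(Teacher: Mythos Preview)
Your approach to Part~1 is essentially the paper's: both reduce to computing the mixed pathwise derivative $D_\eta D_g\mathcal{L}_{\text{AU}}$, use the envelope theorem to kill the contribution of the moving argmax/argmin at first order, and use the conditional mean-zero property of the IPTW residuals to handle the $\pi$-direction. Your reformulation via the conditional pseudo-outcome mean $m(\delta,x;\eta)$ is a clean way to package the same computation; the paper instead writes out each $D_\pi$, $D_{\mathbb{F}_1}$, $D_{\mathbb{F}_0}$ separately and shows the non-vanishing piece has coefficient $(1-\gamma)$.

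For Part~2, there is a gap in your remainder analysis. You assert that the second-order remainder of $m(\cdot;\hat\eta)$ is a \emph{bilinear} form in $(\hat\pi-\pi,\widehat{\mathbb{F}}_a-\mathbb{F}_a)$ and hence controlled by the single product $\|\hat\pi-\pi\|\cdot\max_a\|\widehat{\mathbb{F}}_a-\mathbb{F}_a\|$. This is not correct for the Makarov pseudo-outcome: the map $\eta\mapsto\overline{\underline{\mathbf{F}}}_{\text{AU}}(\delta,Z;\eta)$ is not twice pathwise differentiable, precisely because of the rectifier indicator $I(X;\eta)$ and the argmax $y^*(\eta)$ you flag as ``the hard part'' for Part~1. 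These non-smooth components do not disappear at second order; they generate additional remainder terms. The paper's formal version of Theorem~\ref{thrm:ortho} (Appendix~\ref{app:proofs}) shows the excess risk is bounded by the doubly-robust products you anticipate \emph{plus} a term of order $\|y^*-\hat y^*\|^4$ from the argmax shift (via a second-order Taylor expansion of $\mathbb{F}_1(y\mid X)-\mathbb{F}_0(y-\delta\mid X)$ in $y$ around $y^*$) and terms of order $\|\mathbb{F}_a\circ y^*-\widehat{\mathbb{F}}_a\circ\hat y^*\|^4$ from the indicator mismatch $I(\cdot;\hat\eta)-I(\cdot;\eta)$ (controlled via a margin/Markov argument using Assumption~\ref{ass:lin-rect}). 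The paper obtains these not by formal Taylor expansion of $m$ but by directly writing $R_2(\eta,\hat\eta)=\mathbb{E}\int(\overline{\underline{\mathbf{F}}}_{\text{AU}}(\delta,Z;\hat\eta)-\overline{\underline{\mathbf{F}}}(\delta\mid X))(\hat g-g_*)\,d\delta$ and decomposing by hand. Your overall conclusion (``second-order bias'') survives, but the route you sketch would fail to produce the correct rate conditions, and you would need the direct $R_2$ decomposition rather than a naive second-order Taylor expansion.
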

We refer to the Appendix~\ref{app:proofs} for the detailed formulation of the theorem and the proof. Notably, the quasi-oracle efficiency \cite{nie2021quasi} means that our \longAUlearner with (sufficiently fast) estimated nuisance functions performs nearly identical to the \longAUlearner with the ground-truth nuisance functions.

\vspace{-0.1cm}
\section{Neural instantiation with \AUCNFs} \label{sec:au-cnfs}

We now introduce a flexible fully-parametric instantiation of our \longAUlearner, which we call \AUCNFs. Therein, we employ conditional normalizing flows (CNFs) \cite{rezende2015variational,trippe2018conditional} as the main backbone for our \longAUlearner. CNFs are a flexible neural probabilistic method with tractable conditional densities, CDFs, and quantiles. Importantly, all three attributes of the CNFs (densities, CDFs, and quantiles) can be used for both training via back-propagation and inference \cite{si2021autoregressive}, which makes them a perfect model for both stages of our \longAUlearner.       

\textbf{Architecture.} The architecture of our \AUCNFs is inspired by interventional normalizing flows (INFs) \cite{melnychuk2023normalizing} (a two-stage model for efficient estimation of potential outcomes densities). Our \AUCNFs consist of several CNFs corresponding to the two stages of learning, namely a \emph{nuisance CNF}, which fits the nuisance functions, $(\hat{\pi}, \widehat{\mathbb{F}}_0, \widehat{\mathbb{F}}_1)$ or, equivalently, $(\hat{\pi}, \widehat{\mathbb{F}}_0^{-1}, \widehat{\mathbb{F}}_1^{-1})$; and \emph{two target CNFs}, which implement second stage working models for upper and lower bounds, $\overline{{\mathcal{G}}}$ and ${\underline{\mathcal{G}}}$, respectively. 

\textbf{Training \& implementation.} At the first stage of \AUCNFs learning, the nuisance CNF aims at maximizing the conditional log-likelihood and minimizing a binary cross-entropy via a joint loss. Then, we generate the pseudo-CDFs and pseudo-pseudo quantiles, as described in  Algorithm~\ref{alg:au-crps}. Therein, we set the $\delta$/$\alpha$-grid size to $n_{\delta} = n_\alpha = 50$ and discretize the $\mathcal{Y}$-space/$[0, 1]$-interval to infer the argmax/argmin values, $\hat{y}^{\underline{\overline{*}}}/\hat{u}^{\underline{\overline{*}}}$.  Then, we proceed with the second stage of \AUCNFs learning, where we set $\gamma = 0.25$ for the CRPS loss and  $\gamma = 0.01$ for the $W_2^2$ loss. We found the fixed values of $\gamma$ to work well in \emph{all of the synthetic and semi-synthetic experiments} (except for the IHDP100 dataset, where the overlap assumption is violated). We use the same training data for two stages of learning, as (regularized) CNFs as neural networks belong to the Donsker class of estimators \cite{van2000asymptotic}. We refer to Appendix~\ref{app:aucnfs} for more details on our \AUCNFs.  

\vspace{-0.1cm}
\section{Experiments} \label{sec:experiments}

We now evaluate our \longAUlearner. For this, we use (semi-)synthetic benchmarks with the ground-truth conditional CDFs/quantiles of potential outcomes, \ie, $\mathbb{F}_a(y \mid x)$/$\mathbb{F}_a^{-1}(u \mid x)$. In this way, we can infer the ground-truth Makarov bounds and use them for evaluation.   

\textbf{Evaluation metric.} We use evaluation metrics based on the target risks (as introduced in Sec~\ref{sec:two-stage-learners}). Specifically, we report root continuous ranked probability score (rCRPS) and Wasserstein-2 distance ($W_2$) based on training data (in-sample) and test data (out-sample).    

\textbf{Baselines.} We compare the proposed hierarchy of learners from  Sec.~\ref{sec:meta-learners} with CNFs as backbones. These are the plug-in learner \textbf{(Plug-in CNF)}, IPTW-learner \textbf{(IPTW-CNF)}, CA-learners \textbf{(CA-CNFs (CRPS / $W_2^2$))}, and AU-learners \textbf{(AU-CNFs (CRPS / $W_2^2$))}. The only relevant baseline found in the literature is a plug-in learner based on kernel density estimation \cite{ruiz2022non,lee2024partial,cui2024policy}.\footnote{The work of \cite{ji2023model} focuses on averaged (population-level) Makarov bounds for fixed values of $\delta/\alpha$ and it is unclear how to extend it to our setting.} For this, we used distributional kernel mean embeddings \cite{muandet2021counterfactual} \textbf{(Plug-in DKME)}, a standard conditional kernel density estimation method.  Details on the baselines are in Appendix~\ref{app:hparams}.

\textbf{Synthetic data.} We adapt the synthetic data generator ($d_x = 2$) from \cite{kallus2019interval,melnychuk2024bounds} by creating three settings with different conditional outcome distributions: normal, multi-modal, and exponential (see data generation details in Appendix~\ref{app:data-details}). In the synthetic data, \emph{the ground-truth Makarov bounds are less heterogeneous than the potential outcomes}, and, hence, two-stage learners are expected to perform the best. We sample $n_\text{train} \in \{100; 250; 500; 750; 1000\}$ training and $n_\text{test} = 1000$ test datapoints. The out-sample results are in Fig.~\ref{fig:synth-res-crps}. Here, our \AUCNFs perform the best wrt. rCRPS in the majority of settings and different sizes of training data. We also report the results wrt. $W_2^2$ in Appendix~\ref{app:add-results}.

\textbf{HC-MNIST dataset.} HC-MNIST is a high-dimensional semi-synthetic dataset ($d_x = 785$), built on top of the MNIST dataset \cite{jesson2021quantifying} (see details in Appendix~\ref{app:data-details}). Here, the heterogeneity of Makarov bounds is also smaller than that of potential outcomes, reflecting inductive biases in the real world. We report the out-sample performance of different methods in Table~\ref{tab:hcmnist-out} (the Plug-in DKME is omitted due to a too-long runtime). Therein, our \AUCNFs (CRPS) achieve the best performance and, thus, scale well with the dataset size and the dimensionality of covariates. Further, our \AUCNFs ($W_2^2$) improve the performance of CA-CNFs ($W_2^2$). In general, we observe that the loss based on the CDF distance (\ie, CRPS) has a lower variance and is easier to fit. In Appendix~\ref{app:add-results}, we additionally report the results for another popular semi-synthetic benchmark, IHDP100 \cite{hill2011bayesian,shalit2017estimating}.  

\textbf{Case study.} In Appendix~\ref{app:case-study}, we provide a real-world case study based on the observational dataset from \cite{banholzer2021estimating}. Therein, we demonstrate how our \longAUlearner(\AUCNFs) can be used to estimate the effectiveness of lockdowns during the COVID-19 pandemic. We estimate the probability of benefit from intervention (a special case of Makarov bounds with $\delta = 0$). As expected, we observe a drop in the incidence rate is highly probable after the implementation of a strict lockdown.

\vspace{-0.1cm}
\section{Discussion} \label{sec:discussion}

\begin{table}[tb]
    \begin{minipage}{0.37\linewidth}
        \centering
        \vspace{-0.6cm}
        \caption{Results for HC-MNIST. Reported: median out-sample rCRPS $\pm$ sd / $W_2$ $\pm$ sd over 10 runs.}
        \vspace{-0.1cm}
        \label{tab:hcmnist-out}
        \scalebox{0.64}{\begin{tabu}{l|c|c}
\toprule
 & \multicolumn{2}{c}{B: upper} \\
 & $\text{rCRPS}_{\text{out}}$ & $W_{2 \, \text{out}}$ \\
\midrule
\vspace{0.9mm}
Plug-in CNF & 0.399 $\pm$ 0.162 & 1.051 $\pm$ 0.514 \\
\vspace{0.9mm}
IPTW-CNF & 0.385 $\pm$ 0.501 & 0.986 $\pm$ 1.936 \\
CA-CNFs (CRPS) & 0.382 $\pm$ 0.045 & 1.029 $\pm$ 0.119 \\
\vspace{0.9mm}
CA-CNFs ($W^2_2$) & 0.494 $\pm$ 0.229 & 1.239 $\pm$ 0.541 \\
AU-CNFs (CRPS) & \textbf{0.347 $\pm$ 0.114} & \textbf{0.940 $\pm$ 0.389} \\
AU-CNFs ($W^2_2$) & 0.422 $\pm$ 0.189 & 1.065 $\pm$ 0.523 \\
\bottomrule
\end{tabu}
} \\
        \scalebox{0.64}{\begin{tabu}{l|c|c}
\toprule
 & \multicolumn{2}{c}{B: lower} \\
 & $\text{rCRPS}_{\text{out}}$ & $W_{2 \, \text{out}}$ \\
\midrule
\vspace{0.9mm}
Plug-in CNF & 0.391 $\pm$ 0.104 & 1.003 $\pm$ 0.333 \\
\vspace{0.9mm}
IPTW-CNF & 0.447 $\pm$ 0.218 & 1.141 $\pm$ 0.917 \\
CA-CNFs (CRPS) & 0.388 $\pm$ 0.051 & 1.011 $\pm$ 0.146 \\
\vspace{0.9mm}
CA-CNFs ($W^2_2$) & 0.446 $\pm$ 0.148 & 1.089 $\pm$ 0.341 \\
AU-CNFs (CRPS) & \textbf{0.357 $\pm$ 0.097} & \textbf{0.948 $\pm$ 0.344} \\
AU-CNFs ($W^2_2$) & 0.415 $\pm$ 0.190 & 1.025 $\pm$ 0.520 \\
\bottomrule
\multicolumn{3}{l}{Lower $=$ better (best in bold) }
\end{tabu}
}
    \end{minipage} \hfill
    \begin{minipage}{0.62\linewidth}
        \centering
        \vspace{-0.2cm}
        \includegraphics[width=\textwidth]{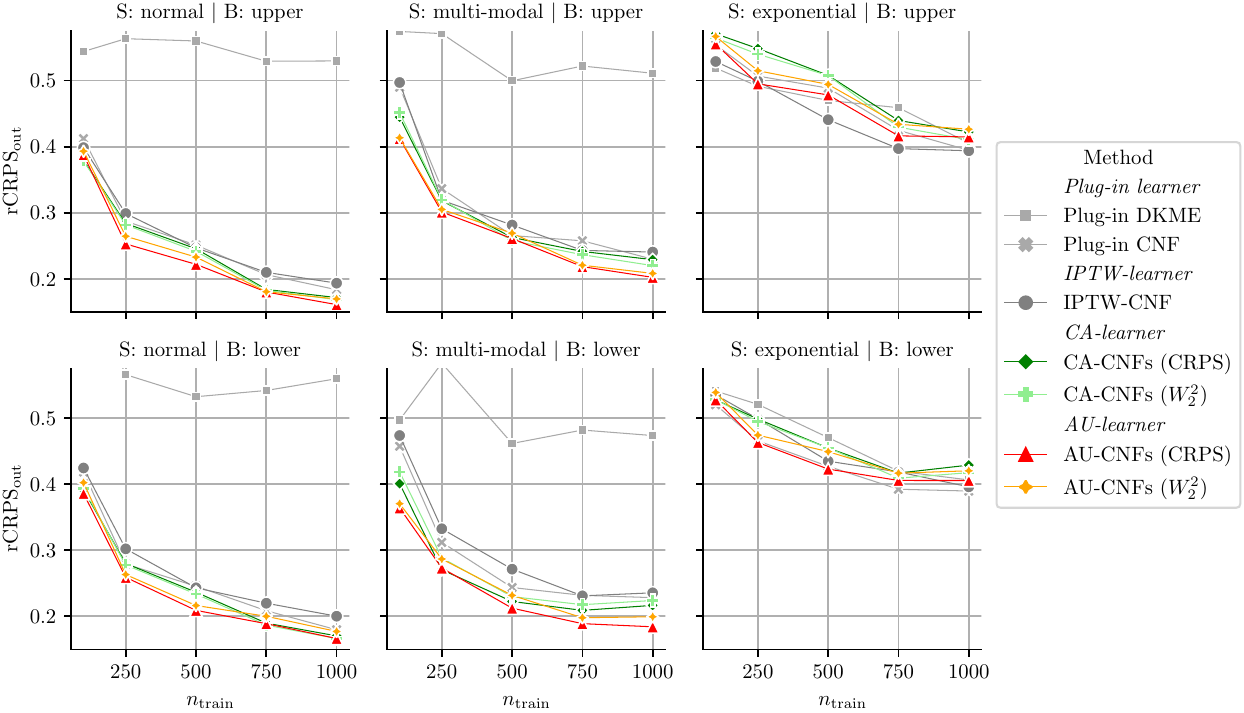}
        \vspace{-0.3cm}
        \captionof{figure}{Results for synthetic experiments with varying size of training data, $n_{\text{train}}$, in 3 settings: normal, multi-modal, and exponential. Reported: mean out-sample rCRPS over 20 runs.  }
        \label{fig:synth-res-crps}
    \end{minipage}
    \vspace{-0.7cm}
\end{table}

\textbf{Low-sample \& asymptotic performance.} In several experiments, especially in low-sample settings, the CA-learner or even the plug-in approach are performing nearly as well or even sometimes better than the \longAUlearner. This can be expected, as the best low-sample learner and the asymptotically best learner can, in general, be different \cite{curth2021nonparametric}, and there is no single ``one-fits-all'' data-driven solution to choose the former one \cite{curth2023search}. This can be explained by too small dataset sizes or the severe overlap violations (as is the case with the IHDP100 dataset; see Appendix~\ref{app:add-results}). Yet, only our quasi-oracle efficient AU-learner offers asymptotic properties in the sense that it is asymptotically closest to the oracle (see Fig.~\ref{fig:learners}). We thus argue for a pragmatic choice in practice (\ie, in the absence of ground-truth counterfactuals or additional RCT data) where our \longAUlearner should be the preferred method for the covariate-conditional Makarov bounds even in low-sample data.

\textbf{Future work.} Our work sets a foundation for several extensions to estimate covariate-conditional Makarov bounds. For example, the estimation of the interval probabilities (see Appendix~\ref{app:makarov-bounds}) of the treatment effect can provide a connection with the existing works on total uncertainty with conformal prediction \cite{alaa2024conformal}. Additionally, one might want to study possible extensions of Makarov bounds tailored to high-dimensional outcomes.

\textbf{Limitations \& broader impact.} Our work is subject to the standard assumptions of the potential outcomes framework. We further make assumptions on the outcome distribution, though these are very mild. Nevertheless, we expect our work to have a positive impact, as it will help to improve the reliability of decision-making in medicine and other safety-critical fields.

\textbf{Conclusion.} We are the first to offer a theory of orthogonal learning to quantify the aleatoric uncertainty of the treatment effect at the covariate-conditional level and present flexible neural instantiation.

\clearpage

\section*{Acknowledgments}
This paper is supported by the DAAD program ``Konrad Zuse Schools of Excellence in Artificial Intelligence'', sponsored by the Federal Ministry of Education and Research. Additionally, the authors would like to thank Lars van der Laan, Dennis Frauen, and Alicia Curth for their helpful remarks and comments on the content of this paper. SF also acknowledges funding from the Swiss National Science Foundation (SNSF) via Grant 186932.

\printbibliography


\appendix

\clearpage
\section{Extended Related Work} \label{app:ext-rw}

\textbf{Partial identification and sensitivity models in the potential outcomes framework.} \emph{Interventional quantities} (such as CAPOs and distributions of potential outcomes) and some counterfactual quantities (\eg, CATE) are point identifiable in potential outcomes framework (see Fig.~\ref{fig:ladder-of-causation}). However, relaxation of the unconfoundedness assumption renders those quantities partially identifiable or even non-identifiable identifiable \cite{manski1990nonparametric}. In this case, additional sensitivity models can be employed. Examples are the marginal sensitivity model \cite{tan2006distributional,dorn2023sharp, jesson2021quantifying, bonvini2022sensitivity,jesson2022scalable,jin2022sensitivity,frauen2023sharp,frauen2024neural} and outcome sensitivity model \cite{bonvini2022sensitivity, rambachan2022counterfactual}. Other approaches suggest using instrumental variables \cite{angrist1996identification,hernan2006instruments} or noisy proxy variables \cite{tchetgen2024introduction,guo2022partial}.  \emph{Counterfactual quantities}, on the other hand, are inherently non-identifiable (\eg, joint distribution of potential outcomes \cite{fan2010sharp}, expected counterfactual outcomes of (un)treated \cite{melnychuk2023partial}, and distribution of treatment effect \cite{fan2010sharp}). There is no general approach for deriving sharp bounds for partially identifiable causal quantities, therefore the above-mentioned works are not relevant in our setting.

\begin{figure}[ht]
    \centering
    \setlength{\fboxsep}{1pt}
    \includegraphics[width=\textwidth]{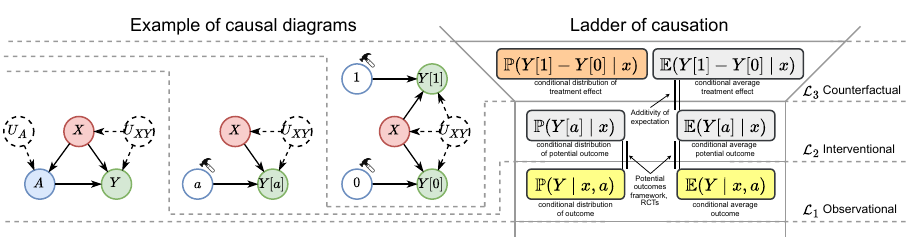}
    \vspace{-0.6cm}
    \caption{Pearl’s ladder of causation \cite{pearl2009causality, bareinboim2022pearl} containing different observational, interventional, and counterfactual quantities related to the potential outcomes framework. Here, $X$ are covariates, $A$ is a binary treatment, and $Y$ is a continuous outcome. We also plot three exemplar causal diagrams, satisfying the assumptions of the potential outcomes framework, for each layer of causation, correspondingly.
    Quantities with the \colorbox{lightgray}{light gray} background can be expressed with the quantities from lower layers (\eg, conditional average treatment effect) and quantities with \colorbox{yellow}{yellow} background require the information from the same layer. In this paper, we are interested in the \CDTE, $\mathbb{P}(Y[1] - Y[0] \mid x)$, shown in \colorbox{orange}{orange}.
    Notably, point non-identifiability of the \CDTE can be seen with a parallel worlds network (the causal diagram of the counterfactual layer).}
    \label{fig:ladder-of-causation}
\end{figure}

\textbf{Distributional treatment effects and the distribution of the treatment effect.} There are two seemingly similar notions in the existing literature: (a)~distributional treatment effects  \cite{park2021conditional,chikahara2022feature,fawkes2024doubly,kallus2023robust,martinez2023efficient} and (b)~the distribution of the treatment effect \cite{manski1997monotone,balakrishnan2023conservative,fan2009partial,fan2010sharp,firpo2019partial}. However, (a) and (b) have two important differences:
\begin{itemize}[leftmargin=0.5cm]
    \item (i)~Interpretation. Distributional treatment effects represent the differences between different distributional aspects of the potential outcomes (e.g., Wasserstein distances, KL-divergence, or the quantile differences) \cite{kallus2023robust}. Hence, they can answer questions like ``How are 10\% of the worst-possible outcomes with treatment different from the worst 10\% of the outcomes without treatment?''. Here, the two groups (treated and untreated) of the worst 10\% contain, in general, different individuals. This is problematic in many applications like clinical decision support and drug approval. Here, the aim is not to compare individuals from treated vs. untreated groups (where the groups may differ due to various, unobserved reasons). Instead, the aim is to accurately quantify the treatment response for each individual and allow for quantification of the personalized uncertainty of the treatment effect. The latter is captured in the distribution of the treatment effect, which allows us to answer the question about the CDF/quantiles of the treatment effect. For example, we would aim to answer a question like ``What are the worst 10\% of values of the treatment effect?''. Here, we focus on the treatment effect for every single individual. The latter is more complex because we reason about the difference of two potential outcomes simultaneously. Hence, in natural situations when the potential outcomes are non-deterministic, both (a) the distributional treatment effect and (b) the distribution of the treatment effect will lend to very different interpretations, especially in medical practice. In particular, the distribution of the treatment effect (which we study in our paper) is important in medicine, where it allows quantifying the amount of harm/benefit after the treatment \cite{kallus2022what}. This may warn doctors about situations where the averaged treatment effects are positive but where the probability of the negative treatment effect is still large.
    \item (ii)~Inference.  The efficient inference of the distributional treatment effects only requires the estimation of the relevant distributional aspects of the conditional outcome distributions (\eg, quantiles) and the propensity score \cite{kallus2023robust}. However, in our setting of the bounds on the CDF/quantiles of the treatment effect, we also need to perform sup/inf convolutions of the CDF/quantiles of the conditional outcomes distributions. Hence, while the definitions of (a) the distributional treatment effects and (b) the distribution of the treatment effect appear related, their estimation is very different.
\end{itemize}  

\textbf{Alternatives to Makarov bounds.}  Potential alternatives to Makarov bounds vary depending on the type of aleatoric uncertainty. For example, explicit or implicit sharp bounds were proposed for:
\begin{itemize}[leftmargin=0.5cm]
    \item The variance of the treatment effect, $\operatorname{Var}(Y[1] - Y[0])$ \cite{aronow2014sharp}. Here, the sharp bounds are explicitly given by Fr{\'e}chet-Hoeffding bounds \cite{frechet1951tableaux,hoeffding1940masstabinvariante} and are, in general, different from Makarov bounds. 
    \item The interval probabilities, $\mathbb{P}(\delta_1 \le Y[1] - Y[0] \le \delta_2)$ \cite{firpo2019partial}. The sharp bounds on the interval probabilities are only defined implicitly and are also, in general, different from Makarov bounds (see Appendix~\ref{app:makarov-bounds}). 
\end{itemize}  
We are not aware of other bounds for measuring aleatoric uncertainty (\eg, kurtosis, skewness, or entropy). Importantly, the above-mentioned bounds on different measures of uncertainty are orthogonal to our work, and we focus on the bounds on the CDF/quantiles of the treatment effect.

\textbf{Efficient estimators and orthogonal learners.} Efficient estimation of causal quantities (target estimands) was studied in the scope of (i)~\emph{semi-parametric efficient estimation theory} and, more general, (ii)~\emph{orthogonal learning theory}. Both theories (i) and (ii) rely on the concept of influence functions (pathwise derivatives) \cite{tsiatis2006semiparametric,kennedy2022semiparametric,van2023adaptive}, which allow performing a \emph{one-step bias correction} for (i) a plug-in estimator of the target estimand or (ii) the population risk containing target estimand, respectively. (i)~Semi-parametric efficient estimation theory \cite{bickel1993efficient,robins1995semiparametric,laan2003unified,tsiatis2006semiparametric,van2011targeted}  provides asymptotically efficient estimators for finite-dimensional target estimands (\eg, average treatment effect (ATE) and average potential outcomes (APOs)). On the other hand, (ii)~\emph{orthogonal learning theory} (or debiased ML) \cite{chernozhukov2018double,foster2023orthogonal,morzywolek2023general,vansteelandt2023orthogonal} was developed for infinitely-dimensional (functional) estimands, like, CATE and CAPO.  Orthogonal learning theory (or orthogonal learners) estimates target estimands by minimizing \mbox{(Neyman-)}orthogonal losses that are first-order insensitive to the misspecification of the nuisance functions. Specific examples of orthogonal learners for identifiable quantities include CAPO learners \cite{vansteelandt2023orthogonal} and CATE learners \cite{curth2020estimating,nie2021quasi,morzywolek2023general,kennedy2023towards,van2024combining}.   

\textbf{Estimation of partially identifiable quantities.} Orthogonal learners were also proposed for bounds on partially identifiable causal quantities. Examples include different interventional quantities such, \eg, in marginal sensitivity model \cite{oprescu2023b}, in instrumental variables setting \cite{levis2024nonparametric}, in noisy proxy variables setting \cite{li2022doubly,sverdrup2023proximal}. Examples from counterfactual quantities are, \eg, the variance of treatment effect and joint CDF of potential outcomes (Fr{\'e}chet-Hoeffding bounds) \cite{balakrishnan2023conservative}. More generally, estimation and learning for partially identifiable quantities was studied in econometrics as \emph{estimation of intersection bounds} \cite{chernozhukov2013intersection,semenova2023adaptive,semenova2023debiased}. In this paper, we construct a novel orthogonal learner targeting at Makarov bounds on the CDF/quantiles of the \CDTE.

\textbf{Total uncertainty quantification.} Total uncertainty in potential outcomes framework can be generally quantified with two approaches: (i)~Bayesian methods (\eg, Gaussian processes) and (ii)~conformal prediction framework. (i)~Bayesian methods \cite{hill2011bayesian,alaa2017bayesian,alaa2018bayesian} allow to infer posterior predictive distributions or credible intervals and for both potential outcomes and treatment effect, but under additional identifiability assumptions (\eg, an assumption of additive latent outcome noise, which renders treatment effect distribution identifiable). (ii)~Conformal prediction aims at providing a valid predictive interval and was applied to quantify total uncertainty of predicting potential outcomes \cite{lee2020robust,lei2021conformal,marmarelis2024ensembled,schroder2024conformal} and treatment effect \cite{kivaranovic2020conformal,lei2021conformal,jin2023sensitivity,alaa2024conformal,jonkers2024conformal}. The latter works either make a similar additive latent outcome noise assumption or `hide' non-identifiable treatment effect into the predictive interval (however, oracle predictive interval could never be reached in this case). We argue that total uncertainty needs to be split into epistemic and aleatoric explicitly to provide insights on the origin of uncertainty \cite{hullermeier2021aleatoric}, especially in our setting where aleatoric uncertainty of the treatment effect itself is partially identifiable (see Fig.~ \ref{fig:total-uncertainty}).

\clearpage
\section{Makarov Bounds} \label{app:makarov-bounds}
\vspace{-0.2cm}
\subsection{Bounds Construction}

In the following, we provide an additional intuition on how Makarov bounds are inferred from conditional distributions of the potential outcomes, $\mathbb{P}(Y[a] \mid x)$. For example, Makarov bounds for the CDF of the \CDTE are a composition of the sup/inf convolutions, applied to the conditional CDFs of the potential outcomes, and the rectifier functions (see Fig.~\ref{fig:construction-example}). 

Makarov bounds can be inferred (i)~analytically or (ii)~numerically. (i)~Analytic formulas were proposed for very simple distributions (\eg, a normal distribution \cite{fan2010sharp}). At the same time, the (ii)~numerical approach is more flexible. For example, we can discretize the $\mathcal{Y}$-space or $[0, 1]$-interval and perform maximization/minimization on that grid. Notably, in our experiments, we infer the ground-truth bounds with the approach (i), when the ground-truth conditional potential outcomes distributions are normal; and (ii) otherwise.

\begin{figure}[ht]
    \centering
    \setlength{\fboxsep}{1pt}
    \includegraphics[width=\textwidth]{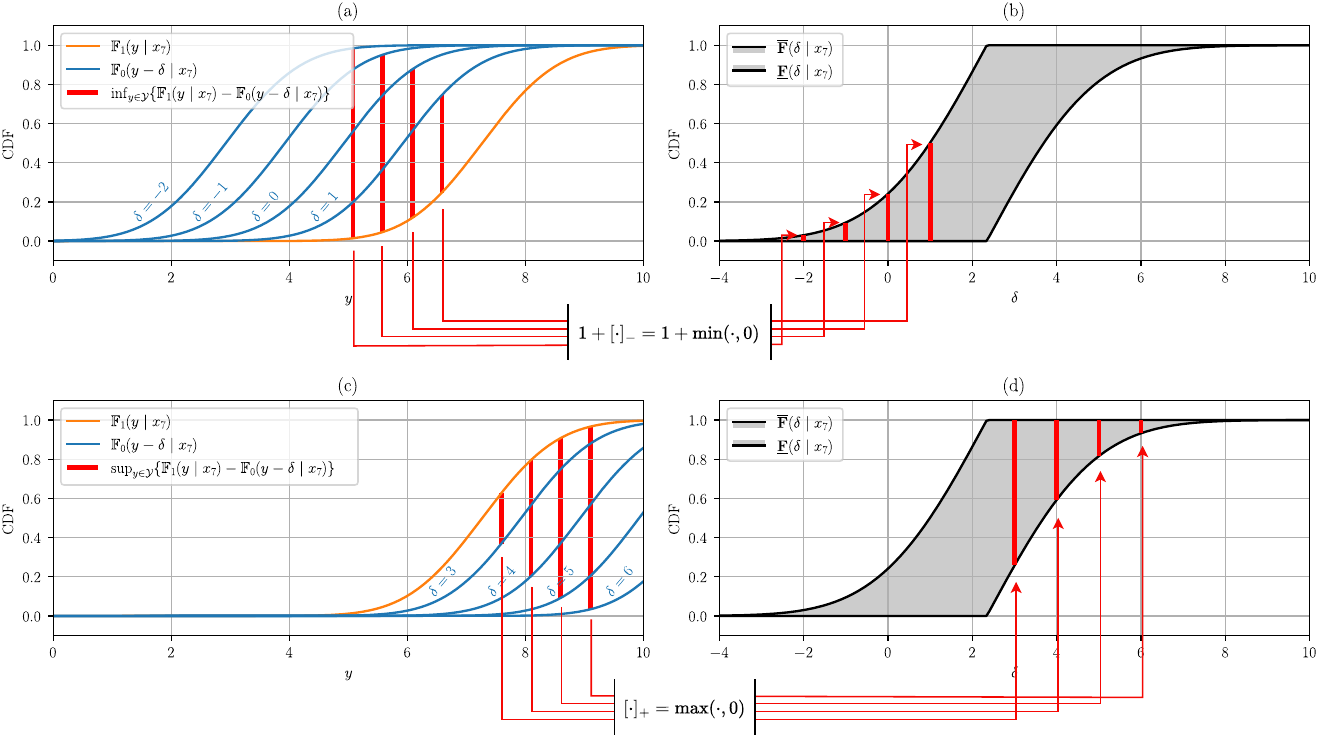}
    \vspace{-0.5cm}
    \caption{A example of the inference of Makarov bounds on the CDF of the \CDTE based on $i=7$-th instance of the semi-synthetic IHDP100 dataset \cite{hill2011bayesian}. The construction of the upper bound is shown in \textbf{(a)} and \textbf{(b)}; and the lower bound corresponds to subfigures \textbf{(c)} and \textbf{(d)}. The subfigures on the left, \textbf{(a)} and \textbf{(c)}, contain the conditional CDFs of both potential outcomes, namely, $\mathbb{F}_0(\cdot \mid x_7)$ and $\mathbb{F}_1(\cdot \mid x_7)$. Therein, the conditional CDF $\mathbb{F}_0(\cdot \mid x_7)$ is shifted  wrt. four values of $\delta$. The figures on the right, \textbf{(b)} and \textbf{(d)}, then demonstrate the corresponding Makarov bounds values for the same four values of $\delta$ (shown in \textcolor{red}{\textbf{red}}). We also plot the full Makarov bounds with a $\protect\Underline[1.2pt]{\Overline[1.2pt]{\text{\colorbox{darkgray}{gray}}}}$ color. }
    \label{fig:construction-example}
\end{figure}

\vspace{-0.2cm}
\subsection{Pointwise and Uniformly Sharp Bounds}

The sharpness of Makarov bounds proposed in \cite{fan2010sharp} has one important characteristic. Specifically, although the Makarov bounds on the CDFs, $\underline{\overline{\mathbf{F}}}(\delta \mid X)$, are CDFs themselves, they are not valid solutions to the partial identification task. This can be easily checked, \ie, the expectation wrt. to the  Makarov bounds does not coincide with the point-identifiable CATE, $\tau(x) = \mathbb{E}(\Delta \mid x)$: 
\begin{equation}
    - \int_{-\infty}^0 \overline{\mathbf{F}}(\delta \mid x) \diff{\delta} + \int^\infty_0 \big(1 - \overline{\mathbf{F}}(\delta \mid x)\big) \diff{\delta} < \tau(x) <  - \int_{-\infty}^0 \underline{\mathbf{F}}(\delta \mid x) \diff{\delta} + \int^\infty_0 \big(1 - \underline{\mathbf{F}}(\delta \mid x)\big) \diff{\delta}.
\end{equation}
Therefore, there is an important distinction between so-called \emph{pointwise sharpness} and a \emph{uniform sharpness} \cite{firpo2019partial}.

In the case of uniform sharpness, a sharp bound coincides with the solution to the partial identification task. This implies that if a bound is uniformly sharp, then the joint bound on the set of quantities, evaluated in two (or more) points of $\mathit{\Delta}$ or $[0, 1]$, is also sharp. Many known bounds (\eg, Fr{\'e}chet-Hoeffding bounds \cite{frechet1951tableaux,hoeffding1940masstabinvariante}, and marginal sensitivity model bounds \cite{tan2006distributional,dorn2023sharp,frauen2023sharp,frauen2024neural}) are uniformly sharp. Yet, Makarov bounds are only pointwise sharp.   

Recent works developed uniformly sharp bounds on the CDF of the \CDTE \cite{firpo2019partial}. However, their inference requires a special computational routine for every value of $\delta/\alpha$ wrt. the CDF/quantiles of the \CDTE. Their usage is further complicated by the fact that the CDFs of the uniformly sharp bounds correspond to mixed-type discrete/continuous random variables. We show an example of the uniformly sharp bounds on the CDF for $\delta = 0$ in Fig.~\ref{fig:unif-pointwise-example}. The uniformly sharp bounds may be useful for more complex aleatoric uncertainty quantities (\eg, interval quantities like $\mathbb{P}(\delta_1 \le Y[1] - Y[0] \le \delta_2 \mid x)$) or simultaneous bounds on the variance and the CDF of the \CDTE. Nevertheless, in many practical applications, pointwise sharp bounds (Makarov bounds) are enough and we focus on those in our paper.  

\begin{figure}[ht]
    \centering
    \setlength{\fboxsep}{1pt}
    \includegraphics[width=\textwidth]{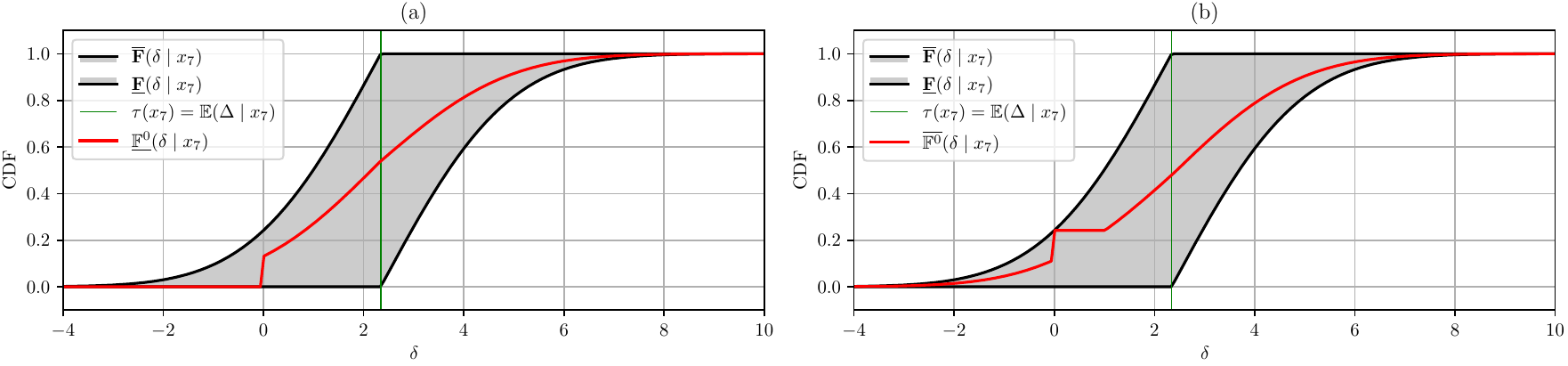}
    \vspace{-0.5cm}
    \caption{Comparison of pointwise (Makarov) and uniformly sharp bounds on the CDF of the \CDTE based on $i=7$-th instance of the semi-synthetic IHDP100 dataset \cite{hill2011bayesian}. Pointwise (Makarov) bounds are shown in $\protect\Underline[1.2pt]{\Overline[1.2pt]{\text{\colorbox{darkgray}{gray}}}}$. Also, we show uniformly sharp bounds inferred for $\mathbb{F}(0 \mid x_7) = \mathbb{P}(Y[1] - Y[0] \le 0 \mid x_7)$, namely $\underline{\overline{\mathbb{F}}}{}^0(\delta \mid x_7)$ (shown in \textcolor{red}{\textbf{red}}). We display the lower bound in \textbf{(a)} and the upper bounds in \textbf{(b)}. Notably, the expectation wrt. $\mathbb{F}^0(\delta \mid x_7)$ coincides with the CATE, $\tau(x_7) = \mathbb{E}(\Delta \mid x_7)$. }
    \label{fig:unif-pointwise-example}
\end{figure}

\vspace{-0.2cm}
\subsection{Makarov bounds for mixed-type outcomes}
In the following, we provide a connection of the Makarov bounds for the continuous outcome $Y$ (the main target of our paper), with the bounds on the fraction of the negatively affected (FNA) from \cite{kallus2022what}. The FNA is defined for the binary outcome, $Y \in \{0, 1\}$, as
\begin{align}
    \operatorname{FNA} = \mathbb{P}(Y[1] - Y[0] < 0) = \mathbb{P}(Y[1] - Y[0] \le -1) = \mathbb{E}(\mathbb{F}(\delta = -1 \mid X)).
\end{align}

As suggested by \cite{kallus2022what}, sharp bounds on the FNA are given by
\begin{equation} \label{eq:fna}
    \begin{gathered}
    \underline{\operatorname{FNA}} \le \operatorname{FNA} \le \overline{\operatorname{FNA}}, \\ 
        \underline{\operatorname{FNA}} = - \mathbb{E} \Big[\min\big(0, \tau(X) \big)\Big] \quad \text{and} \quad  \overline{\operatorname{FNA}} = \mathbb{E} \Big[\min \big( \mu_0(X), \,\, 1 - \mu_1(X) \big)\Big].
    \end{gathered}
\end{equation}
To connect the Makarov bounds for continuous outcome and the bounds on the FNA, we formulate the following remark.

\begin{remark}[Makarov bounds for mixed-type outcome \cite{zhang2024bounds}] \label{rem:makarov}
    Sharp bounds on the CDF of the treatment effect for the mixed-type outcome $Y$ are $\underline{\mathbf{F}}(\delta \mid x) \le \mathbb{F}(\delta \mid x) \le \overline{\mathbf{F}}(\delta \mid x)$, which are  
    \begin{equation} \label{eq:mb-cdf-general}
        \begin{gathered}
            \underline{\mathbf{F}}(\delta \mid x) = \left[(\mathbb{F}_1 \, \overline{*} \, (\mathbb{F}_0 - \mathbb{P}_0))_{\mathbb{R}}(\delta \mid x) \right]_+ \quad \text{and} \quad  \overline{\mathbf{F}}(\delta \mid x) = 1 + \left[(\mathbb{F}_1 \, \underline{*} \, (\mathbb{F}_0 - \mathbb{P}_0))_\mathcal{R}(\delta \mid x) \right]_-,
        \end{gathered}
    \end{equation}
    where $\mathbb{P}_0 = \mathbb{P}(Y[0] = \cdot \mid x) = \mathbb{P}(Y = \cdot \mid x, A = 0)$ is a probability mass function.
\end{remark}
Notably, for (absolutely) continuous outcome $\mathbb{P}_0 = 0$, Eq.~\eqref{eq:mb-cdf-general} matches Eq.~\eqref{eq:mb-cdf} of Sec.~\ref{sec:ident}. 

Remark~\ref{rem:makarov} immediately yields the bounds on the FNA, proposed by \cite{kallus2022what}:
    \begingroup\makeatletter\def\f@size{8}\check@mathfonts
    \begin{align}
        \underline{\operatorname{FNA}} &= \mathbb{E}\Big[\max\big(\sup_{y \in \mathbb{R}} \{\mathbb{P}(Y \le y \mid X, A = 1) - \mathbb{P}(Y \le  y + 1 \mid X, A = 0) + \mathbb{P}(Y = y + 1\mid X, A = 0)   \}, 0 \big)\Big] \\
        & = \mathbb{E}\Big[\max\big(0, \,\, \underbrace{0 - \mathbb{P}(Y \le 0 \mid X, A = 0) + \mathbb{P}(Y = 0 \mid X, A = 0)}_{=0}, \,\, \underbrace{0 - \mathbb{P}(Y \le 0 \mid X, A = 0) + 0}_{\le 0}, \\
        & \qquad \qquad \mathbb{P}(Y \le 0 \mid X, A = 1) - 1 + \mathbb{P}(Y = 1 \mid X, A = 0), \,\, \underbrace{\mathbb{P}(Y \le 0 \mid X, A = 1) - 1 + 0}_{\le 0} \big)\Big] \\
        & = \mathbb{E}\Big[\max\big(0, - \mathbb{P}(Y = 1 \mid X, A = 1) + \mathbb{P}(Y = 1 \mid X, A = 0) \big)\Big] = \mathbb{E}\Big[\max\big(0, - \tau(X) \big)\Big] = - \mathbb{E} \Big[\min\big(0, \tau(X) \big)\Big]; \\
        \overline{\operatorname{FNA}} &= 1 + \mathbb{E}\Big[\min\big(\inf_{y \in \mathbb{R}} \{\mathbb{P}(Y \le y \mid X, A = 1) - \mathbb{P}(Y \le  y + 1 \mid X, A = 0) + \mathbb{P}(Y = y + 1\mid X, A = 0)   \}, 0 \big)\Big] \\
        & = 1 + \mathbb{E} \Big[\min \big(0, \,\, \underbrace{0 - \mathbb{P}(Y \le 0 \mid X, A = 0) + \mathbb{P}(Y = 0 \mid X, A = 0)}_{=0}, \,\, 0 - \mathbb{P}(Y \le 0 \mid X, A = 0) + 0, \\
         & \qquad \qquad \qquad \mathbb{P}(Y \le 0 \mid X, A = 1) - 1 + \mathbb{P}(Y = 1 \mid X, A = 0), \,\, \mathbb{P}(Y \le 0 \mid X, A = 1) - 1 + 0 \big)\Big] \\
         & = \mathbb{E} \Big[\min \big( \mathbb{P}(Y = 1 \mid X, A = 0), \,\, 1 + \mathbb{P}(Y = 1 \mid X, A = 0) - \mathbb{P}(Y = 1 \mid X, A = 1), \,\,  1 - \mathbb{P}(Y = 1 \mid X, A = 1) \big)\Big] \\
         & = \mathbb{E} \Big[\min \big( \mu_0(X), \,\, \underbrace{\mu_0(X) + 1 - \mu_1(X)}_{\ge \mu_0(X) \text{ and } \ge 1 - \mu_1(X)}, \,\, 1 - \mu_1(X) \big)\Big] = \mathbb{E} \Big[\min \big( \mu_0(X), \,\, 1 - \mu_1(X) \big)\Big].  
    \end{align} 
    \endgroup

\clearpage
\section{IPTW-learner} \label{app:iptw-cdfs}

In the following, we develop an improved single-stage learner, namely, an inverse propensity of treatment weighted (IPTW) learner. First, we revisit the plug-in learner by defining its estimation objective. Then, we introduce the IPTW-learner, which addresses one of the shortcomings of the plug-in learner. At the end, we mention a surprising property of the IPTW-learner, namely, the orthogonality wrt. to target risk aiming at the potential outcome distributions. 

\textbf{Empirical risk of the plug-in learner.} We assume that the plug-in learner (Sec.~\ref{sec:single-stage-learners}) uses the estimators of the conditional outcome CDFs, $\widehat{\mathbb{F}}_a$, that aim at minimizing the following empirical risk:
\begin{align} \label{eq:plugin-loss}
    \widehat{\mathcal{L}}_{\text{S/T}, a}(\hat{\eta} = \widehat{\mathbb{F}}_a) = \mathbb{P}_n \Big\{\mathbbm{1}\{A = a\} \, l\big(Y, \widehat{\mathbb{F}}_a(\cdot \mid X) \big)  \Big\},
\end{align}
where $l(\cdot, \cdot) > 0$ is a probabilistic loss (\eg, negative log-likelihood, check score, or CRPS with an empirical CDF). Here, the plug-in learner has two possible variants, namely, S- and T-learner, depending on whether the conditional outcome distribution is learned by a single model or two models \cite{kunzel2019metalearners,curth2021nonparametric}.  

\textbf{IPTW-learner.} The IPTW-learner addresses the selection bias of the plug-in learner. For this, it additionally employs an estimated propensity score, $\hat{\pi}$. The estimated propensity score is used to re-weight the original probabilistic loss, $l(\cdot, \cdot)$, in the following way:
\begin{align} \label{eq:iptw-learner}
    \widehat{\mathcal{L}}_{\text{IPTW}, a}(\hat{\eta} = (\hat{\pi}, \widehat{\mathbb{F}}_a)) = \mathbb{P}_n \Bigg\{\frac{\mathbbm{1}\{A = a\}}{\hat{\pi}_a(X)} \, l \big(Y, \widehat{\mathbb{F}}_a(\cdot \mid X) \big) \Bigg\}, 
\end{align}
where $\hat{\pi}_a(x) = a \, \hat{\pi}(x) + (1 - a) \, (1 - \hat{\pi}(x))$. The IPTW-learner up-weights the loss of $\widehat{\mathbb{F}}_0$ and $\widehat{\mathbb{F}}_1$ in the treated and untreated populations, respectively.    

\textbf{Orthogonality of the IPTW-learner.} Interestingly, the IPTW-learner from Eq.~\eqref{eq:iptw-learner} can be seen as an \emph{orthogonal learner} targeting at the following risk:
\begin{equation} \label{eq:pot-out-risk}
    \mathcal{L}_{a}(g) = \mathbb{E} \Big(l \big(Y[a], g(\cdot, X) \big) \Big),
\end{equation}
where $g(\cdot, x) \in \mathcal{G}$ is a working model defined as in Sec.~\ref{sec:two-stage-learners}. This target risk aims to find the best approximation of the conditional potential outcome distribution, $\mathbb{P}(Y[a] \mid x)$, with the working model, $g \in \mathcal{G}$. 

The orthogonality of the IPTW-learner wrt. the target risk aiming at the potential outcome distributions was formally proved in \cite{vansteelandt2023orthogonal}. Therein, the authors notice that the target estimand (\eg, the CDF of one of the potential outcomes) coincides with one of the nuisance functions (\ie, $\mathbb{F}_a$). Informally, the orthogonality follows from the fact that the working model, $g$, simultaneously fits the target estimand and the nuisance function. 

The orthogonality of the IPTW-learner can also be seen by (1)~performing a one-step bias-correction of Eq.~\eqref{eq:plugin-loss} and (2)~setting the same estimator for the working model and the nuisance function, \ie, $g = \widehat{\mathbb{F}}_a$. For example, if the probabilistic loss is the CRPS with the empirical CDF, \mbox{$l(Y, g(\cdot, X)) = \int_{\mathcal{Y}} (\mathbbm{1}\{Y \le y\} - g(y, X))^2 \diff{y}$}, the (1)~one-step bias-corrected loss is given by \cite{balakrishnan2023conservative,ham2024doubly} 
\begingroup\makeatletter\def\f@size{9}\check@mathfonts
\begin{align}
    \widehat{\mathcal{L}}_{\text{DR}, a}(g, \hat{\eta} = (\hat{\pi}, \widehat{\mathbb{F}}_a)) =& \mathbb{P}_n \Bigg\{\frac{\mathbbm{1}\{A = a\}}{\hat{\pi}_a(X)} \bigg(\int_{\mathcal{Y}} (\mathbbm{1}\{Y \le y\} - g(y, X))^2 \diff{y}  \nonumber \\ 
    & - \int_{\mathcal{Y}} \big(\widehat{\mathbb{F}}_a(y \mid X) - g(y, X) \big)^2 \diff{y} \bigg) + \int_{\mathcal{Y}} \big(\widehat{\mathbb{F}}_a(y \mid X) - g(y, X) \big)^2 \diff{y}   \Bigg\},
\end{align}
\endgroup
where $\hat{\pi}_a(x) = a \, \hat{\pi}(x) + (1 - a) \, (1 - \hat{\pi}(x))$. Then, after step (2), $g = \widehat{\mathbb{F}}_a$, we immediately yield the IPTW-learner from Eq.~\eqref{eq:iptw-learner}. Similarly, for the negative log-likelihood loss (NLL) $l(Y, g(\cdot, X)) = - \log g(Y, X)$, the (1)~one-step bias-correction is given by \cite{kennedy2023semiparametric,melnychuk2023normalizing}
\begingroup\makeatletter\def\f@size{9}\check@mathfonts
\begin{align} \label{eq:iptw-log-loss}
    \widehat{\mathcal{L}}_{\text{DR}, a}(g, \hat{\eta} = (\hat{\pi}, \widehat{\mathbb{P}})) =& \mathbb{P}_n \Bigg\{\frac{\mathbbm{1}\{A = a\}}{\hat{\pi}_a(X)} \bigg(- \log g(Y, X) + \, \int_{\mathcal{Y}} \log g(y, X) \, \widehat{\mathbb{P}}(Y = y \mid X, A = a) \diff{y} \bigg) \nonumber \\ 
    &  - \int_{\mathcal{Y}} \log g(y, X) \, \widehat{\mathbb{P}}(Y = y \mid X, A = a) \diff{y}  \Bigg\},
\end{align}
\endgroup
where $\hat{\pi}_a(x) = a \, \hat{\pi}(x) + (1 - a) \, (1 - \hat{\pi}(x))$, and $\widehat{\mathbb{P}}$ is an estimator of the conditional density of the outcome. Again, after (2)~setting $g = \widehat{\mathbb{P}}$, it is easy to see that the minimization of one-step bias corrected loss in Eq.~\eqref{eq:iptw-log-loss} is equivalent to the minimization of the IPTW-learner's objective in Eq.~\eqref{eq:iptw-learner} (where $\widehat{\mathbb{P}}$ is used in place of $\widehat{\mathbb{F}}_a$). This is possible due to two facts: (1)~both entropy terms in Eq.~\eqref{eq:iptw-log-loss}, $-\int_{\mathcal{Y}} \log g(y, X) \, g(y, X) \diff{y}$, only require the minimization wrt. to the working model $g$ under the logarithm; and (2)~these entropies are minimal as the cross-entropy for any distribution is minimal when evaluated with itself.

\clearpage
\section{Proofs} \label{app:proofs}

In the following, we provide the main theoretical results of our paper. We use the following additional notation: $\delta\{\cdot\}$ is a Dirac delta function, $a \lesssim b$ means there exists $C \ge 0$ such that $a \le C \cdot b$, and $X_n = o_\mathbb{P}(r_n)$ means $X_n/r_n \xrightarrow{p} 0$. Also, in the following theorems, we use \textcolor{BrickRed}{red color} to show the nuisance functions of $\textcolor{BrickRed}{\mathbb{P}}$ that are influencing the target estimand.

\vspace{-0.2cm}
\subsection{Efficient influence functions}

We start with deriving the efficient influence functions for the average Makarov bounds and, afterwards, for the target risks. For that, we make two mild assumptions: (1)~one the conditional outcome distributions and (2)~another on the set of $\delta$ where linear rectifiers are differentiable. These assumptions allow us to (1)~handle sup-/inf-convolutions as max-/min-convolutions with a finite number of argmax/argmin values and to (2)~get a derivative of the linear rectifiers.

\begin{assump}{1}[Finite argument sets] \label{ass:finite-arg}
    We assume that the outcome space $\mathcal{Y}$ is compact. Also, we assume that conditional outcome CDFs, $\mathbb{F}_a(y \mid x)$, are continuously differentiable and consist of a finite number of strictly concave / convex regions.  
\end{assump}

Assumption~\ref{ass:finite-arg} implies that sup-/inf-convolutions are achieved by some finite set of values in $\mathcal{Y}$. Furthermore, Assumption~\ref{ass:finite-arg} is a special case of the margin assumption (Assumption 3.2) from \cite{semenova2023adaptive}. Yet, we find our version to be more interpretable and many regular distributions satisfy it, \eg, an exponential family, finite mixtures of normal distributions, etc.  

\begin{assump}{2}[Differentiability of linear rectifiers] \label{ass:lin-rect}
      Values of $\delta \in \mathit{\Delta}$ are considered to satisfy differentiability of linear rectifier for some $x \in \mathcal{X}$ if $({{\mathbb{F}}_1} \, \underline{{\overline{*}}} \, {\mathbb{F}_0})_{\mathcal{Y}}(\delta \mid x) \neq 0 $.
\end{assump}

It is easy to see that, if Assumption~\ref{ass:finite-arg} holds, the new Assumption~\ref{ass:lin-rect} will hold for almost all $\delta \in \mathit{\Delta}$ and $x \in \mathcal{X}$, namely, $\mathbb{P}\big\{({{\mathbb{F}}_1} \, \underline{{\overline{*}}} \, {\mathbb{F}_0})_{\mathcal{Y}}(\delta \mid X) = 0 \} = 0$ or, equivalently, $\mathbb{P}\big\{ \abs{{{\mathbb{F}}_1} \, \underline{{\overline{*}}} \, {\mathbb{F}_0})_{\mathcal{Y}}(\delta \mid X)} \le \xi \} \le \xi$ for some $\xi > 0$. Also, no additional requirements are needed for the values of $\alpha \in [0, 1]$. 
Notably, the Assumption~\ref{ass:lin-rect} is analogous to the margin assumptions, \eg, Assumption 3.2 of \cite{luedtke2016statistical} or Eq. (3) of \cite{semenova2023adaptive}.  
Therefore, the formulation of the following theorem is not too restrictive.

\begin{customthm}{1}[Efficient influence function for Makarov bounds] \label{thrm:eif-app}
    Let $\mathbb{P}$ denotes $\mathbb{P}(Z) = \mathbb{P}(X, A, Y)$ and let $\textcolor{BrickRed}{y^{{\overline{*}}}_{\mathcal{Y}}}(\cdot \mid x)$ and $\textcolor{BrickRed}{u^{{\underline{*}}}_{[\alpha, 1]}}(\cdot \mid x)$  be argmax/argmin sets of the convolutions $(\textcolor{BrickRed}{\mathbb{F}_1} \, {\overline{*}} \, \textcolor{BrickRed}{\mathbb{F}_0})_{\mathcal{Y}}(\cdot \mid x)$ and $(\textcolor{BrickRed}{\mathbb{F}_1^{-1}} \, \underline{{*}} \, \textcolor{BrickRed}{\mathbb{F}_0^{-1}})_{[\alpha, 1]}(\cdot - 0 \mid x)$, respectively. Then, under the mild assumption of the finite argument sets (Assumption~\ref{ass:finite-arg}), average Makarov bounds are pathwise differentiable for values of $\delta \in \Delta$ that satisfy the differentiability of linear rectifiers (Assumption~\ref{ass:lin-rect}) and for all values of $\alpha \in (0, 1)$. Further, the corresponding efficient influence functions, $\phi$, are as follows:
    \footnotesize
    \begin{align}
        \phi(\mathbb{E}\big(\overline{\underline{\mathbf{F}}}(\delta \mid X)\big); \textcolor{BrickRed}{\mathbb{P}})  &= {\overline{\underline{{C}}}}(\delta, Z; \textcolor{BrickRed}{\eta}) + \overline{\underline{\mathbf{F}}}(\delta \mid X; \textcolor{BrickRed}{\eta}) - \textcolor{BrickRed}{\mathbb{E}}\big(\overline{\underline{\mathbf{F}}}(\delta \mid X; \textcolor{BrickRed}{\eta})\big), \\ 
        \phi(\mathbb{E}\big(\overline{\underline{\mathbf{F}}}{}^{-1}(\alpha \mid X)\big); \textcolor{BrickRed}{\mathbb{P}}) &= {\overline{\underline{{C}}}{}^{-1}}(\alpha, Z; \textcolor{BrickRed}{\eta}) + \overline{\underline{\mathbf{F}}}{}^{-1}(\alpha \mid X; \textcolor{BrickRed}{\eta}) - \textcolor{BrickRed}{\mathbb{E}}\big(\overline{\underline{\mathbf{F}}}{}^{-1}(\alpha \mid X; \textcolor{BrickRed}{\eta})\big),  \\
        \begin{split}
             {\underline{{C}}}(\delta, Z; \textcolor{BrickRed}{\eta}) &= 
            \mathbbm{1}\big\{(\textcolor{BrickRed}{{\mathbb{F}}_1} \, {\overline{*}} \, \textcolor{BrickRed}{\mathbb{F}_0})_{\mathcal{Y}}(\delta \mid X) > 0 \big\} \bigg[ \frac{A}{\textcolor{BrickRed}{\pi}(X)} \Big( \mathbbm{1}\{Y \le  y^{*}\} - \textcolor{BrickRed}{\mathbb{F}_1}\big(y^{*} \mid X\big)\Big) \label{eq:bias-corr-crps-app} \\
            & \qquad - \frac{1-A}{1-\textcolor{BrickRed}{\pi}(X)} \Big( \mathbbm{1}\{Y \le  y^{*} - \delta \} - \textcolor{BrickRed}{\mathbb{F}_0}\big(y^{*} - \delta \mid X\big)\Big) \bigg],
        \end{split} \\
        \begin{split}
            {\underline{C}{}^{-1}}(\alpha, Z; \textcolor{BrickRed}{\eta}) & = \frac{A}{\textcolor{BrickRed}{\pi}(X)} \Bigg(\frac{\mathbbm{1}\{Y \le  \textcolor{BrickRed}{\mathbb{F}_1^{-1}}\big( u^* \mid X \big) \} - u^*}{\textcolor{BrickRed}{\mathbb{P}}\big(Y = \textcolor{BrickRed}{\mathbb{F}_1^{-1}}( u^* \mid X ) \mid X, A = 1 \big)} \Bigg) \label{eq:bias-corr-w22-app} \\
            & \qquad - \frac{1-A}{1-\textcolor{BrickRed}{\pi}(X)} \Bigg( \frac{\mathbbm{1}\{Y \le  \textcolor{BrickRed}{\mathbb{F}_0^{-1}}\big( u^* - \alpha + 0\mid X \big) \} - \big(u^* - \alpha + 0\big)}{\textcolor{BrickRed}{\mathbb{P}}\big(Y = \textcolor{BrickRed}{\mathbb{F}_0^{-1}}( u^* - \alpha + 0 \mid X ) \mid X, A = 0\big)} \Bigg),
        \end{split}
    \end{align}
    \normalsize
    where $y^*$ is some value from $\textcolor{BrickRed}{y^{{\overline{*}}}_\mathcal{Y}}(\delta \mid X)$; $u^{*}$ is some value from $\textcolor{BrickRed}{u^{{\underline{*}}}_{[\alpha, 1]}}(\alpha \mid X)$; and ${\overline{{C}}}(\delta, Z; \textcolor{BrickRed}{\eta})$ and ${\overline{{C}}{}^{-1}}(\delta, Z; \textcolor{BrickRed}{\eta})$ can be then obtained by swapping the symbols \mbox{$\{ \overline{*}, >, y^{{\overline{*}}}_{\mathcal{Y}}, u^{{\underline{*}}}_{[\alpha, 1]}, -0,  +0\}$} to \mbox{$\{ \underline{*}, <, y^{{\underline{*}}}_{\mathcal{Y}}, u^{{\overline{*}}}_{[0, \alpha]}, -1, +1\}$}.
\end{customthm}

\begin{proof}
    We start the proof by employing the properties of the efficient influence functions, namely, product rule and chain rule; and some existing efficient influence functions (\eg, for conditional expectation \cite{kennedy2022semiparametric}).  

    The efficient influence function of the lower averaged Makarov bound is as follows:
    \begin{align}
         & \phi(\mathbb{E}\big(\underline{\mathbf{F}}(\delta \mid X)\big); \mathbb{P}) = \mathbb{IF}\Big(\mathbb{E}\big(\underline{\mathbf{F}}(\delta \mid X)\big)\Big) \\ 
         =& \int_{\mathcal{X}} \bigg[ \mathbb{IF}\big(\underline{\mathbf{F}}(\delta \mid x)\big) \, \mathbb{P}(X = x) + \underline{\mathbf{F}}(\delta \mid x) \, \mathbb{IF}\big( \mathbb{P}(X = x) \big)  \bigg] \diff x \\
         =& \int_{\mathcal{X}} \bigg[ \mathbb{IF}\big(\underline{\mathbf{F}}(\delta \mid x)\big) \, \mathbb{P}(X = x) + \underline{\mathbf{F}}(\delta \mid x) \, \big\{ \delta\{X - x\} - \mathbb{P}(X = x) \big\} \bigg] \diff x \\
         =& \underbrace{\int_{\mathcal{X}} \mathbb{IF}\big(\underline{\mathbf{F}}(\delta \mid x)\big) \, \mathbb{P}(X = x)  \diff x}_{(*)} + \underline{\mathbf{F}}(\delta \mid X) - \mathbb{E}\big(\underline{\mathbf{F}}(\delta \mid X)\big). \label{eq:eif-first-term}
    \end{align}
   Furthermore, the inner term is
   \begin{align}
       & \mathbb{IF}\big(\underline{\mathbf{F}}(\delta \mid x)\big) = \mathbb{IF}\big(\left[(\mathbb{F}_1 \, \overline{*} \, \mathbb{F}_0)_{\mathcal{Y}}(\delta \mid x) \right]_+\big) \\
       =& \begin{cases}
           \mathbb{IF}\big((\mathbb{F}_1 \, \overline{*} \, \mathbb{F}_0)_{\mathcal{Y}}(\delta \mid x)\big), & \text{if} \quad (\mathbb{F}_1 \, \overline{*} \, \mathbb{F}_0)_{\mathcal{Y}}(\delta \mid x) > 0, \\
           0, & \text{otherwise}. 
       \end{cases}
   \end{align}
    Then, the efficient influence function of the sup-convolution can be derived under the finite argument sets assumption and using the envelope theorem (we refer to Appendix C.2.6 and C.2.7 of \cite{luedtke2024simplifying} for further details):
    \begingroup\makeatletter\def\f@size{9}\check@mathfonts
    \begin{align}
        & \mathbb{IF}\big((\mathbb{F}_1 \, \overline{*} \, \mathbb{F}_0)_{\mathcal{Y}}(\delta \mid x)\big)= \mathbb{IF}\bigg(\sup_{y \in \mathcal{Y}} \{\mathbb{F}_1(y \mid x) - \mathbb{F}_0(y - \delta \mid x)\} \bigg) 
        =\mathbb{IF} \big(\mathbb{F}_1(y^* \mid x) - \mathbb{F}_0(y^* - \delta \mid x) \big),
    \end{align}
    \endgroup
    where $y^*$ is some value from ${y^{{\overline{*}}}_\mathcal{Y}}(\delta \mid x)$. The latter can be expanded by using product and chain rules and the efficient influence function for the conditional expectation. Specifically, we do the following:
    \begingroup\makeatletter\def\f@size{9}\check@mathfonts
    \begin{align}
         & \mathbb{IF} \big( \mathbb{F}_1(y^* \mid x) - \mathbb{F}_0(y^* - \delta \mid x) \big) \label{eq:eif-cond-exp}
        = \mathbb{IF} \big( \mathbb{F}_1(y^* \mid x)\big) - \mathbb{IF} \big(\mathbb{F}_0(y^* - \delta \mid x)\big) \\
         =& \, \mathbb{IF} \big( \mathbb{E}(\mathbbm{1}\{Y \le y^*\} \mid x, A = 1)\big) - \mathbb{IF} \big(\mathbb{E}(\mathbbm{1}\{Y \le y^* - \delta\} \mid x, A = 0)\big) \\
         =&  \int_{\mathcal{Y}} \bigg[ \underbrace{\mathbb{IF} \big(\mathbbm{1}\{y \le y^*\} \big) \, \mathbb{P}(Y = y \mid x, A = 1)}_{(1)} + \underbrace{\mathbbm{1}\{y \le y^*\} \,  \mathbb{IF} \big( \mathbb{P}(Y = y \mid x, A = 1) \big)}_{(2)} \bigg] \diff{y} \\
         & \quad - \int_{\mathcal{Y}} \bigg[ \underbrace{\mathbb{IF} \big(\mathbbm{1}\{y \le y^* - \delta\} \big) \, \mathbb{P}(Y = y \mid x, A = 0)}_{(3)} + \underbrace{\mathbbm{1}\{y \le y^* - \delta\} \, \mathbb{IF} \big( \mathbb{P}(Y = y \mid x, A = 0)}_{(4)} \big) \bigg] \diff{y}. \nonumber
    \end{align}
    \endgroup
    The terms $(2)$ and $(4)$ yield well-known efficient influence function for CDFs, \ie,
    \begingroup\makeatletter\def\f@size{9}\check@mathfonts
    \begin{align}
        & \int_{\mathcal{Y}} \bigg[ \mathbbm{1}\{y \le y^*\} \,  \mathbb{IF} \big( \mathbb{P}(Y = y \mid x, A = 1) \big) - \mathbbm{1}\{y \le y^* - \delta\} \, \mathbb{IF} \big( \mathbb{P}(Y = y \mid x, A = 0) \bigg] \diff{y} \\ 
        =& \bigg(\frac{A \, \delta\{X - x\}}{\mathbb{P}(X = x, A = 1)} \Big(\mathbbm{1}\{Y \le y^*\} - \mathbb{F}_1(y^* \mid x) \Big) - \frac{(1-A) \, \delta\{X - x\}}{\mathbb{P}(X = x, A = 0)} \Big(\mathbbm{1}\{Y \le y^* - \delta\} - \mathbb{F}_0(y^* - \delta \mid x) \Big)\bigg). \label{eq:eif-cond-cdf}
    \end{align}
    \endgroup
    
    Let us now consider the remaining terms $(1)$ and $(3)$:
    \begingroup\makeatletter\def\f@size{9}\check@mathfonts
    \begin{align}
        & \int_{\mathcal{Y}} \bigg[ \mathbb{IF} \big(\mathbbm{1}\{y \le y^*\} \big) \, \mathbb{P}(Y = y \mid x, A = 1) - \mathbb{IF} \big(\mathbbm{1}\{y \le y^* - \delta\} \big) \, \mathbb{P}(Y = y \mid x, A = 0) \bigg] \diff{y} \\
        =&  \int_{\mathcal{Y}} \bigg[-\delta\{ y - y^{{{*}}}\} \, \mathbb{IF}(y^{{{*}}}) \, \mathbb{P}(Y = y \mid x, A = 1) + \delta\{ y - (y^{{{*}}} - \delta)\} \, \mathbb{IF}(y^{{{*}}}) \, \mathbb{P}(Y = y \mid x, A = 0) \bigg] \diff{y} \\
        =& - \mathbb{IF}(y^{{{*}}}) \Big( \mathbb{P}(Y = y^* \mid x, A = 1) - \mathbb{P}(Y = y^* - \delta \mid x, A = 0)\Big) = 0, \label{eq:conv-first-order-insensitivity}
    \end{align}
    \endgroup
    where the last equality holds due to the properties of the argmax, $y^* \in y^{\overline{*}}_\mathcal{Y}(\delta \mid x)$. Namely, under the necessary condition for a local maximum, we have $\frac{\diff}{\diff{y}}(\mathbb{F}_1(y \mid x) - \mathbb{F}_0(y - \delta \mid x)) = \mathbb{P}(Y = y \mid x, A = 1) - \mathbb{P}(Y = y - \delta \mid x, A = 0) = 0$. In the context of the efficient influence functions, this means that the Makarov bounds are first-order insensitive to the misspecification of argmax/argmin. Interestingly, a similar result was demonstrated for the efficient influence functions of the policy values of the optimal policies \cite{luedtke2016evaluating,luedtke2016super}. 

    Finally, we obtain an efficient influence function for the lower Makarov bound by expanding the part $(*)$ of Eq.~\eqref{eq:eif-first-term}:
    \begingroup\makeatletter\def\f@size{9}\check@mathfonts
    \begin{align}
        & \int_{\mathcal{X}} \mathbb{IF}\big(\underline{\mathbf{F}}(\delta \mid x)\big) \, \mathbb{P}(X = x)  \diff x  \\
         =& \int_{\mathcal{X}}  \Bigg[ \mathbbm{1}\big\{({{\mathbb{F}}_1} \, {\overline{*}} \,{\mathbb{F}_0})_{\mathcal{Y}}(\delta \mid x) > 0 \big\} \bigg(\frac{A \, \delta\{X - x\}}{\mathbb{P}(X = x, A = 1)} \Big(\mathbbm{1}\{Y \le y^*\}  - \mathbb{F}_1(y^* \mid x) \Big) \\
         & \qquad \qquad - \frac{(1-A) \, \delta\{X - x\}}{\mathbb{P}(X = x, A = 0)} \Big(\mathbbm{1}\{Y \le y^* - \delta\} - \mathbb{F}_0(y^* - \delta \mid x) \Big)\bigg) \Bigg] \mathbb{P}(X = x) \diff{x} \nonumber \\
         =& \, {\underline{{C}}}(\delta, Z; \eta), 
    \end{align}
    \endgroup
    where ${\underline{{C}}}(\delta, Z; \eta)$ is defined in Eq.~\eqref{eq:bias-corr-crps-app}.
    
    The Makarov bounds on the quantiles are derived analogously. However, several last steps are different. The differences start from Eq.~\eqref{eq:eif-cond-exp}:
    \begin{align}
         & \mathbb{IF} \big( \mathbb{F}_1^{-1}(u^* \mid x) - \mathbb{F}_0^{-1}(u^* - \alpha \mid x) \big) = \mathbb{IF} \big( \mathbb{F}_1^{-1}(u^* \mid x)\big) - \mathbb{IF} \big(\mathbb{F}_0^{-1}(u^* - \alpha \mid x)\big), 
    \end{align}
    where $u^{*}$ is some value from ${u^{{\underline{*}}}_{[\alpha, 1]}}(\alpha \mid x)$.
    
    Now, again, the Makarov bounds for quantiles are first-order insensitive wrt. argmax/argmin values misspecification. Thus, we can employ the efficient influence function for the quantiles \cite{firpo2007efficient,firpo2009unconditional}: 
    \begingroup\makeatletter\def\f@size{9}\check@mathfonts
    \begin{align}
        & \mathbb{IF} \big( \mathbb{F}_1^{-1}(u^* \mid x)\big) - \mathbb{IF} \big(\mathbb{F}_0^{-1}(u^* - \alpha \mid x)\big) \\
        = & \frac{A \, \delta\{X - x\}}{\mathbb{P}(X = x, A = 1)} \bigg(\frac{\mathbbm{1}\{Y \le  {\mathbb{F}_1^{-1}}\big( u^* \mid x \big) \} - u^*}{{\mathbb{P}}\big(Y = {\mathbb{F}_1^{-1}}( u^* \mid x ) \mid x, A = 1 \big)} \bigg) \\
        & \qquad - \frac{(1-A) \, \delta\{X - x\}}{\mathbb{P}(X = x, A = 0)} \bigg(\frac{\mathbbm{1}\{Y \le {\mathbb{F}_0^{-1}}\big( u^* - \alpha \mid x \big) \} - \big(u^* - \alpha \big)}{{\mathbb{P}}\big(Y = {\mathbb{F}_0^{-1}}( u^* - \alpha \mid x ) \mid x, A = 0\big)} \bigg) . 
    \end{align}
    \endgroup
    The latter then yields the final part $(*)$ of Eq.~\eqref{eq:eif-first-term} for the efficient influence function:
    \begingroup\makeatletter\def\f@size{9}\check@mathfonts
    \begin{align}
        & \int_{\mathcal{X}} \mathbb{IF}\big(\underline{\mathbf{F}}{}^{-1}(\alpha \mid x)\big) \, \mathbb{P}(X = x)  \diff x  \\
         =& \int_{\mathcal{X}}  \Bigg[ \frac{A \, \delta\{X - x\}}{\mathbb{P}(X = x, A = 1)} \bigg(\frac{\mathbbm{1}\{Y \le  {\mathbb{F}_1^{-1}}\big( u^* \mid x \big) \} - u^*}{{\mathbb{P}}\big(Y = {\mathbb{F}_1^{-1}}( u^* \mid x ) \mid x, A = 1 \big)} \bigg) \\
         & \qquad \qquad - \frac{(1-A) \, \delta\{X - x\}}{\mathbb{P}(X = x, A = 0)} \bigg(\frac{\mathbbm{1}\{Y \le {\mathbb{F}_0^{-1}}\big( u^* - \alpha \mid x \big) \} - \big(u^* - \alpha \big)}{{\mathbb{P}}\big(Y = {\mathbb{F}_0^{-1}}( u^* - \alpha \mid x ) \mid x, A = 0\big)} \bigg) \Bigg] \mathbb{P}(X = x) \diff{x} \nonumber \\
         =& {\underline{{C}}}^{-1}(\alpha, Z; \eta), 
    \end{align}
    \endgroup
    where ${\underline{{C}}}^{-1}(\alpha, Z; \eta)$ is defined in Eq.~\eqref{eq:bias-corr-w22-app}.
\end{proof}

\begin{cor}[Efficient influence functions of the target risks] \label{cor:eif-risks}
    Let the Assumption~\ref{ass:finite-arg} of the Theorem~\ref{thrm:eif-app} hold. Then, the efficient influence functions, $\phi$, of the target risks in Eq.~\eqref{eq:risk-crps} and Eq.~\eqref{eq:risk-w22} are as follows:
    \begingroup\makeatletter\def\f@size{9}\check@mathfonts
    \begin{align}
        \phi(\overline{\underline{\mathcal{L}}}_\text{CRPS}(g); \textcolor{BrickRed}{\mathbb{P}}) &=   \int_{\mathit{\Delta}} \Big[ 2 \left(\underline{\overline{\mathbf{F}}}(\delta \mid X; \textcolor{BrickRed}{\eta}) - g(\delta, X) \right) \, {\underline{\overline{C}}}(\delta, Z; \textcolor{BrickRed}{\eta}) + \left(\underline{\overline{\mathbf{F}}}(\delta \mid X; \textcolor{BrickRed}{\eta}) - g(\delta, X) \right)^2 \Big] \diff \delta \\ 
        & \qquad - \textcolor{BrickRed}{\mathbb{E}} \left(\int_{\mathit{\Delta}}\left(\underline{\overline{\mathbf{F}}}(\delta \mid X; \textcolor{BrickRed}{\eta}) - g(\delta, X) \right)^2 \diff{\delta} \right), \nonumber \\
        \phi(\overline{\underline{\mathcal{L}}}_{W^2_2}(g^{-1}); \textcolor{BrickRed}{\mathbb{P}}) &= \int_0^1 \Big[ 2 \left(\underline{\overline{\mathbf{F}}}{}^{-1}(\alpha \mid X; \textcolor{BrickRed}{\eta}) - g^{-1}(\alpha, X) \right) \, {\underline{\overline{C}}}{}^{-1}(\alpha, Z; \textcolor{BrickRed}{\eta}) + \left(\underline{\overline{\mathbf{F}}}{}^{-1}(\alpha \mid X; \textcolor{BrickRed}{\eta}) - g^{-1}(\alpha, X) \right)^2 \Big] \diff \alpha \\ 
        & \qquad - \textcolor{BrickRed}{\mathbb{E}} \left(\int_0^1\left(\underline{\overline{\mathbf{F}}}{}^{-1}(\alpha \mid X; \textcolor{BrickRed}{\eta}) - g^{-1}(\alpha, X) \right)^2 \diff{\alpha} \right), \nonumber
    \end{align}
    \endgroup 
    where $\underline{\overline{C}}(\delta, Z; \textcolor{BrickRed}{\eta})$ and ${\underline{\overline{C}}}{}^{-1}(\alpha, Z; \textcolor{BrickRed}{\eta})$ are defined in Eq.~\eqref{eq:bias-corr-crps-app} and \eqref{eq:bias-corr-w22-app}, respectively.
\end{cor}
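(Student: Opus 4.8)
The plan is to obtain $\phi(\overline{\underline{\mathcal{L}}}_\text{CRPS}(g);\mathbb{P})$ and $\phi(\overline{\underline{\mathcal{L}}}_{W^2_2}(g^{-1});\mathbb{P})$ by re-running the argument in the proof of Theorem~\ref{thrm:eif-app} with the scalar target replaced by the risk functionals, the only genuinely new ingredient being the chain rule applied to the squared integrand. Concretely, for the CRPS risk I would first use Fubini to write $\overline{\underline{\mathcal{L}}}_\text{CRPS}(g) = \int_{\mathit{\Delta}} \mathbb{E}\big[\psi_\delta(X)\big]\,\diff{\delta}$ with $\psi_\delta(x) = \big(\underline{\overline{\mathbf{F}}}(\delta\mid x;\mathbb{P}) - g(\delta,x)\big)^2$, noting that the working model $g$ is fixed and does \emph{not} depend on $\mathbb{P}$, so it contributes nothing to the pathwise derivative. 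One then argues (see the obstacle below) that the pathwise-derivative operator commutes with $\int_{\mathit{\Delta}}$, so that $\phi(\overline{\underline{\mathcal{L}}}_\text{CRPS}(g);\mathbb{P}) = \int_{\mathit{\Delta}} \phi\big(\mathbb{E}[\psi_\delta(X)];\mathbb{P}\big)\,\diff{\delta}$.

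For fixed $\delta$, the functional $\mathbb{P}\mapsto\mathbb{E}[\psi_\delta(X)]$ has exactly the structure handled in the proof of Theorem~\ref{thrm:eif-app}: it is the $X$-expectation of a quantity depending on $\mathbb{P}$ both through the outcome law entering $\underline{\overline{\mathbf{F}}}$ and through the $X$-marginal. The product rule therefore gives $\phi(\mathbb{E}[\psi_\delta(X)]) = \big(\int_{\mathcal{X}} \mathbb{IF}(\psi_\delta(x))\,\mathbb{P}(X=x)\,\diff{x}\big) + \psi_\delta(X) - \mathbb{E}[\psi_\delta(X)]$, mirroring Eq.~\eqref{eq:eif-first-term}. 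By the chain rule, $\mathbb{IF}(\psi_\delta(x)) = 2\big(\underline{\overline{\mathbf{F}}}(\delta\mid x;\eta) - g(\delta,x)\big)\,\mathbb{IF}\big(\underline{\overline{\mathbf{F}}}(\delta\mid x;\mathbb{P})\big)$, and $\mathbb{IF}\big(\underline{\overline{\mathbf{F}}}(\delta\mid x;\mathbb{P})\big)$ was already computed in the proof of Theorem~\ref{thrm:eif-app}: it carries the Dirac factor $\delta\{X-x\}$ from Eqs.~\eqref{eq:eif-cond-cdf}--\eqref{eq:conv-first-order-insensitivity}, while the terms that would arise from perturbing the argmax location vanish by the envelope theorem, exactly as in Eq.~\eqref{eq:conv-first-order-insensitivity}. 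Consequently, integrating $\mathbb{IF}(\psi_\delta(x))$ against $\mathbb{P}(X=x)$ collapses $x$ to $X$, reproduces precisely the correction term $\underline{\overline{C}}(\delta, Z;\eta)$ of Eq.~\eqref{eq:bias-corr-crps-app}, and leaves the leftover prefactor $2\big(\underline{\overline{\mathbf{F}}}(\delta\mid X;\eta) - g(\delta,X)\big)$. Substituting back and applying Fubini once more to the centering term yields the claimed formula for $\phi(\overline{\underline{\mathcal{L}}}_\text{CRPS}(g);\mathbb{P})$. The $W^2_2$ case is identical after replacing $\underline{\overline{\mathbf{F}}}$ by $\underline{\overline{\mathbf{F}}}{}^{-1}$, $\delta$ by $\alpha$, the CDF influence function by the quantile influence function (already derived in the proof of Theorem~\ref{thrm:eif-app}), and $\underline{\overline{C}}$ by $\underline{\overline{C}}{}^{-1}$ from Eq.~\eqref{eq:bias-corr-w22-app}; no values of $\delta$ need be excluded, since those violating differentiability of the linear rectifier (Assumption~\ref{ass:lin-rect}) form a $\mathbb{P}$-null set and hence do not affect the $\delta$-integral -- which is exactly why only Assumption~\ref{ass:finite-arg} is needed here.

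The product-/chain-rule bookkeeping is routine and simply inherits the computations of Theorem~\ref{thrm:eif-app}. The step that genuinely needs care is justifying that the pathwise-derivative operator may be exchanged with $\int_{\mathit{\Delta}}$ (and with the $X$-expectation), \ie that the composed functional is pathwise differentiable and that one may differentiate under the integral sign. I would establish this by a dominated-convergence argument: $\underline{\overline{\mathbf{F}}}$ and $g$ are uniformly bounded in $[0,1]$; $\mathcal{Y}$ and hence $\mathit{\Delta}$ are compact (Assumption~\ref{ass:finite-arg}), so the integrand has bounded support in $\delta$; overlap keeps $\pi(X)$ bounded away from $0$ and $1$; and, for the $W^2_2$ risk, the mild conditions of Theorem~\ref{thrm:eif-app} keep the conditional densities $\mathbb{P}\big(Y = \mathbb{F}_a^{-1}(\cdot\mid X)\mid X, A=a\big)$ bounded away from zero on the relevant quantile range. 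Together these give a square-integrable envelope for the integrand of the efficient influence function, which is all that is needed both for the regularity of the composed functional and for passing the derivative inside the integral.
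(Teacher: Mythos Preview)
Your proposal is correct and follows essentially the same approach as the paper's proof: both apply the product and chain rules to the expectation-of-a-square functional, reduce to the already-computed $\mathbb{IF}\big(\underline{\overline{\mathbf{F}}}(\delta\mid x)\big)$ from Theorem~\ref{thrm:eif-app}, and integrate against $\mathbb{P}(X=x)$ to recover $\underline{\overline{C}}$. Your version is in fact slightly more careful than the paper's, since you explicitly justify the interchange of the pathwise derivative with the $\delta$-integral via dominated convergence and explain why only Assumption~\ref{ass:finite-arg} (not Assumption~\ref{ass:lin-rect}) is needed.
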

\begin{proof}
    We start by using the properties of the efficient influence function, namely, chain and product rules. Then, the efficient influence function for the CRPS risk of the lower bound is as follows
    \begin{align}
        & \phi({\underline{\mathcal{L}}}_\text{CRPS}(g); \mathbb{P}) = \mathbb{IF}\bigg(\mathbb{E} \left(\int_{\mathit{\Delta}}\left(\underline{{\mathbf{F}}}(\delta \mid X) - g(\delta, X) \right)^2 \diff{\delta} \right)\bigg) \\
        & = \underbrace{\int_{\mathcal{X}} \mathbb{IF}\bigg(\int_{\mathit{\Delta}}\left(\underline{{\mathbf{F}}}(\delta \mid x) - g(\delta, x) \right)^2 \diff{\delta} \bigg) \, \mathbb{P}(X = x)  \diff x}_{(*)} + \int_{\mathit{\Delta}}\left(\underline{{\mathbf{F}}}(\delta \mid X) - g(\delta, X) \right)^2 \diff{\delta} \label{eq:eif-risk-star}\\ 
        & - \mathbb{E} \left(\int_{\mathit{\Delta}}\left(\underline{{\mathbf{F}}}(\delta \mid X) - g(\delta, X) \right)^2 \diff{\delta} \right).
    \end{align}
    We then note, that the inner term of the $(*)$ can be expanded as:
    \begin{align}
        & \mathbb{IF}\bigg(\int_{\mathit{\Delta}}\left(\underline{{\mathbf{F}}}(\delta \mid x) - g(\delta, x) \right)^2 \diff{\delta} \bigg) = \int_{\mathit{\Delta}} \mathbb{IF} \Big(\left(\underline{{\mathbf{F}}}(\delta \mid x) - g(\delta, x) \right)^2 \Big) \diff{\delta} \nonumber \\
        =& 2 \int_{\mathit{\Delta}} \left(\underline{{\mathbf{F}}}(\delta \mid x) - g(\delta, x) \right) \, \mathbb{IF}\big(\underline{{\mathbf{F}}}(\delta \mid x) \big) \diff{\delta}.
    \end{align}
    Finally, the term $(*)$ of Eq.~\eqref{eq:eif-risk-star} equals to
    \begin{align}
        & \int_{\mathcal{X}} \mathbb{IF}\bigg(\int_{\mathit{\Delta}}\left(\underline{{\mathbf{F}}}(\delta \mid x) - g(\delta, x) \right)^2 \diff{\delta} \bigg) \, \mathbb{P}(X = x)  \diff x \\ 
        =&  2 \int_{\mathcal{X}} \int_{\mathit{\Delta}}  \left(\underline{{\mathbf{F}}}(\delta \mid x) - g(\delta, x) \right) \, \mathbb{IF}\big(\underline{{\mathbf{F}}}(\delta \mid x) \big) \, \mathbb{P}(X = x)  \diff \delta \diff x \\
        =& 2  \int_{\mathit{\Delta}}  \left(\underline{{\mathbf{F}}}(\delta \mid X) - g(\delta, X) \right) \, {\underline{{C}}}(\delta, Z; \eta) \diff \delta.
    \end{align}

    The derivations for the upper bound and for the $W_2^2$ target risk is fully analogous.
\end{proof}

\begin{cor}[One-step bias-corrected estimator of the target risks] \label{cor:one-step-risks}
    Let the Assumption~\ref{ass:finite-arg} of the Theorem~\ref{thrm:eif-app} hold. Then, the $\gamma$-scaled one-step bias-corrected estimator of the target risks Eq.~\eqref{eq:risk-crps} and Eq.~\eqref{eq:risk-w22} is given by the following:
    \begin{align}
        \widehat{\overline{\underline{\mathcal{L}}}}_\text{AU, CRPS}(g, \hat{\eta} & = (\hat{\pi}, \widehat{\mathbb{F}}_0, \widehat{\mathbb{F}}_1)) = \mathbb{P}_n\Big\{\int_{\mathit{\Delta}}\left( {\underline{\overline{\mathbf{F}}}}_{\text{AU}}(\delta, Z; \hat{\eta}, \gamma) - g(\delta, X) \right)^2 \diff{\delta} \Big\}, \label{eq:au-crps-app} \\
        \widehat{\overline{\underline{\mathcal{L}}}}_{\text{AU}, W^2_2}(g^{-1}, \hat{\eta} & = (\hat{\pi}, \widehat{\mathbb{F}}_0^{-1}, \widehat{\mathbb{F}}_1^{-1})) = \mathbb{P}_n\Big\{ \int_{0}^1\left( \underline{\overline{\mathbf{F}}}_{\text{AU}}{\!\!\!\!}^{-1}(\alpha, Z; \hat{\eta}, \gamma) - g^{-1}(\alpha, X)  \right)^2 \diff{\alpha} \Big\}, \label{eq:au-w22-app} \\
        {\underline{\overline{\mathbf{F}}}}_{\text{AU}}(\delta, Z; \hat{\eta}, \gamma) &= {\underline{\overline{\mathbf{F}}}}_{\text{PI}}(\delta \mid X; \hat{\eta}) +  \gamma \,\underline{\overline{C}}(\delta, Z; \hat{\eta}) \quad  \text{and} \quad \underline{\overline{\mathbf{F}}}_{\text{AU}}{\!\!\!\!}^{-1}(\alpha, Z; \hat{\eta}, \gamma) = \underline{\overline{\mathbf{F}}}_{\text{PI}}{\!\!\!\!}^{-1}(\alpha \mid X; \hat{\eta}) + \gamma \, \underline{\overline{C}}^{-1}(\alpha, Z; \hat{\eta}), \nonumber
    \end{align}
    where $\underline{\overline{{C}}}(\delta, Z; \hat{\eta})$ and $\underline{\overline{{C}}}{}^{-1}(\alpha, Z; \hat{\eta})$ are given by Eq.~\eqref{eq:bias-corr-crps-app} and \eqref{eq:bias-corr-w22-app}, respectively; and $\gamma \in (0, 1]$ is a scaling hyperparameter.
\end{cor}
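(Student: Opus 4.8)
The plan is to read the claimed estimator directly off a $\gamma$-scaled one-step bias correction of the covariate-adjusted losses, feeding in the efficient influence functions of the target risks already supplied by Corollary~\ref{cor:eif-risks}. Following the construction outlined in Section~\ref{sec:two-stage-learners}, I would take the orthogonal loss for the CDF case to be $\widehat{\overline{\underline{\mathcal{L}}}}_\text{PI, CRPS}(g, \hat{\eta}) + \gamma\, \mathbb{P}_n\big\{\phi(\overline{\underline{\mathcal{L}}}_\text{CRPS}(g); \widehat{\mathbb{P}})\big\}$, where $\widehat{\overline{\underline{\mathcal{L}}}}_\text{PI, CRPS}$ is the covariate-adjusted loss of Eq.~\eqref{eq:ca-crps} and $\phi$ is taken from Corollary~\ref{cor:eif-risks} with the ground-truth nuisances $\eta$ replaced by their estimates $\hat{\eta}$ and the population expectation $\mathbb{E}$ replaced by the empirical average (consistent with cross-fitting, under which the $X$-marginal used to evaluate $\phi$ is the empirical one). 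The quantile case proceeds identically, starting instead from Eq.~\eqref{eq:ca-w22}, the $W_2^2$ risk of Eq.~\eqref{eq:risk-w22}, and $\underline{\overline{C}}{}^{-1}$ from Eq.~\eqref{eq:bias-corr-w22-app}.

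The first step is a cancellation. Abbreviating $r(\delta, X) = {\underline{\overline{\mathbf{F}}}}_{\text{PI}}(\delta\mid X; \hat{\eta}) - g(\delta, X)$, the empirical influence-function term from Corollary~\ref{cor:eif-risks}, evaluated at $\hat{\eta}$ with $\mathbb{E}$ replaced by $\mathbb{P}_n$, equals $\mathbb{P}_n\big\{\int_{\mathit{\Delta}}[\,2\, r(\delta,X)\, \underline{\overline{C}}(\delta, Z; \hat{\eta}) + r(\delta,X)^2\,]\diff{\delta}\big\} - \mathbb{P}_n\big\{\int_{\mathit{\Delta}} r(\delta,X)^2 \diff{\delta}\big\}$; here the two $\int r^2$ contributions cancel exactly, because the centering term of $\phi$ is precisely the empirical mean of the covariate-adjusted loss. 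Hence the whole orthogonal loss collapses to $\mathbb{P}_n\big\{\int_{\mathit{\Delta}}[\,r(\delta,X)^2 + 2\gamma\, r(\delta,X)\, \underline{\overline{C}}(\delta, Z; \hat{\eta})\,]\diff{\delta}\big\}$.

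The second step is completing the square. Adding and subtracting the $g$-free quantity $\gamma^2\, \underline{\overline{C}}(\delta,Z;\hat{\eta})^2$ inside the integral rewrites the integrand as $\big({\underline{\overline{\mathbf{F}}}}_{\text{PI}}(\delta\mid X; \hat{\eta}) + \gamma\, \underline{\overline{C}}(\delta, Z; \hat{\eta}) - g(\delta, X)\big)^2 - \gamma^2\, \underline{\overline{C}}(\delta,Z;\hat{\eta})^2$. Since the subtracted term does not involve $g$, it is inert for the minimization over the working model $\mathcal{G}$, so minimizing the orthogonal loss is equivalent to minimizing $\mathbb{P}_n\big\{\int_{\mathit{\Delta}}({\underline{\overline{\mathbf{F}}}}_{\text{AU}}(\delta, Z; \hat{\eta}, \gamma) - g(\delta, X))^2\diff{\delta}\big\}$ with the pseudo-CDF ${\underline{\overline{\mathbf{F}}}}_{\text{AU}}(\delta, Z; \hat{\eta}, \gamma) = {\underline{\overline{\mathbf{F}}}}_{\text{PI}}(\delta\mid X; \hat{\eta}) + \gamma\, \underline{\overline{C}}(\delta, Z; \hat{\eta})$, which is exactly Eq.~\eqref{eq:au-crps-app}; here I use that ${\underline{\overline{\mathbf{F}}}}_{\text{PI}}$ is the plug-in estimator of Eq.~\eqref{eq:pi} and $\underline{\overline{C}}$ is given by Eq.~\eqref{eq:bias-corr-crps-app} of Theorem~\ref{thrm:eif-app}. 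Running the same algebra with $\alpha \in [0,1]$, $g^{-1}$, the $W_2^2$ risk, and $\underline{\overline{C}}{}^{-1}$ from Eq.~\eqref{eq:bias-corr-w22-app} produces Eq.~\eqref{eq:au-w22-app}.

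The only place I expect any care to be needed is the bookkeeping around which expectations become $\mathbb{P}_n$ once cross-fitting is in force, so that the centering term of the influence function of the risk really is the empirical mean of $\int r^2$ and cancels cleanly, together with the elementary but worth-stating observation that the residual $\gamma^2\,\underline{\overline{C}}^2$ term is a $g$-independent additive constant and hence harmless for the $\argmin$. Everything else is the same square-completion manipulation that underlies the DR-/R-learner constructions, so no new technical difficulty arises beyond Theorem~\ref{thrm:eif-app} and Corollary~\ref{cor:eif-risks}.
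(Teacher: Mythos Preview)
Your proposal is correct and follows essentially the same route as the paper's own proof: start from $\widehat{\overline{\underline{\mathcal{L}}}}_\text{PI, CRPS}(g,\hat{\eta}) + \gamma\,\mathbb{P}_n\{\phi(\overline{\underline{\mathcal{L}}}_\text{CRPS}(g);\widehat{\mathbb{P}})\}$, cancel the two $\int r^2$ contributions coming from the uncentered and centering pieces of the influence function in Corollary~\ref{cor:eif-risks}, and then complete the square to absorb the $2\gamma\, r\,\underline{\overline{C}}$ cross term into a single squared pseudo-CDF residual, dropping the $g$-free $\gamma^2\,\underline{\overline{C}}^2$ remainder. The paper's proof is terser (it jumps directly from the cross-term expression to the equivalent minimization), but the algebra and the logic are the same.
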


\begin{proof}
   The $\gamma$-scaled one-step bias-corrected estimator of the CRPS target risk proceeds as follows:
    \begin{align}
        & \widehat{\overline{\underline{\mathcal{L}}}}_\text{PI, CRPS}(g, \hat{\eta}) + \gamma \mathbb{P}_n \Big\{\phi(\overline{\underline{\mathcal{L}}}_\text{CRPS}(g); \widehat{\mathbb{P}}) \Big\} \\
        =& \mathbb{P}_n\Big\{\int_{\mathit{\Delta}}\left( {\underline{\overline{\mathbf{F}}}}_{\text{PI}}(\delta, Z; \hat{\eta}, \gamma) - g(\delta, X) \right)^2 \diff{\delta} \Big\} \\
        & + \gamma \mathbb{P}_n \bigg\{ \int_{\mathit{\Delta}} \Big[ 2 \left(\underline{\overline{\mathbf{F}}}_{\text{PI}}(\delta \mid X; \hat{\eta}) - g(\delta, X) \right) \, {\underline{\overline{C}}}(\delta, Z; \hat{\eta}) + \left(\underline{\overline{\mathbf{F}}}_{\text{PI}}(\delta \mid X; \hat{\eta}) - g(\delta, X) \right)^2 \Big] \diff \delta \bigg\} \nonumber \\ 
        & \qquad - \gamma \mathbb{P}_n \bigg\{ \int_{\mathit{\Delta}}\left(\underline{\overline{\mathbf{F}}}_{\text{PI}}(\delta \mid X; \hat{\eta}) - g(\delta, X) \right)^2 \diff{\delta} \bigg\} = \\
        =& \mathbb{P}_n\Big\{\int_{\mathit{\Delta}}\left( {\underline{\overline{\mathbf{F}}}}_{\text{PI}}(\delta, Z; \hat{\eta}, \gamma) - g(\delta, X) \right)^2 \diff{\delta} \Big\} + \gamma \mathbb{P}_n \bigg\{ \int_{\mathit{\Delta}}  2 \left(\underline{\overline{\mathbf{F}}}_{\text{PI}}(\delta \mid X; \hat{\eta}) - g(\delta, X) \right) \, {\underline{\overline{C}}}(\delta, Z; \hat{\eta})  \diff \delta \bigg\}.
    \end{align}
    The minimization of the latter wrt. $g$ is then equivalent to the minimization of 
    \begin{equation}
        \mathbb{P}_n\Big\{\int_{\mathit{\Delta}}\left( {\underline{\overline{\mathbf{F}}}}_{\text{PI}}(\delta \mid X; \hat{\eta}) +  \gamma \,\underline{\overline{C}}(\delta, Z; \hat{\eta}) - g(\delta, X) \right)^2 \diff{\delta} \Big\}.
    \end{equation}
    The proof for the $\gamma$-scaled one-step bias-corrected estimator of the $W_2^2$ target risk is analogous.
\end{proof}

\vspace{-0.2cm}
\subsection{Neyman-orthogonality and quasi-oracle efficiency}

Now, we proceed with the second main theoretical result. Here, we use additional notation. 

\begin{definition}[Neyman-orthogonality \cite{foster2023orthogonal,morzywolek2023general}]
    A risk $\mathcal{L}$ is called Neyman-orthogonal if its pathwise cross-derivative equals to zero, namely,
    \begin{equation} \label{eq:neym-orth-def}
         D_\eta D_g {\mathcal{L}}(g_*, \eta)[g- g_*, \hat{\eta} - \eta] = 0 \quad \text{for all } g \in \mathcal{G},
    \end{equation}
    where $D_f F(f)[h] = \frac{\diff}{\diff{t}} F (f + th) \vert_{t=0}$ and $D_f^k F(f)[h_1, \dots, h_k] = \frac{\partial^k}{\partial{t_1} \dots \partial{t_k}} F (f + t_1 h_1 + \dots + t_k h_k)  \vert_{t_1=\dots=t_k = 0}$ are pathwise derivatives \cite{foster2023orthogonal}, $g_* = \argmin_{g \in \mathcal{G}} \mathcal{L}(g, \eta)$, and $\eta$ is the ground-truth nuisance function. 
\end{definition}

Informally, this definition means that the risk is first-order insensitive wrt. to the misspecification of the nuisance functions. Notably, the pathwise derivative in the direction of the Dirac delta distribution coincides with the efficient influence function \cite{fisher2021visually,kennedy2022semiparametric}, \ie, $D_\mathbb{P} F(\mathbb{P})[\delta\{Z - \cdot\} - \mathbb{P}(Z = \cdot)] = \phi(F(\mathbb{P}); \mathbb{P})$, where $\mathbb{P}(Z = \cdot)$ is the PDF of the $\mathbb{P}(Z)$.

\begin{customthm}{2}[Neyman-orthogonality of AU-learner]\label{thrm:ortho-app}
    Under the assumptions of the Theorem~\ref{thrm:eif}, the following holds for \longAUlearner from Algorithm~\ref{alg:au-crps} with the scaling hyperparameter $\gamma = 1$:
    \begin{enumerate}[leftmargin=0.5cm,itemsep=-1.55mm]
        \item \textbf{Neyman-orthogonality.} Population versions of the empirical risks in Eq.~\eqref{eq:au-crps-app} and Eq.~\eqref{eq:au-crps-app} are first-order insensitive wrt. to the misspecification of the nuisance functions, \ie, 
        \begin{align}
            & D_\eta D_g {\overline{\underline{\mathcal{L}}}}_{\text{AU}, \text{CRPS}}(g_*, \eta) [g - g_*, \hat{\eta} - \eta] = 0 \quad \text{for all } g \in \mathcal{G}, \\
            &  D_\eta D_g {\overline{\underline{\mathcal{L}}}}_{\text{AU}, W_2^2}(g^{-1}_*, \eta) [g^{-1} - g_*^{-1}, \hat{\eta} - \eta] = 0 \quad \text{for all } g^{-1}: g \in \mathcal{G} ,
        \end{align}
        where $g_* = \argmin_{g \in \mathcal{G}} {\overline{\underline{\mathcal{L}}}}_{\text{AU}, \text{CRPS}}(g, \eta)$ and $g_*^{-1} = \argmin_{g^{-1}: g \in \mathcal{G}} {\overline{\underline{\mathcal{L}}}}_{\text{AU}, W_2^2}(g^{-1}, \eta)$.
        \item \textbf{Quasi-oracle efficiency.} The bias from the misspecification of the nuisance functions is of second order. 
        Specifically, the following two inequalities hold:
        \begin{align}
            \norm{\widehat{\underline{{g}}} - g_*}_{2, \text{CRPS}}^2  & \lesssim {\underline{\mathcal{L}}}_{\text{AU}, \text{CRPS}}(\widehat{\underline{{g}}}, \hat{\eta}) - {\underline{\mathcal{L}}}_{\text{AU}, \text{CRPS}}(g_*, \hat{\eta}) \nonumber \\
            & \quad + \norm{{\pi} - \hat{\pi}}_{L_4}^2 \, \norm{{\mathbb{F}}_1 \circ \hat{y}^{*} - \hat{\mathbb{F}}_1 \circ \hat{y}^{*} }_{4, \text{CRPS}}^2 \\
            & \quad + \norm{{\pi} - \hat{\pi}}_{L_4}^2 \norm{{\mathbb{F}}_0 \circ (\hat{y}^{*} - \cdot) - \hat{\mathbb{F}}_0 \circ (\hat{y}^{*} - \cdot) }_{4, \text{CRPS}}^2 \nonumber \\
            & \quad +  \norm{{y}^{*} - \hat{y}^{*}}_{4, \text{CRPS}}^4  + \norm{{\mathbb{F}}_1 \circ {y}^{*} - \hat{\mathbb{F}}_1 \circ \hat{y}^{*} }_{4, \text{CRPS}}^4 + \norm{{\mathbb{F}}_0 \circ ({y}^{*} - \cdot) - \hat{\mathbb{F}}_0 \circ (\hat{y}^{*} - \cdot) }_{4, \text{CRPS}}^4 \nonumber \\
            & \quad + \norm{{\mathbb{F}}_1 \circ {y}^{*} - \hat{\mathbb{F}}_1 \circ \hat{y}^{*} }_{4, \text{CRPS}}^2 \, \norm{{\mathbb{F}}_0 \circ ({y}^{*} - \cdot) - \hat{\mathbb{F}}_0 \circ (\hat{y}^{*} - \cdot) }_{4, \text{CRPS}}^2, \nonumber \\
            \norm{\widehat{\underline{{g}}}^{-1} - g_*^{-1}}_{2, W^2_2}^2  & \lesssim {\underline{\mathcal{L}}}_{\text{AU}, W^2_2}(\widehat{\underline{{g}}}^{-1}, \hat{\eta}) - {\underline{\mathcal{L}}}_{\text{AU}, W^2_2}(g_*^{-1}, \hat{\eta})  \nonumber \\
            & \quad + \norm{{\pi} - \hat{\pi}}_{L_4}^2 \, \norm{{\mathbb{F}}_1^{-1} \circ \hat{u}^{*} - \hat{\mathbb{F}}_1^{-1} \circ \hat{u}^{*} }_{4, W^2_2}^2 \\
            & \quad + \norm{{\pi} - \hat{\pi}}_{L_4}^2 \norm{{\mathbb{F}}_0^{-1} \circ (\hat{u}^{*} - \cdot + 0) - \hat{\mathbb{F}}_0^{-1} \circ (\hat{u}^{*} - \cdot + 0) }_{4, W^2_2}^2 \nonumber \\
            & \quad + \norm{{u}^{*} - \hat{u}^{*}}_{4, W^2_2}^4, \nonumber
        \end{align}
        where $\widehat{\underline{{g}}} = \argmin_{g \in \mathcal{G}} {{\underline{\mathcal{L}}}}_{\text{AU}, \text{CRPS}}(g, \hat{\eta})$; $\widehat{\underline{{g}}}^{-1} = \argmin_{g^{-1}: g \in \mathcal{G}} {{\underline{\mathcal{L}}}}_{\text{AU}, W_2^2}(g^{-1}, \hat{\eta})$; $\norm{f(Z)}_{L_4} = (\mathbb{E}(f(Z)^4))^{1/4}$; $\norm{f(\delta, Z)}_{k, \text{CRPS}} = (\mathbb{E}(\int_{\mathit{\Delta}} \abs{f(\delta, Z)}^k \diff{\delta}))^{1/k}$; $\norm{f(\alpha, Z)}_{k, W^2_2} = (\mathbb{E}(\int_{0}^1 \abs{f(\alpha, Z)}^k \diff{\alpha}))^{(1/k)}$; $I(X; {\eta}) = \mathbbm{1}\big\{({{\mathbb{F}}_1} \, {{*}} \, {\mathbb{F}_0})_{\mathcal{Y}}(\delta \mid X) > 0 \big\}$; $y^*$ is some value from ${y^{\overline{{*}}}_\mathcal{Y}}(\delta \mid X)$; $\hat{y}^*$ is some value from ${\hat{y}^{\overline{{*}}}_\mathcal{Y}}(\delta \mid X)$; $u^{*}$ is some value from ${u^{{\underline{*}}}_{[\alpha, 1]}}(\alpha \mid X)$; and $\hat{u}^{*}$ is some value from ${\hat{u}^{{\underline{*}}}_{[\alpha, 1]}}(\alpha \mid X)$. The inequalities corresponding to the upper Makarov bound, can be obtained by swapping the symbols \mbox{$\{ \overline{*}, >, y^{{\overline{*}}}_{\mathcal{Y}}, u^{{\underline{*}}}_{[\alpha, 1]}, -0,  +0\}$} to \mbox{$\{ \underline{*}, <, y^{{\underline{*}}}_{\mathcal{Y}}, u^{{\overline{*}}}_{[0, \alpha]}, -1, +1\}$}.   
        Furthermore, if the nuisance functions are estimated sufficiently fast, \ie, $\norm{{y}^{*} - \hat{y}^{*}}_{4, \text{CRPS}} = o_{\mathbb{P}}(n^{-1/4})$; $\norm{{\mathbb{F}}_1 \circ {y}^{*} - \hat{\mathbb{F}}_1 \circ \hat{y}^{*} }_{4, \text{CRPS}} = o_{\mathbb{P}}(n^{-1/4})$; $\norm{{\mathbb{F}}_0 \circ ({y}^{*} - \cdot) - \hat{\mathbb{F}}_0 \circ (\hat{y}^{*} - \cdot) }_{4, \text{CRPS}} = o_{\mathbb{P}}(n^{-1/4})$;  $\norm{\hat{\pi} - \pi}_{L_4} \, \norm{{\mathbb{F}}_1 \circ \hat{y}^{*} - \hat{\mathbb{F}}_1 \circ \hat{y}^{*} }_{4, \text{CRPS}} = o_{\mathbb{P}}(n^{-1/2})$; and $\norm{\hat{\pi} - \pi}_{L_4} \, \norm{{\mathbb{F}}_0 \circ (\hat{y}^{*} - \cdot) - \hat{\mathbb{F}}_0 \circ (\hat{y}^{*} - \cdot) }_{4, \text{CRPS}} = o_{\mathbb{P}}(n^{-1/2})$, then the \longAUlearner (CRPS) achieves the quasi-oracle property (analogous result holds for \longAUlearner ($W_2^2$)). This means that the estimation error of the second stage with the estimated nuisance functions behaves in the same way as if the ground-truth nuisance functions were used. 
    \end{enumerate}
\end{customthm}

\begin{proof}
    \textit{1. Neyman-orthogonality.} The Neyman-orthogonality follows by the construction of the \longAUlearner as a one-step bias-corrected estimator. Specifically, it is easy to verify that the pathwise cross-derivative from Eq.~\eqref{eq:neym-orth-def} is equal to zero. 

    Let us consider the CRPS risk. First, we find a pathwise derivative wrt. working model $g$:
    \begingroup\makeatletter\def\f@size{8}\check@mathfonts
    \begin{align}
        & D_g {\overline{\underline{\mathcal{L}}}}_{\text{AU}, \text{CRPS}}(g_*, \eta) [g - g_*] = \frac{\diff}{\diff t} \mathbb{E}\Big[\int_{\mathit{\Delta}}\left( {\underline{\overline{\mathbf{F}}}}_{\text{AU}}(\delta, Z; \eta, \gamma) - g_*(\delta, X) - t (g(\delta, X) - g_*(\delta, X)) \right)^2 \diff{\delta} \Big] \Bigg\vert_{t=0} \\
        = & -2 \mathbb{E}\Big[\int_{\mathit{\Delta}}\left( {\underline{\overline{\mathbf{F}}}}_{\text{AU}}(\delta, Z; \eta, \gamma) - g_*(\delta, X) - t (g(\delta, X) - g_*(\delta, X)) \right) \, \big(g(\delta, X) - g_*(\delta, X)\big)  \diff{\delta} \Big] \Bigg\vert_{t=0} \\
        = & -2 \mathbb{E}\Big[\int_{\mathit{\Delta}}\left( {\underline{\overline{\mathbf{F}}}}_{\text{AU}}(\delta, Z; \eta, \gamma) - g_*(\delta, X) \right) \, \big(g(\delta, X) - g_*(\delta, X)\big)  \diff{\delta} \Big]. \label{eq:crps-first-der}
    \end{align}
    \endgroup

    Then, we find derivatives wrt. to different nuisance functions (e.g., the propensity score):
    \begingroup\makeatletter\def\f@size{8}\check@mathfonts
    \begin{align}
        & D_\pi D_g {\overline{\underline{\mathcal{L}}}}_{\text{AU}, \text{CRPS}}(g_*, \eta) [g - g_*, \hat{\pi} - \pi] \\
        = & \frac{\diff}{\diff t} -2 \mathbb{E}\Big[\int_{\mathit{\Delta}}\left( {\underline{\overline{\mathbf{F}}}}_{\text{AU}}(\delta, Z; \eta = (\pi + t (\hat{\pi} - \pi), \mathbb{F}_0, \mathbb{F}_1), \gamma) - g_*(\delta, X) \right) \, \big(g(\delta, X) - g_*(\delta, X)\big)  \diff{\delta} \Big] \Bigg\vert_{t=0} \\
        = & \frac{\diff}{\diff t} -2 \mathbb{E}\Big[\int_{\mathit{\Delta}}  \gamma \,\underline{\overline{C}}(\delta, Z; \eta = (\pi + t (\hat{\pi} - \pi), \mathbb{F}_0, \mathbb{F}_1)) \, \big(g(\delta, X) - g_*(\delta, X)\big)  \diff{\delta} \Big] \Bigg\vert_{t=0} \\
        = & 2 \gamma \,  \mathbb{E}\Bigg[\int_{\mathit{\Delta}} I(X; {\eta}) \bigg[ \frac{A \, (\hat{\pi}(X) - \pi(X))}{({\pi}(X))^2} \Big( \mathbbm{1}\{Y \le  y^{*}\} - {\mathbb{F}_1}\big(y^{*} \mid X\big)\Big) \\
        &     \qquad + \frac{(1-A)(\hat{\pi}(X) - \pi(X))}{(1-{\pi}(X))^2} \Big( \mathbbm{1}\{Y \le  y^{*} - \delta \} - {\mathbb{F}_0}\big(y^{*} - \delta \mid X\big)\Big) \bigg] \big(g(\delta, X) - g_*(\delta, X)\big)  \diff{\delta} \Bigg] \\
        = & 2 \gamma \,  \mathbb{E}_{X} \Bigg[\int_{\mathit{\Delta}} I(X; {\eta}) \, \mathbb{E} \bigg[\frac{A}{({\pi}(X))^2} \Big( \mathbbm{1}\{Y \le  y^{*}\} - {\mathbb{F}_1}\big(y^{*} \mid X\big)\Big) \mid X \bigg] \\
        &    \qquad + \mathbb{E} \bigg[ \frac{1-A}{(1-{\pi}(X))^2} \Big( \mathbbm{1}\{Y \le  y^{*} - \delta \} - {\mathbb{F}_0}\big(y^{*} - \delta \mid X\big)\Big) \mid X \bigg] (\hat{\pi}(X) - \pi(X)) \, \big(g(\delta, X) - g_*(\delta, X)\big)  \diff{\delta} \Bigg] \\
        = & 2 \gamma \,  \mathbb{E}_{X} \Bigg[\int_{\mathit{\Delta}} I(X; {\eta}) \bigg[\frac{\mathbb{P}(A = 1 \mid X)}{({\pi}(X))^2} \Big( \mathbb{E}(\mathbbm{1}\{Y \le  y^{*}\} \mid X, A = 1)- {\mathbb{F}_1}\big(y^{*} \mid X\big)\Big) \\
        &    \qquad + \frac{\mathbb{P}(A = 0 \mid X)}{(1-{\pi}(X))^2} \Big( \mathbb{E}(\mathbbm{1}\{Y \le  y^{*} - \delta \} \mid X, A = 0) - {\mathbb{F}_0}\big(y^{*} - \delta \mid X\big)\Big) \bigg] (\hat{\pi}(X) - \pi(X)) \, \big(g(\delta, X) - g_*(\delta, X)\big)  \diff{\delta} \Bigg] \\
         = & 2 \gamma \,  \mathbb{E}_{X} \Bigg[\int_{\mathit{\Delta}} I(X; {\eta}) \bigg[\frac{1}{{\pi}(X)} \, 0 + \frac{1}{1-{\pi}(X)} \, 0 \bigg] (\hat{\pi}(X) - \pi(X)) \, \big(g(\delta, X) - g_*(\delta, X)\big)  \diff{\delta} \Bigg] = 0,
    \end{align}
    \endgroup
    where $I(X; {\eta}) = \mathbbm{1}\big\{({{\mathbb{F}}_1} \, {\overline{*}} \, {\mathbb{F}_0})_{\mathcal{Y}}(\delta \mid X) > 0 \big\}$ or $\mathbbm{1}\big\{({{\mathbb{F}}_1} \, {\underline{*}} \, {\mathbb{F}_0})_{\mathcal{Y}}(\delta \mid X) < 0 \big\}$; and $y^*$ is some value from ${y^{\underline{\overline{*}}}_\mathcal{Y}}(\delta \mid X)$.
    
    The derivatives wrt. the conditional CDF $\mathbb{F}_1$ is
    \begingroup\makeatletter\def\f@size{8}\check@mathfonts
    \begin{align}
        & D_{\mathbb{F}_1} D_g {\overline{\underline{\mathcal{L}}}}_{\text{AU}, \text{CRPS}}(g_*, \eta) [g - g_*, \hat{\mathbb{F}}_1 - \mathbb{F}_1] \\
        =& \frac{\diff}{\diff t} -2 \mathbb{E}\Big[\int_{\mathit{\Delta}}\left( {\underline{\overline{\mathbf{F}}}}_{\text{AU}}(\delta, Z; \eta = (\pi, \mathbb{F}_0, \mathbb{F}_1 + t(\hat{\mathbb{F}}_1 - \mathbb{F}_1)), \gamma) - g_*(\delta, X) \right) \, \big(g(\delta, X) - g_*(\delta, X)\big)  \diff{\delta} \Big] \Bigg\vert_{t=0} \\
        =& \frac{\diff}{\diff t} -2 \mathbb{E}\Big[\int_{\mathit{\Delta}}\Big( {\underline{\overline{\mathbf{F}}}}_{\text{PI}}(\delta, X; \eta = (\mathbb{F}_0, \mathbb{F}_1 + t(\hat{\mathbb{F}}_1 - \mathbb{F}_1))) \\
        & \qquad \qquad \qquad \qquad + \gamma \,\underline{\overline{C}}(\delta, Z; \eta = (\pi, \mathbb{F}_0, \mathbb{F}_1 + t(\hat{\mathbb{F}}_1 - \mathbb{F}_1))) \, \Big) \, \big(g(\delta, X) - g_*(\delta, X)\big)  \diff{\delta} \Big] \Bigg\vert_{t=0} \nonumber \\
        =&  -2 \mathbb{E} \Bigg[ \int_{\mathit{\Delta}} I(X; {\eta = (\mathbb{F}_0, \mathbb{F}_1 + 0 \, (\hat{\mathbb{F}}_1 - \mathbb{F}_1))}) \, \frac{\diff}{\diff t} \bigg[  {\mathbb{F}_1}\big(\tilde{y}^{*}(t) \mid X\big) \nonumber \\
        & \qquad \qquad \qquad \qquad \qquad + t \big(\hat{\mathbb{F}}_1\big(\tilde{y}^{*}(t) \mid X\big) - {\mathbb{F}_1}\big(\tilde{y}^{*}(t) \mid X\big) \big) - {\mathbb{F}_0}\big(\tilde{y}^{*}(t) - \delta \mid X\big) \\
        & \qquad \qquad \qquad + \gamma \bigg( \frac{A}{{\pi}(X)} \Big( \mathbbm{1}\{Y \le  \tilde{y}^{*}(t)\} - {\mathbb{F}_1}\big(\tilde{y}^{*}(t) \mid X\big) - t \big( \hat{\mathbb{F}}_1\big(\tilde{y}^{*}(t) \mid X\big) - {\mathbb{F}_1}\big(\tilde{y}^{*}(t) \mid X\big) \big) \Big) \nonumber \\ 
        & \qquad \qquad \qquad \qquad - \frac{1 - A}{1 - {\pi}(X)} \Big( \mathbbm{1}\{Y \le  \tilde{y}^{*}(t) - \delta\} - {\mathbb{F}_0}\big(\tilde{y}^{*}(t) - \delta \mid X\big) \Big) \bigg)  \bigg] \Bigg\vert_{t=0} \big(g(\delta, X) - g_*(\delta, X)\big)  \diff{\delta} \Bigg] \nonumber 
        \end{align}
        \endgroup  
        \begingroup\makeatletter\def\f@size{8}\check@mathfonts
        \begin{align}
        =&  -2 \mathbb{E} \Bigg[ \int_{\mathit{\Delta}} I(X; {\eta}) \Bigg[ \underbrace{\frac{\diff}{\diff t} \bigg[  {\mathbb{F}_1}\big(\tilde{y}^{*}(t) \mid X\big) -  {\mathbb{F}_0}\big(\tilde{y}^{*}(t) - \delta \mid X\big) \bigg] \bigg\vert_{t=0}}_{(*) = 0} \nonumber + 0 \, \frac{\diff}{\diff t} \bigg[ \hat{\mathbb{F}}_1\big(\tilde{y}^{*}(t) \mid X\big) - {\mathbb{F}_1}\big(\tilde{y}^{*}(t) \mid X\big)\bigg] \bigg\vert_{t=0} \\
        & \qquad \qquad \qquad \qquad \qquad + \hat{\mathbb{F}}_1\big(\tilde{y}^{*}(0) \mid X\big) - {\mathbb{F}_1}\big(\tilde{y}^{*}(0) \mid X\big) \\
        & \qquad \qquad \qquad + \gamma \Bigg( \frac{A}{{\pi}(X)} \bigg( \frac{\diff}{\diff t} \bigg[ \mathbbm{1}\{Y \le  \tilde{y}^{*}(t)\} - {\mathbb{F}_1}\big(\tilde{y}^{*}(t) \mid X\big) \bigg]   \bigg\vert_{t=0} - 0 \, \frac{\diff}{\diff t} \bigg[ \hat{\mathbb{F}}_1\big(\tilde{y}^{*}(t) \mid X\big) - {\mathbb{F}_1}\big(\tilde{y}^{*}(t) \mid X\big)\bigg] \bigg\vert_{t=0} \nonumber \\ 
        & \qquad \qquad \qquad \qquad \qquad - \Big( \hat{\mathbb{F}}_1\big(\tilde{y}^{*}(t) \mid X\big) - {\mathbb{F}_1}\big(\tilde{y}^{*}(t) \mid X\big) \Big) \bigg)  \nonumber  \\
        & \qquad \qquad \qquad \qquad - \frac{1 - A}{1 - {\pi}(X)} \frac{\diff}{\diff t} \bigg[ \mathbbm{1}\{Y \le  \tilde{y}^{*}(t) - \delta\} - {\mathbb{F}_0}\big(\tilde{y}^{*}(t) - \delta \mid X\big) \bigg] \bigg\vert_{t=0} \Bigg) \Bigg] \big(g(\delta, X) - g_*(\delta, X)\big)  \diff{\delta} \Bigg] \nonumber \\
        =& -2 \mathbb{E}_X \Bigg[ \int_{\mathit{\Delta}} I(X; {\eta}) \Bigg[ \hat{\mathbb{F}}_1\big(\tilde{y}^{*}(0) \mid X\big) - {\mathbb{F}_1}\big(\tilde{y}^{*}(0) \mid X\big) \nonumber \\
        & \qquad \qquad \qquad + \gamma \Bigg( \frac{{\pi}(X)}{{\pi}(X)} \bigg( \mathbb{E}\Big[-\delta\{Y - \tilde{y}^{*}(0)\} \mid X, A = 1 \Big]\frac{\diff}{\diff t} \Big[\tilde{y}^{*}(t)\Big]\bigg\vert_{t=0} \nonumber \\
        & \qquad \qquad \qquad \qquad \qquad  - \mathbb{E}\Big[-\delta\{Y - \tilde{y}^{*}(0)\} \mid X, A = 1 \Big]\frac{\diff}{\diff t} \Big[\tilde{y}^{*}(t)\Big]\bigg\vert_{t=0} - \Big( \hat{\mathbb{F}}_1\big(\tilde{y}^{*}(0) \mid X\big) - {\mathbb{F}_1}\big(\tilde{y}^{*}(0) \mid X\big) \Big) \bigg) \nonumber \\
        & \qquad \qquad \qquad \qquad - \frac{1 -  {\pi}(X)}{1 - {\pi}(X)} \bigg( \mathbb{E}\Big[-\delta\{Y - \tilde{y}^{*}(0) + \delta\} \mid X, A = 0 \Big]\frac{\diff}{\diff t} \Big[\tilde{y}^{*}(t)\Big]\bigg\vert_{t=0}  \\
        & \qquad \qquad \qquad \qquad \qquad - \mathbb{E}\Big[-\delta\{Y - \tilde{y}^{*}(0) + \delta\} \mid X, A = 0 \Big]\frac{\diff}{\diff t} \Big[\tilde{y}^{*}(t)\Big]\bigg\vert_{t=0} \bigg) \Bigg) \Bigg] \big(g(\delta, X) - g_*(\delta, X)\big)  \diff{\delta} \Bigg] \nonumber \\
        =& -2 \mathbb{E}_X \Bigg[ \int_{\mathit{\Delta}} I(X; {\eta}) (1 - \gamma) \, \Big(\hat{\mathbb{F}}_1\big(\tilde{y}^{*}(0) \mid X\big) - {\mathbb{F}_1}\big(\tilde{y}^{*}(0) \mid X\big)\Big) \, \big(g(\delta, X) - g_*(\delta, X)\big)  \diff{\delta} \Bigg] \underbrace{=}_{\gamma = 1} 0, 
    \end{align}
    \endgroup    
    where $I(X; {\eta}) = \mathbbm{1}\big\{({{\mathbb{F}}_1} \, {\overline{*}} \, {\mathbb{F}_0})_{\mathcal{Y}}(\delta \mid X) > 0 \big\}$ or $\mathbbm{1}\big\{({{\mathbb{F}}_1} \, {\underline{*}} \, {\mathbb{F}_0})_{\mathcal{Y}}(\delta \mid X) < 0 \big\}$; $\tilde{y}^*(t)$ is some value from ${\tilde{y}^{\underline{\overline{*}}}_\mathcal{Y}}(\delta \mid X)$; ${\tilde{y}^{\underline{\overline{*}}}_\mathcal{Y}}( \cdot \mid X)$ are the argmax/argmin sets of the convolutions $({\mathbb{F}_1 + t(\hat{\mathbb{F}}_1 - \mathbb{F}_1}) \, \underline{\overline{*}} \, {\mathbb{F}_0})_{\mathcal{Y}}(\cdot \mid X)$; and $(*) = 0$ follows from the the same considerations as in Eq.~\eqref{eq:conv-first-order-insensitivity}. Analogously, the pathwise derivative wrt. $\mathbb{F}_0$ can be shown to be equal to zero. We refer to the appendices of \cite{morzywolek2023general} for more details.
    
    The Neyman-orthogonality of the $W_2^2$ population risk can be proved in a similar fashion. First, the pathwise derivative wrt. $g^{-1}$ has a similar form to Eq.~\eqref{eq:crps-first-der}, namely
    \begingroup\makeatletter\def\f@size{8}\check@mathfonts
    \begin{align}
        & D_g {\overline{\underline{\mathcal{L}}}}_{\text{AU}, W_2^2}(g_*^{-1}, \eta) [g^{-1} - g_*^{-1}] = -2 \mathbb{E}\Big[\int_{0}^1 \left( \underline{\overline{\mathbf{F}}}_{\text{AU}}{\!\!\!\!}^{-1}(\alpha, Z; \hat{\eta}, \gamma) - g_*^{-1}(\alpha, X) \right) \, \big(g^{-1}(\alpha, X) - g_*^{-1}(\alpha, X)\big)  \diff{\alpha} \Big].
    \end{align}
    \endgroup
    Furthermore, similarly to the CRPS target risk, the cross-derivative wrt. to the propensity score is
    \begingroup\makeatletter\def\f@size{8}\check@mathfonts
    \begin{align}
        & D_\pi D_g {{\underline{\mathcal{L}}}}_{\text{AU}, W_2^2}(g_*^{-1}, \eta) [g^{-1} - g_*^{-1}, \hat{\pi} - \pi] \\
        = & 2 \gamma \,  \mathbb{E}_{X} \Bigg[\int_{0}^1 \,\bigg[\frac{\mathbb{P}(A = 1 \mid X)}{({\pi}(X))^2} \bigg( \frac{\mathbb{E}(\mathbbm{1}\{Y \le  \mathbb{F}_1^{-1}(u^{*} \mid X)\} \mid X, A = 1)- u^*}{{\mathbb{P}}\big(Y = {\mathbb{F}_1^{-1}}( u^* \mid X ) \mid X, A = 1 \big)} \bigg) \\
        &     \qquad + \frac{\mathbb{P}(A = 0 \mid X)}{(1-{\pi}(X))^2} \bigg( \frac{\mathbb{E}(\mathbbm{1}\{Y \le  \mathbb{F}_0^{-1}(u^{*} - \alpha + 0 \mid X) \} \mid X, A = 0) - (u^{*} - \alpha + 0)}{{\mathbb{P}}\big(Y = {\mathbb{F}_0^{-1}}( u^* - \alpha + 0 \mid X ) \mid X, A = 0\big)} \bigg) \bigg] (\hat{\pi}(X) - \pi(X)) \, \big(g^{-1}(\alpha, X) - g_*^{-1}(\alpha, X)\big)  \diff{\alpha} \Bigg] \nonumber \\
        = & 2 \gamma \,  \mathbb{E}_{X} \Bigg[\int_{0}^1 \, \bigg[\frac{1}{{\pi}(X)} \bigg( \frac{\mathbb{F}_1(\mathbb{F}_1^{-1}(u^{*} \mid X) \mid X) - u^*}{{\mathbb{P}}\big(Y = {\mathbb{F}_1^{-1}}( u^* \mid X ) \mid X, A = 1 \big)} \bigg) \\
        &    \qquad + \frac{1}{1-{\pi}(X)} \bigg( \frac{\mathbb{F}_0 ( \mathbb{F}_0^{-1}(u^{*} - \alpha + 0 \mid X) \mid X) - (u^{*} - \alpha + 0)}{{\mathbb{P}}\big(Y = {\mathbb{F}_0^{-1}}( u^* - \alpha + 0 \mid X ) \mid X, A = 0\big)} \bigg) \bigg] (\hat{\pi}(X) - \pi(X)) \, \big(g^{-1}(\alpha, X) - g_*^{-1}(\alpha, X)\big)  \diff{\alpha} \Bigg] \nonumber \\
        = & 2 \gamma \,  \mathbb{E}_{X} \Bigg[\int_{0}^1 \, \bigg[\frac{1}{{\pi}(X)} \, 0 + \frac{1}{1-{\pi}(X)} \, 0 \bigg] (\hat{\pi}(X) - \pi(X)) \, \big(g^{-1}(\alpha, X) - g_*^{-1}(\alpha, X)\big)  \diff{\alpha} \Bigg]  = 0,
    \end{align}
    \endgroup
    where $u^*$ is some value from ${u^{{\underline{*}}}_{[\alpha, 1]}}(\alpha \mid X)$. The cross-derivative for the upper bound follows similarly.

    Finally, to show that a cross-derivative wrt. $\mathbb{F}_1^{-1}$, we make use of the following property of the derivative of the quantiles (inverse function rule):
    \begin{align}
        \frac{\diff}{\diff{\alpha}} \Big[\mathbb{F}^{-1}(\alpha) + t(\hat{\mathbb{F}}^{-1}(\alpha) - \mathbb{F}^{-1}(\alpha)) \Big] = \frac{1}{\mathbb{P}(\Tilde{Y} = \mathbb{F}^{-1}(\alpha) + t(\hat{\mathbb{F}}^{-1}(\alpha) - \mathbb{F}^{-1}(\alpha)); t)},
    \end{align}
    where $\mathbb{P}(\Tilde{Y} = \cdot\, ; t)$ is the density function for a distribution with quantiles $\mathbb{F}^{-1}(\alpha) + t(\hat{\mathbb{F}}^{-1}(\alpha) - \mathbb{F}^{-1}(\alpha))$. Then, the pathwise derivative is 
    \begingroup\makeatletter\def\f@size{8}\check@mathfonts
    \begin{align}
        & \frac{\diff}{\diff{t}} \bigg[\frac{1}{\mathbb{P}(\Tilde{Y} = \mathbb{F}^{-1}(\alpha) + t(\hat{\mathbb{F}}^{-1}(\alpha) - \mathbb{F}^{-1}(\alpha)); t)} \bigg] \bigg\vert_{t=0} = \frac{\diff}{\diff{t}} \Big[ \frac{\diff}{\diff{\alpha}} \Big[ \mathbb{F}^{-1}(\alpha) \Big] \Big] \bigg\vert_{t=0} + \frac{\diff}{\diff{\alpha}} \Big[\hat{\mathbb{F}}^{-1}(\alpha) - \mathbb{F}^{-1}(\alpha)) \Big] .
    \end{align}
    \endgroup
    
    Therefore, the cross-derivative is 
    \begingroup\makeatletter\def\f@size{7}\check@mathfonts
    \begin{align}
        & D_{\mathbb{F}_1^{-1}} D_g {{\underline{\mathcal{L}}}}_{\text{AU}, W_2^2}(g_*^{-1}, \eta) [g^{-1} - g_*^{-1}, \hat{\mathbb{F}}_1^{-1} - \mathbb{F}_1^{-1}] \\
         = &  -2 \mathbb{E} \Bigg[ \int_{0}^1  \frac{\diff}{\diff t} \bigg[  {\mathbb{F}_1^{-1}}\big(\tilde{u}^{*}(t) \mid X\big) \nonumber + t \big(\hat{\mathbb{F}}_1^{-1}\big(\tilde{u}^{*}(t) \mid X\big) - {\mathbb{F}_1^{-1}}\big(\tilde{u}^{*}(t) \mid X\big) \big) - {\mathbb{F}_0^{-1}}\big(\tilde{u}^{*}(t) - \alpha \mid X\big) \nonumber \\
        & \qquad \qquad + \gamma \Bigg( \frac{A}{{\pi}(X)} \bigg( \frac{\mathbbm{1}\{Y \le  \mathbb{F}_1^{-1}(\tilde{u}^{*}(t) \mid X) + t \big(\hat{\mathbb{F}}_1^{-1}(\tilde{u}^{*}(t) \mid X) - \mathbb{F}_1^{-1}(\tilde{u}^{*}(t) \mid X) \big)  \}  - \tilde{u}^{*}(t)}{\mathbb{P}\Big(\Tilde{Y} = \mathbb{F}_1^{-1}(\tilde{u}^{*}(t) \mid X) + t \big(\hat{\mathbb{F}}_1^{-1}(\tilde{u}^{*}(t) \mid X) - \mathbb{F}_1^{-1}(\tilde{u}^{*}(t) \mid X)\big) \mid X, A = 1; t\Big)} \Bigg) \\ 
        & \qquad \qquad \qquad - \frac{1 - A}{1 - {\pi}(X)} \bigg( \frac{\mathbbm{1}\{Y \le  \mathbb{F}_0^{-1}(\tilde{u}^{*}(t) - \alpha + 0 \mid X) \} -  (\tilde{u}^{*}(t) - \alpha + 0)}{{\mathbb{P}}\big(Y = {\mathbb{F}_0^{-1}}( \tilde{u}^*(t) - \alpha + 0 \mid X ) \mid X, A = 0\big)} \bigg) \bigg)  \bigg] \Bigg\vert_{t=0} \big(g^{-1}(\alpha, X) - g_*^{-1}(\alpha, X)\big)  \diff{\alpha} \Bigg]     \nonumber \\
        = &  -2 \mathbb{E} \Bigg[ \int_{0}^1 \underbrace{\frac{\diff}{\diff t} \bigg[  {\mathbb{F}_1^{-1}}\big(\tilde{u}^{*}(t) \mid X\big) - {\mathbb{F}_0^{-1}}\big(\tilde{u}^{*}(t) - \alpha \mid X\big)\bigg] \bigg\vert_{t=0}}_{(*) = 0} \nonumber + 0\, \frac{\diff}{\diff t} \bigg[ \hat{\mathbb{F}}_1^{-1}\big(\tilde{u}^{*}(t) \mid X\big) - {\mathbb{F}_1^{-1}}\big(\tilde{u}^{*}(t) \mid X\big) \bigg] \bigg\vert_{t=0} \\
        & \qquad \qquad \qquad \qquad \qquad + \hat{\mathbb{F}}_1^{-1}\big(\tilde{u}^{*}(0) \mid X\big) - {\mathbb{F}_1^{-1}}\big(\tilde{u}^{*}(0) \mid X\big) \nonumber\\
        & \qquad \qquad \qquad + \gamma \bigg( \frac{A}{{\pi}(X)} \bigg(\bigg( \frac{\diff}{\diff t} \bigg[ \hat{\mathbb{F}}_1^{-1}\big(\tilde{u}^{*}(t) \mid X\big) \bigg] \bigg\vert_{t=0} +  \frac{1}{\hat{\mathbb{P}}\Big({Y} = \hat{\mathbb{F}}_1^{-1}(\tilde{u}^{*}(t) \mid X) \mid X, A = 1 \Big)} \\
        & \qquad \qquad \qquad \qquad \qquad \qquad - \frac{1}{{\mathbb{P}}\Big({Y} = {\mathbb{F}}_1^{-1}(\tilde{u}^{*}(t) \mid X) \mid X, A = 1 \Big)} \bigg) \bigg( {\mathbbm{1}\{Y \le  \mathbb{F}_1^{-1}(\tilde{u}^{*}(t) \mid X) \}  - \tilde{u}^{*}(t)} \bigg) \nonumber \\
        & \qquad \qquad \qquad \qquad \qquad + \frac{\frac{\diff}{\diff t} \bigg[ \mathbbm{1}\{Y \le  \mathbb{F}_1^{-1}(\tilde{u}^{*}(t) \mid X) + t \big(\hat{\mathbb{F}}_1^{-1}(\tilde{u}^{*}(t) \mid X) - \mathbb{F}_1^{-1}(\tilde{u}^{*}(t) \mid X) \big)  \} \bigg]  \bigg\vert_{t=0} - \frac{\diff}{\diff t} \bigg[ \tilde{u}^{*}(t) \bigg] \bigg\vert_{t=0}}{{\mathbb{P}}\Big({Y} = {\mathbb{F}}_1^{-1}(\tilde{u}^{*}(t) \mid X) \mid X, A = 1 \Big)} \bigg) \nonumber \\
        & \qquad \qquad \qquad \qquad - \frac{1 - A}{1 - {\pi}(X)} \bigg(\frac{\diff}{\diff t} \bigg[\frac{1}{{\mathbb{P}}\big(Y = {\mathbb{F}_0^{-1}}( \tilde{u}^*(t) - \alpha + 0 \mid X ) \mid X, A = 0\big)} \bigg]\bigg\vert_{t=0} \bigg(\mathbbm{1}\{Y \le  \mathbb{F}_0^{-1}(\tilde{u}^{*}(t) - \alpha + 0 \mid X) \} \nonumber \\
        & \qquad \qquad \qquad \qquad \qquad \qquad -  (\tilde{u}^{*}(t) - \alpha + 0) \bigg) \nonumber \\
        & \qquad \qquad \qquad \qquad \qquad + \frac{\frac{\diff}{\diff t} \bigg[ \mathbbm{1}\{Y \le  \mathbb{F}_0^{-1}(\tilde{u}^{*}(t) - \alpha + 0 \mid X) \} \bigg] \bigg\vert_{t=0} - \frac{\diff}{\diff t} \bigg[ (\tilde{u}^{*}(t) - \alpha + 0) \bigg] \bigg\vert_{t=0} }{{\mathbb{P}}\big(Y = {\mathbb{F}_0^{-1}}( \tilde{u}^*(t) - \alpha + 0 \mid X ) \mid X, A = 0\big)} \bigg) \bigg) \big(g^{-1}(\alpha, X) - g_*^{-1}(\alpha, X)\big)  \diff{\alpha} \Bigg] \nonumber 
    \end{align}
    \endgroup
    \begingroup\makeatletter\def\f@size{7}\check@mathfonts
    \begin{align}
        = &  -2 \mathbb{E}_X \Bigg[ \int_{0}^1 \Bigg[ \hat{\mathbb{F}}_1^{-1}\big(\tilde{u}^{*}(0) \mid X\big) - {\mathbb{F}_1^{-1}}\big(\tilde{u}^{*}(0) \mid X\big) \nonumber \\
        & \qquad + \gamma \bigg( \frac{{\pi}(X)}{{\pi}(X)} \bigg(  \bigg( \frac{\diff}{\diff t} \bigg[ \hat{\mathbb{F}}_1^{-1}\big(\tilde{u}^{*}(t) \mid X\big) \bigg] \bigg\vert_{t=0} +  \frac{1}{\hat{\mathbb{P}}\Big({Y} = \hat{\mathbb{F}}_1^{-1}(\tilde{u}^{*}(0) \mid X) \mid X, A = 1 \Big)} \frac{1}{{\mathbb{P}}\Big({Y} = {\mathbb{F}}_1^{-1}(\tilde{u}^{*}(0) \mid X) \mid X, A = 1 \Big)} \bigg) \, 0 \nonumber \\
        & \qquad  + \frac{\mathbb{E} \Big[ - \delta\{Y \le  \mathbb{F}_1^{-1}(\tilde{u}^{*}(0) \mid X)\} \mid X, A = 1 \Big] \Big(\frac{\diff}{\diff t} \Big[\mathbb{F}_1^{-1}(\tilde{u}^{*}(t) \mid X)\Big] \Big\vert_{t=0}  + \big(\hat{\mathbb{F}}_1^{-1}(\tilde{u}^{*}(0) \mid X) - \mathbb{F}_1^{-1}(\tilde{u}^{*}(0) \mid X) \big) \Big)     - \frac{\diff}{\diff t} \bigg[ \tilde{u}^{*}(t) \bigg] \bigg\vert_{t=0}}{{\mathbb{P}}\Big({Y} = {\mathbb{F}}_1^{-1}(\tilde{u}^{*}(0) \mid X) \mid X, A = 1 \Big)} \nonumber \\
        & \qquad - \frac{1 - {\pi}(X)}{1 - {\pi}(X)} \bigg(\frac{\diff}{\diff t} \bigg[\frac{1}{{\mathbb{P}}\big(Y = {\mathbb{F}_0^{-1}}( \tilde{u}^*(t) - \alpha + 0 \mid X ) \mid X, A = 0\big)} \bigg]\bigg\vert_{t=0} \, 0 \\
        & \qquad+ \frac{\mathbb{E} \Big[ - \delta\{Y - \mathbb{F}_0^{-1}(\tilde{u}^{*}(0) - \alpha + 0 \mid X) \} \mid X, A = 0\Big] \frac{\diff}{\diff t} \bigg[ \mathbb{F}_0^{-1}(\tilde{u}^{*}(t) - \alpha + 0 \mid X) \bigg] \bigg\vert_{t=0} - \frac{\diff}{\diff t} \bigg[ \tilde{u}^{*}(t) \bigg] \bigg\vert_{t=0} }{{\mathbb{P}}\big(Y = {\mathbb{F}_0^{-1}}( \tilde{u}^*(0) - \alpha + 0 \mid X ) \mid X, A = 0\big)} \bigg) \bigg) \Bigg]\big(g^{-1}(\alpha, X) - g_*^{-1}(\alpha, X)\big)  \diff{\alpha} \Bigg] \nonumber \\
        = &  -2 \mathbb{E}_X \Bigg[ \int_{0}^1 \Bigg[ \hat{\mathbb{F}}_1^{-1}\big(\tilde{u}^{*}(0) \mid X\big) - {\mathbb{F}_1^{-1}}\big(\tilde{u}^{*}(0) \mid X\big) \nonumber \\
        & \qquad \qquad  + \gamma \bigg( \frac{ \frac{\diff}{\diff t} \Big[ \tilde{u}^{*}(t) \Big] \Big\vert_{t=0}  - {\mathbb{P}}\Big({Y} = {\mathbb{F}}_1^{-1}(\tilde{u}^{*}(0) \mid X) \mid X, A = 1 \Big)  \big(\hat{\mathbb{F}}_1^{-1}(\tilde{u}^{*}(0) \mid X) - \mathbb{F}_1^{-1}(\tilde{u}^{*}(0) \mid X) \big) \Big)     - \frac{\diff}{\diff t} \bigg[ \tilde{u}^{*}(t) \bigg] \bigg\vert_{t=0}}{{\mathbb{P}}\Big({Y} = {\mathbb{F}}_1^{-1}(\tilde{u}^{*}(0) \mid X) \mid X, A = 1 \Big)} \\
        & \qquad \qquad \qquad - \frac{\frac{\diff}{\diff t} \bigg[ \tilde{u}^{*}(t) \bigg] \bigg\vert_{t=0}  - \frac{\diff}{\diff t} \bigg[ \tilde{u}^{*}(t) \bigg] \bigg\vert_{t=0} }{{\mathbb{P}}\big(Y = {\mathbb{F}_0^{-1}}( \tilde{u}^*(0) - \alpha + 0 \mid X ) \mid X, A = 0\big)} \bigg) \Bigg] \big(g^{-1}(\alpha, X) - g_*^{-1}(\alpha, X)\big)  \diff{\alpha} \Bigg] \nonumber \\
        = &  -2 \mathbb{E}_X \Bigg[ \int_{0}^1 (1 - \gamma) \Big( \hat{\mathbb{F}}_1^{-1}\big(\tilde{u}^{*}(0) \mid X\big) - {\mathbb{F}_1^{-1}}\big(\tilde{u}^{*}(0) \mid X\big) \Big) \big(g^{-1}(\alpha, X) - g_*^{-1}(\alpha, X)\big)  \diff{\alpha} \Bigg] \underbrace{=}_{\gamma = 1} 0,
    \end{align}
    \endgroup
    where $\tilde{u}^*(t)$ is some value from ${\tilde{u}^{{\underline{*}}}_{[\alpha, 1]}}(\alpha \mid X)$; and ${\tilde{u}^{{\underline{*}}}_{[\alpha, 1]}}(\alpha \mid X)$ is the argmin set of the convolution $({\mathbb{F}_1^{-1} + t(\hat{\mathbb{F}}_1^{-1} - \mathbb{F}_1^{-1}}) \, \underline{{*}} \, {\mathbb{F}_0^{-1}})_{[\alpha, 1]}(\cdot - 0 \mid X)$. The cross-derivatives for the upper bound wrt. $\mathbb{F}_1^{-1}$ and for both upper and lower bounds wrt. $\mathbb{F}_0^{-1}$ follow similarly.

    \textit{2. Quasi-oracle efficiency.} The result is a direct application of Theorem 1 in \cite{foster2023orthogonal}: It is easy to see that the Assumptions 1--4 from \cite{foster2023orthogonal} hold for the population versions of the empirical risks of our \longAUlearner (i.e., CRPS and $W_2^2$). In the following, we provide the derivation of the quasi-oracle efficiency property for the population version of  CRPS loss (the derivation is similar to one in \cite{morzywolek2023general,vansteelandt2023orthogonal}). 
    
    We first apply a functional Taylor expansion of ${\overline{\underline{\mathcal{L}}}}_{\text{AU}, \text{CRPS}}(\widehat{\underline{\overline{g}}}, \hat{\eta})$ at the $g_*$:
    \begingroup\makeatletter\def\f@size{8}\check@mathfonts
    \begin{align}
        {\overline{\underline{\mathcal{L}}}}_{\text{AU}, \text{CRPS}}(\widehat{\underline{\overline{g}}}, \hat{\eta}) & = \mathbb{E}\Big[\int_{\mathit{\Delta}}\left( {\underline{\overline{\mathbf{F}}}}_{\text{AU}}(\delta, Z; \hat{\eta}, \gamma) - \widehat{\underline{\overline{g}}}(\delta, X) + g_*(\delta, X) - g_*(\delta, X) \right)^2 \diff{\delta} \Big] \\
        & = {\overline{\underline{\mathcal{L}}}}_{\text{AU}, \text{CRPS}}(g_*, \hat{\eta}) - 2 \mathbb{E}\Big[\int_{\mathit{\Delta}} \left( {\underline{\overline{\mathbf{F}}}}_{\text{AU}}(\delta, Z; \hat{\eta}, \gamma) - g_*(\delta, X) \right) \left(\widehat{\underline{\overline{g}}}(\delta, X) - g_*(\delta, X) \right) \diff{\delta} \Big] \\
        & \quad + \mathbb{E}\Big[\int_{\mathit{\Delta}} \left(\widehat{\underline{\overline{g}}}(\delta, X) - g_*(\delta, X) \right)^2 \diff{\delta} \Big] \nonumber.
    \end{align}
    \endgroup

    Therefore, the following holds:
    \begingroup\makeatletter\def\f@size{8}\check@mathfonts
    \begin{align}
        \norm{\widehat{\underline{\overline{g}}} - g_*}_{2, \text{CRPS}}^2 & = \mathbb{E}\Big[\int_{\mathit{\Delta}} \left(\widehat{\underline{\overline{g}}}(\delta, X) - g_*(\delta, X) \right)^2 \diff{\delta} \Big] = {\overline{\underline{\mathcal{L}}}}_{\text{AU}, \text{CRPS}}(\widehat{\underline{\overline{g}}}, \hat{\eta}) - {\overline{\underline{\mathcal{L}}}}_{\text{AU}, \text{CRPS}}(g_*, \hat{\eta}) \\
        & \quad +  2 \mathbb{E}\Big[\int_{\mathit{\Delta}} \left( {\underline{\overline{\mathbf{F}}}}_{\text{AU}}(\delta, Z; \hat{\eta}, \gamma) - g_*(\delta, X) \right) \left(\widehat{\underline{\overline{g}}}(\delta, X) - g_*(\delta, X) \right) \diff{\delta} \Big] \nonumber.
    \end{align}
    \endgroup

    The latter term then also allows for the distributional Taylor expansion around $\eta$:
    \begingroup\makeatletter\def\f@size{8}\check@mathfonts
    \begin{align}
    & 2 \mathbb{E}\Big[\int_{\mathit{\Delta}} \left( {\underline{\overline{\mathbf{F}}}}_{\text{AU}}(\delta, Z; \hat{\eta}, \gamma) - \underline{\overline{\mathbf{F}}}(\delta \mid X) + \underline{\overline{\mathbf{F}}}(\delta \mid X) - g_*(\delta, X)  \right) \left(\widehat{\underline{\overline{g}}}(\delta, X) - g_*(\delta, X) \right) \diff{\delta} \Big] \\ 
    & = 2 \mathbb{E}\Big[\int_{\mathit{\Delta}} \left( {\underline{\overline{\mathbf{F}}}}_{\text{AU}}(\delta, Z; \hat{\eta}, \gamma) - \underline{\overline{\mathbf{F}}}(\delta \mid X) \right) \left(\widehat{\underline{\overline{g}}}(\delta, X) - g_*(\delta, X) \right) \diff{\delta} \Big] \\
    & \quad + 2 \mathbb{E}\Big[\int_{\mathit{\Delta}} \left( \underline{\overline{\mathbf{F}}}(\delta \mid X) - g_*(\delta, X)  \right) \left(\widehat{\underline{\overline{g}}}(\delta, X) - g_*(\delta, X) \right) \diff{\delta} \Big], \nonumber
    \end{align}
    \endgroup
    where the first term is known as a second-order remainder term, $R_2(\eta, \hat{\eta})$, and the second term equals to $- D_g {\overline{\underline{\mathcal{L}}}}_{\text{AU}, \text{CRPS}}(g_*, \eta) [\widehat{\underline{\overline{g}}} - g_*]$. $R_2(\eta, \hat{\eta})$ can be further expressed as 
    \begingroup\makeatletter\def\f@size{8}\check@mathfonts
    \begin{align}
    &R_2(\eta, \hat{\eta}) =  2 \mathbb{E}\Big[\int_{\mathit{\Delta}} \left({\underline{\overline{\mathbf{F}}}}_{\text{PI}}(\delta \mid X; \hat{\eta}) +  \gamma \,\underline{\overline{C}}(\delta, Z; \hat{\eta}) - \underline{\overline{\mathbf{F}}}(\delta \mid X)  \right) \left(\widehat{\underline{\overline{g}}}(\delta, X) - g_*(\delta, X) \right) \diff{\delta} \Big] \\
    & = 2 \mathbb{E}\Bigg[\int_{\mathit{\Delta}} I(X; \hat{\eta}) \bigg( \gamma \bigg[ \frac{A}{\hat{\pi}(X)} \Big( \mathbbm{1}\{Y \le  \hat{y}^{*}\} - \hat{\mathbb{F}}_1\big(\hat{y}^{*} \mid X\big)\Big) - \frac{1-A}{1-\hat{\pi}(X)} \Big( \mathbbm{1}\{Y \le  \hat{y}^{*} - \delta \} - \hat{\mathbb{F}}_0\big(\hat{y}^{*} - \delta \mid X\big)\Big) \bigg] \\
    & \quad \quad \quad \quad + \hat{\mathbb{F}}_1\big(\hat{y}^{*} \mid X\big) - \hat{\mathbb{F}}_0\big(\hat{y}^{*} - \delta \mid X\big) \bigg) - I(X; {\eta}) \bigg(  \Big({\mathbb{F}}_1\big({y}^{*} \mid X\big) - {\mathbb{F}}_0\big({y}^{*} - \delta \mid X\big)\Big) \bigg) \left(\widehat{\underline{\overline{g}}}(\delta, X) - g_*(\delta, X) \right) \diff{\delta} \Bigg] \nonumber
    \end{align}
    \endgroup
    \begingroup\makeatletter\def\f@size{8}\check@mathfonts
    \begin{align}
    & = 2 \mathbb{E}\Bigg[\int_{\mathit{\Delta}} \gamma I(X; \hat{\eta}) \bigg( \bigg[ \frac{{\pi}(X)}{\hat{\pi}(X)} \Big( {\mathbb{F}}_1\big(\hat{y}^{*} \mid X\big) - \hat{\mathbb{F}}_1\big(\hat{y}^{*} \mid X\big)\Big) - \frac{1-\pi(X)}{1-\hat{\pi}(X)} \Big( {\mathbb{F}}_0\big(\hat{y}^{*} - \delta \mid X\big) - \hat{\mathbb{F}}_0\big(\hat{y}^{*} - \delta \mid X\big)\Big) \bigg] \nonumber \\
    & \quad \quad \quad - \Big({\mathbb{F}}_1\big(\hat{y}^{*} \mid X\big) - \hat{\mathbb{F}}_1\big(\hat{y}^{*} \mid X\big)\Big) + {\mathbb{F}}_0\big(\hat{y}^{*} - \delta \mid X\big) - \hat{\mathbb{F}}_0\big(\hat{y}^{*} - \delta \mid X\big)\bigg) \left(\widehat{\underline{\overline{g}}}(\delta, X) - g_*(\delta, X) \right) \diff{\delta} \Bigg] \nonumber \\
    & \quad + 2 \mathbb{E}\Bigg[\int_{\mathit{\Delta}}  \bigg( \big(I(X; \hat{\eta}) - I(X; {\eta}) \big) \Big({\mathbb{F}}_1\big(\hat{y}^{*} \mid X\big) - {\mathbb{F}}_0\big(\hat{y}^{*} - \delta \mid X\big) \Big) \\
    & \quad \quad \quad +  I(X; {\eta}) \Big({\mathbb{F}}_1\big(\hat{y}^{*} \mid X\big) - {\mathbb{F}}_0\big(\hat{y}^{*} - \delta \mid X\big) - {\mathbb{F}}_1\big({y}^{*} \mid X\big) - {\mathbb{F}}_0\big({y}^{*} - \delta \mid X\big)\Big) \bigg) \left(\widehat{\underline{\overline{g}}}(\delta, X) - g_*(\delta, X) \right) \diff{\delta} \Bigg] \nonumber \\
    & = 2 \mathbb{E}\Bigg[\int_{\mathit{\Delta}} I(X; \hat{\eta}) {\pi}(X) \bigg(  \frac{\gamma }{\hat{\pi}(X)} - \frac{1}{{\pi}(X)} \bigg) \Big({\mathbb{F}}_1\big(\hat{y}^{*} \mid X\big) - \hat{\mathbb{F}}_1\big(\hat{y}^{*} \mid X\big)\Big) \left(\widehat{\underline{\overline{g}}}(\delta, X) - g_*(\delta, X) \right) \diff{\delta} \Bigg] \nonumber \\
    & \quad -  2 \mathbb{E}\Bigg[\int_{\mathit{\Delta}} I(X; \hat{\eta}) (1 - {\pi}(X)) \bigg(  \frac{\gamma}{1 - \hat{\pi}(X)} - \frac{1}{1 - {\pi}(X)} \bigg) \Big( {\mathbb{F}}_0\big(\hat{y}^{*} - \delta \mid X\big) - \hat{\mathbb{F}}_0\big(\hat{y}^{*} - \delta \mid X\big)\Big) \left(\widehat{\underline{\overline{g}}}(\delta, X) - g_*(\delta, X) \right) \diff{\delta} \Bigg] \nonumber \\
    & \quad + 2 \mathbb{E}\Bigg[\int_{\mathit{\Delta}}  \big(I(X; \hat{\eta}) - I(X; {\eta}) \big) \Big({\mathbb{F}}_1\big(\hat{y}^{*} \mid X\big) - {\mathbb{F}}_0\big(\hat{y}^{*} - \delta \mid X\big) \Big) \left(\widehat{\underline{\overline{g}}}(\delta, X) - g_*(\delta, X) \right) \diff{\delta} \Bigg] \\
    & \quad + \mathbb{E}\Bigg[\int_{\mathit{\Delta}} I(X; {\eta})  \frac{1}{2}\frac{\diff^2}{\diff{y}^2} \Big[{\mathbb{F}}_1\big(y \mid X\big) - {\mathbb{F}}_0\big(y - \delta \mid X\big) \Big] \bigg\vert_{y = \tilde{y}^{*}} ({y}^{*} - \hat{y}^{*})^2  \left(\widehat{\underline{\overline{g}}}(\delta, X) - g_*(\delta, X) \right) \diff{\delta} \Bigg], \nonumber
    \end{align}
    \endgroup
    where $I(X; {\eta}) = \mathbbm{1}\big\{({{\mathbb{F}}_1} \, {\overline{*}} \, {\mathbb{F}_0})_{\mathcal{Y}}(\delta \mid X) > 0 \big\}$ or $\mathbbm{1}\big\{({{\mathbb{F}}_1} \, {\underline{*}} \, {\mathbb{F}_0})_{\mathcal{Y}}(\delta \mid X) < 0 \big\}$; $y^*$ is some value from ${y^{\underline{\overline{*}}}_\mathcal{Y}}(\delta \mid X)$; $\hat{y}^*$ is some value from ${\hat{y}^{\underline{\overline{*}}}_\mathcal{Y}}(\delta \mid X)$; and $\tilde{y}^*$ is a value between $y^*$ and $\hat{y}^*$.

    Considering there is such an $\epsilon > 0$, for which $\epsilon \le \hat{\pi}(x) \le (1 - \epsilon)$, the second-order remainder term can be upper bounded with the Cauchy-Schwarz inequality:
    \begingroup\makeatletter\def\f@size{8}\check@mathfonts
    \begin{align}
        \abs{R_2(\eta, \hat{\eta})} & \le \frac{2}{\epsilon} \, \mathbb{E}\Bigg[ \abs{\gamma {\pi}(X) - \hat{\pi}(X)} \int_{\mathit{\Delta}} C_{I, \hat{\eta}}(\delta, X)  \abs{{\mathbb{F}}_1\big(\hat{y}^{*} \mid X\big) - \hat{\mathbb{F}}_1\big(\hat{y}^{*} \mid X\big)} \abs{\widehat{\underline{\overline{g}}}(\delta, X) - g_*(\delta, X)} \diff{\delta} \Bigg]  \\
        & \quad + \frac{2}{\epsilon} \, \mathbb{E}\Bigg[ \abs{\gamma ({\pi}(X) - 1) -  (\hat{\pi}(X) - 1)} \int_{\mathit{\Delta}}  C_{I, \hat{\eta}}(\delta, X) \abs{{\mathbb{F}}_0\big(\hat{y}^{*} - \delta \mid X\big) - \hat{\mathbb{F}}_0\big(\hat{y}^{*} - \delta \mid X\big)} \abs{\widehat{\underline{\overline{g}}}(\delta, X) - g_*(\delta, X)} \diff{\delta} \Bigg] \nonumber \\
        & \quad + 2 \mathbb{E} \Bigg[ \int_{\mathit{\Delta}} C_{\mathbb{F}, \hat{y}^*}(\delta, X) \abs{I(X, \hat{\eta}) - I(X, {\eta})}  \abs{\widehat{\underline{\overline{g}}}(\delta, X) - g_*(\delta, X)} \diff{\delta} \Bigg] \nonumber \\
     & \quad + \mathbb{E} \Bigg[ \int_{\mathit{\Delta}}  C_{y^*}(\delta, X)
      ({y}^{*} - \hat{y}^{*})^2  \abs{\widehat{\underline{\overline{g}}}(\delta, X) - g_*(\delta, X)} \diff{\delta} \Bigg] \nonumber \\
     & \le \frac{2}{\epsilon} \norm{\gamma {\pi} - \hat{\pi}}_{L_4} \, \norm{C_{I, \hat{\eta}}}_{\infty, \text{CRPS}} \, \norm{{\mathbb{F}}_1 \circ \hat{y}^{*} - \hat{\mathbb{F}}_1 \circ \hat{y}^{*} }_{4, \text{CRPS}} \, \norm{\widehat{\underline{\overline{g}}} - g_*}_{2, \text{CRPS}} \\
     & \quad + \frac{2}{\epsilon} \Big(\norm{\gamma {\pi} - \hat{\pi}}_{L_4} + \abs{\gamma - 1} \Big) \, \norm{C_{I, \hat{\eta}}}_{\infty, \text{CRPS}} \,  \norm{{\mathbb{F}}_0 \circ (\hat{y}^{*} - \cdot) - \hat{\mathbb{F}}_0 \circ (\hat{y}^{*} - \cdot) }_{4, \text{CRPS}} \, \norm{\widehat{\underline{\overline{g}}} - g_*}_{2, \text{CRPS}} \nonumber \\
     & \quad + 2 \norm{C_{\mathbb{F}, \hat{y}^*}}_{\infty, \text{CRPS}} \, \norm{I(\cdot, \hat{\eta}) - I(\cdot, {\eta})}_{2, \text{CRPS}} \, \norm{\widehat{\underline{\overline{g}}} - g_*}_{2, \text{CRPS}} \nonumber \\
      & \quad + \norm{C_{y^*}}_{L_\infty} \norm{{y}^{*} - \hat{y}^{*}}^2_{4, \text{CRPS}} \norm{\widehat{\underline{\overline{g}}} - g_*}_{2, \text{CRPS}}, \nonumber
    \end{align}
    \endgroup
    where $C_{I, \hat{\eta}}(\delta, x) = \abs{I(x, \hat{\eta})}$;  $C_{y^*}(\delta, x) = I(x; {\eta}) \frac{1}{2} \abs{\frac{\diff^2}{\diff{y}^2} \Big[{\mathbb{F}}_1\big(y \mid x\big) - {\mathbb{F}}_0\big(y - \delta \mid x\big) \Big] \Big\vert_{y = \tilde{y}^{*}}}$; and $C_{\mathbb{F}, \hat{y}^*}(\delta, x) = {\mathbb{F}}_1\big(\hat{y}^{*} \mid x\big) - {\mathbb{F}}_0\big(\hat{y}^{*} - \delta \mid x\big)$. 

    By combining the previous expressions, the following holds:
    \begingroup\makeatletter\def\f@size{8}\check@mathfonts
    \begin{align}
        \norm{\widehat{\underline{\overline{g}}} - g_*}_{2, \text{CRPS}}^2 & \le {\overline{\underline{\mathcal{L}}}}_{\text{AU}, \text{CRPS}}(\widehat{\underline{\overline{g}}}, \hat{\eta}) - {\overline{\underline{\mathcal{L}}}}_{\text{AU}, \text{CRPS}}(g_*, \hat{\eta}) - D_g {\overline{\underline{\mathcal{L}}}}_{\text{AU}, \text{CRPS}}(g_*, \eta) [\widehat{\underline{\overline{g}}} - g_*] \\
        & \quad + \frac{2}{\epsilon} \norm{C_{I, \hat{\eta}}}_{\infty, \text{CRPS}}  \, \norm{\gamma {\pi} - \hat{\pi}}_{L_4} \, \norm{{\mathbb{F}}_1 \circ \hat{y}^{*} - \hat{\mathbb{F}}_1 \circ \hat{y}^{*} }_{4, \text{CRPS}} \, \norm{\widehat{\underline{\overline{g}}} - g_*}_{2, \text{CRPS}} \nonumber \\
         & \quad + \frac{2}{\epsilon} \norm{C_{I, \hat{\eta}}}_{\infty, \text{CRPS}} \Big(\norm{\gamma {\pi} - \hat{\pi}}_{L_4} + \abs{\gamma - 1} \Big) \norm{{\mathbb{F}}_0 \circ (\hat{y}^{*} - \cdot) - \hat{\mathbb{F}}_0 \circ (\hat{y}^{*} - \cdot) }_{4, \text{CRPS}} \, \norm{\widehat{\underline{\overline{g}}} - g_*}_{2, \text{CRPS}} \nonumber \\
        & \quad + 2 \norm{C_{\mathbb{F}, \hat{y}^*}}_{\infty, \text{CRPS}} \, \norm{I(\cdot, \hat{\eta}) - I(\cdot, {\eta})}_{2, \text{CRPS}} \, \norm{\widehat{\underline{\overline{g}}} - g_*}_{2, \text{CRPS}} \nonumber \\
        & \quad + \norm{C_{y^*}}_{L_\infty} \norm{{y}^{*} - \hat{y}^{*}}_{4, \text{CRPS}}^2 \norm{\widehat{\underline{\overline{g}}} - g_*}_{2, \text{CRPS}}. \nonumber
    \end{align}
    \endgroup
    Furthermore, using the AM-GM inequality for the last three terms, for any constants $\delta_1 > 0$, $\delta_2 > 0$, $\delta_3 > 0$, $\delta_4 > 0$, $\delta_1 + \delta_2 + \delta_3 + \delta_4 < 1$, we obtain:
    \begingroup\makeatletter\def\f@size{9}\check@mathfonts
    \begin{align}
        \norm{\widehat{\underline{\overline{g}}} - g_*}_{2, \text{CRPS}}^2 & \le \frac{1}{1 - \delta_1 - \delta_2 - \delta_3 - \delta_4} \Big( {\overline{\underline{\mathcal{L}}}}_{\text{AU}, \text{CRPS}}(\widehat{\underline{\overline{g}}}, \hat{\eta}) - {\overline{\underline{\mathcal{L}}}}_{\text{AU}, \text{CRPS}}(g_*, \hat{\eta}) - D_g {\overline{\underline{\mathcal{L}}}}_{\text{AU}, \text{CRPS}}(g_*, \eta) [\widehat{\underline{\overline{g}}} - g_*] \Big) \nonumber\\ 
         & \quad + \frac{1}{\epsilon \delta_1} \norm{C_{I, \hat{\eta}}}_{\infty, \text{CRPS}}^2 \norm{\gamma {\pi} - \hat{\pi}}_{L_4}^2 \, \norm{{\mathbb{F}}_1 \circ \hat{y}^{*} - \hat{\mathbb{F}}_1 \circ \hat{y}^{*} }_{4, \text{CRPS}}^2 \\
         & \quad + \frac{1}{\epsilon \delta_2} \norm{C_{I, \hat{\eta}}}_{\infty, \text{CRPS}}^2 \Big(\norm{\gamma {\pi} - \hat{\pi}}_{L_4} + \abs{\gamma - 1} \Big)^2 \norm{{\mathbb{F}}_0 \circ (\hat{y}^{*} - \cdot) - \hat{\mathbb{F}}_0 \circ (\hat{y}^{*} - \cdot) }_{4, \text{CRPS}}^2 \nonumber \\
         & \quad + \frac{1}{\delta_3} \norm{C_{\mathbb{F}, \hat{y}^*}}_{\infty, \text{CRPS}}^2 \, \norm{I(\cdot, \hat{\eta}) - I(\cdot, {\eta})}_{2, \text{CRPS}}^2 \nonumber \\
         & \quad + \frac{1}{2\delta_4} \norm{C_{y^*}}_{L_\infty}^2 \norm{{y}^{*} - \hat{y}^{*}}_{4, \text{CRPS}}^4. \nonumber
    \end{align}
    \endgroup
    We note that, given Assumptions ~\ref{ass:finite-arg} and \ref{ass:lin-rect}, the term $\norm{I(\cdot, \hat{\eta}) - I(\cdot, {\eta})}_{2, \text{CRPS}}^2$ can be upper bounded as follows
    \begingroup\makeatletter\def\f@size{9}\check@mathfonts
    \begin{align}
        \norm{I(\cdot, \hat{\eta}) - I(\cdot, {\eta})}_{2, \text{CRPS}}^2 & =  \mathbb{E} \Bigg[\int_{\mathit{\Delta}}\bigg(\mathbbm{1}\big\{({\hat{\mathbb{F}}_1} \, {\overline{*}} \, \hat{\mathbb{F}}_0)_{\mathcal{Y}}(\delta \mid X) > 0 \big\} - \mathbbm{1}\big\{({{\mathbb{F}}_1} \, {\overline{*}} \, {\mathbb{F}_0})_{\mathcal{Y}}(\delta \mid X) > 0 \big\} \bigg)^2\diff{\delta} \Bigg] \\
         & = \mathbb{E} \Bigg[\int_{\mathit{\Delta}} \abs{\mathbbm{1}\big\{({\hat{\mathbb{F}}_1} \, {\overline{*}} \, \hat{\mathbb{F}}_0)_{\mathcal{Y}}(\delta \mid X) > 0 \big\} - \mathbbm{1}\big\{({{\mathbb{F}}_1} \, {\overline{*}} \, {\mathbb{F}_0})_{\mathcal{Y}}(\delta \mid X) > 0 \big\} } \diff{\delta} \Bigg] \\
         & = \int_{\mathit{\Delta}} \mathbb{P} \bigg\{\operatorname{sign}\big\{({\hat{\mathbb{F}}_1} \, {\overline{*}} \, \hat{\mathbb{F}}_0)_{\mathcal{Y}}(\delta \mid X)\big\} \neq \operatorname{sign}\big\{({{\mathbb{F}}_1} \, {\overline{*}} \, {\mathbb{F}_0})_{\mathcal{Y}}(\delta \mid X) \big\} \bigg\} \diff{\delta} \\
         & \le \int_{\mathit{\Delta}} \mathbb{P} \bigg\{ \abs{({\hat{\mathbb{F}}_1} \, {\overline{*}} \, \hat{\mathbb{F}}_0)_{\mathcal{Y}}(\delta \mid X) - ({{\mathbb{F}}_1} \, {\overline{*}} \, {\mathbb{F}_0})_{\mathcal{Y}}(\delta \mid X) } \ge \xi \bigg\} \diff{\delta} \\
         & \stackrel{(*)}{\le} \frac{1}{\xi^4} \int_{\mathit{\Delta}} \mathbb{E} \bigg[ \Big(({\hat{\mathbb{F}}_1} \, {\overline{*}} \, \hat{\mathbb{F}}_0)_{\mathcal{Y}}(\delta \mid X) - ({{\mathbb{F}}_1} \, {\overline{*}} \, {\mathbb{F}_0})_{\mathcal{Y}}(\delta \mid X) \Big)^4  \bigg] \diff{\delta} \\
         & = \frac{1}{\xi^4} \bigg(\norm{{\mathbb{F}}_1 \circ {y}^{*} - \hat{\mathbb{F}}_1 \circ \hat{y}^{*} }_{4, \text{CRPS}}^4 + \norm{{\mathbb{F}}_0 \circ ({y}^{*} - \cdot) - \hat{\mathbb{F}}_0 \circ (\hat{y}^{*} - \cdot) }_{4, \text{CRPS}}^4 \\
         &- \quad 2 \norm{{\mathbb{F}}_1 \circ {y}^{*} - \hat{\mathbb{F}}_1 \circ \hat{y}^{*} }_{4, \text{CRPS}}^2 \, \norm{{\mathbb{F}}_0 \circ ({y}^{*} - \cdot) - \hat{\mathbb{F}}_0 \circ (\hat{y}^{*} - \cdot) }_{4, \text{CRPS}}^2  \bigg), \nonumber
    \end{align}
    \endgroup
    where $\xi > 0$ and $(*)$ holds due to Markov's inequality. 
    
    On the other hand, since the term $D_g {\overline{\underline{\mathcal{L}}}}_{\text{AU}, \text{CRPS}}(g_*, \eta) [\widehat{\underline{\overline{g}}} - g_*] $ is always positive (by virtue of the optimization), we finally yield the desired quasi-oracle efficiency:
    \begingroup\makeatletter\def\f@size{9}\check@mathfonts
    \begin{align}
        \norm{\widehat{\underline{\overline{g}}} - g_*}_{2, \text{CRPS}}^2 & \le \frac{1}{1 - \delta_1 - \delta_2 - \delta_3 - \delta_4} \Big( {\overline{\underline{\mathcal{L}}}}_{\text{AU}, \text{CRPS}}(\widehat{\underline{\overline{g}}}, \hat{\eta}) - {\overline{\underline{\mathcal{L}}}}_{\text{AU}, \text{CRPS}}(g_*, \hat{\eta}) \Big)\\ 
         & \quad + \frac{1}{\epsilon \delta_1} \norm{C_{I, \hat{\eta}}}_{\infty, \text{CRPS}}^2 \norm{\gamma {\pi} - \hat{\pi}}_{L_4}^2 \, \norm{{\mathbb{F}}_1 \circ \hat{y}^{*} - \hat{\mathbb{F}}_1 \circ \hat{y}^{*} }_{4, \text{CRPS}}^2 \nonumber \\
         & \quad + \frac{1}{\epsilon \delta_2} \norm{C_{I, \hat{\eta}}}_{\infty, \text{CRPS}}^2, \Big(\norm{\gamma {\pi} - \hat{\pi}}_{L_4} + \abs{\gamma - 1} \Big)^2 \norm{{\mathbb{F}}_0 \circ (\hat{y}^{*} - \cdot) - \hat{\mathbb{F}}_0 \circ (\hat{y}^{*} - \cdot) }_{4, \text{CRPS}}^2 \nonumber \\
         & \quad + \frac{1}{\delta_3 \xi^4} \norm{C_{\mathbb{F}, \hat{y}^*}}_{\infty, \text{CRPS}}^2 \, \bigg( \norm{{\mathbb{F}}_1 \circ {y}^{*} - \hat{\mathbb{F}}_1 \circ \hat{y}^{*} }_{4, \text{CRPS}}^4 + \norm{{\mathbb{F}}_0 \circ ({y}^{*} - \cdot) - \hat{\mathbb{F}}_0 \circ (\hat{y}^{*} - \cdot) }_{4, \text{CRPS}}^4
         \nonumber \\
         & \qquad + 2 \norm{{\mathbb{F}}_1 \circ {y}^{*} - \hat{\mathbb{F}}_1 \circ \hat{y}^{*} }_{4, \text{CRPS}}^2 \, \norm{{\mathbb{F}}_0 \circ ({y}^{*} - \cdot) - \hat{\mathbb{F}}_0 \circ (\hat{y}^{*} - \cdot) }_{4, \text{CRPS}}^2  \bigg) \nonumber \\
         & \quad + \frac{1}{2\delta_4} \norm{C_{y^*}}_{L_\infty}^2 \norm{{y}^{*} - \hat{y}^{*}}_{4, \text{CRPS}}^4 \nonumber \\
         & \underbrace{\le}_{\gamma = 1} \frac{1}{1 - \delta_1 - \delta_2 - \delta_3} \Big( {\overline{\underline{\mathcal{L}}}}_{\text{AU}, \text{CRPS}}(\widehat{\underline{\overline{g}}}, \hat{\eta}) - {\overline{\underline{\mathcal{L}}}}_{\text{AU}, \text{CRPS}}(g_*, \hat{\eta}) \Big) \\ 
         & \quad + \frac{1}{\epsilon \delta_1} \norm{C_{I, \hat{\eta}}}_{\infty, \text{CRPS}}^2 \norm{{\pi} - \hat{\pi}}_{L_4}^2 \, \norm{{\mathbb{F}}_1 \circ \hat{y}^{*} - \hat{\mathbb{F}}_1 \circ \hat{y}^{*} }_{4, \text{CRPS}}^2 \nonumber \\
         & \quad + \frac{1}{\epsilon \delta_2} \norm{C_{I, \hat{\eta}}}_{\infty, \text{CRPS}}^2 \norm{{\pi} - \hat{\pi}}_{L_4}^2 \norm{{\mathbb{F}}_0 \circ (\hat{y}^{*} - \cdot) - \hat{\mathbb{F}}_0 \circ (\hat{y}^{*} - \cdot) }_{4, \text{CRPS}}^2 \nonumber \\
         & \quad + \frac{1}{\delta_3 \xi^4} \norm{C_{\mathbb{F}, \hat{y}^*}}_{\infty, \text{CRPS}}^2 \, \bigg( \norm{{\mathbb{F}}_1 \circ {y}^{*} - \hat{\mathbb{F}}_1 \circ \hat{y}^{*} }_{4, \text{CRPS}}^4 + \norm{{\mathbb{F}}_0 \circ ({y}^{*} - \cdot) - \hat{\mathbb{F}}_0 \circ (\hat{y}^{*} - \cdot) }_{4, \text{CRPS}}^4
         \nonumber \\
         & \qquad + 2 \norm{{\mathbb{F}}_1 \circ {y}^{*} - \hat{\mathbb{F}}_1 \circ \hat{y}^{*} }_{4, \text{CRPS}}^2 \, \norm{{\mathbb{F}}_0 \circ ({y}^{*} - \cdot) - \hat{\mathbb{F}}_0 \circ (\hat{y}^{*} - \cdot) }_{4, \text{CRPS}}^2  \bigg) \nonumber \\
          & \quad + \frac{1}{2\delta_4} \norm{C_{y^*}}_{L_\infty}^2 \norm{{y}^{*} - \hat{y}^{*}}_{4, \text{CRPS}}^4, \nonumber
    \end{align}
    \endgroup
    where $C_{I, \hat{\eta}}(\delta, x) = \abs{I(x, \hat{\eta})}$;  $C_{y^*}(\delta, x) = I(x; {\eta}) \frac{1}{2} \abs{\frac{\diff^2}{\diff{y}^2} \Big[{\mathbb{F}}_1\big(y \mid x\big) - {\mathbb{F}}_0\big(y - \delta \mid x\big) \Big] \Big\vert_{y = \tilde{y}^{*}}}$; and $C_{\mathbb{F}, \hat{y}^*}(\delta, x) = {\mathbb{F}}_1\big(\hat{y}^{*} \mid x\big) - {\mathbb{F}}_0\big(\hat{y}^{*} - \delta \mid x\big)$.

    The quasi-oracle efficiency of the $W_2^2$ loss follows similarly.
\end{proof}

\clearpage
\section{Scaling in pseudo-CDFs and pseudo-quantiles} \label{app:scaling}

Fig.~\ref{fig:pseudo-scaling} provides a visual comparison of different pseudo-CDFs for our \longAUlearner, with and without scaling $\gamma$. Therein, we see that the scaling hyperparameter helps to enforce the constraints on the pseudo-CDFs (\ie, $[0, 1]$-boundedness and monotonicity).

\begin{figure}[ht]
    \centering
    \includegraphics[width=\textwidth]{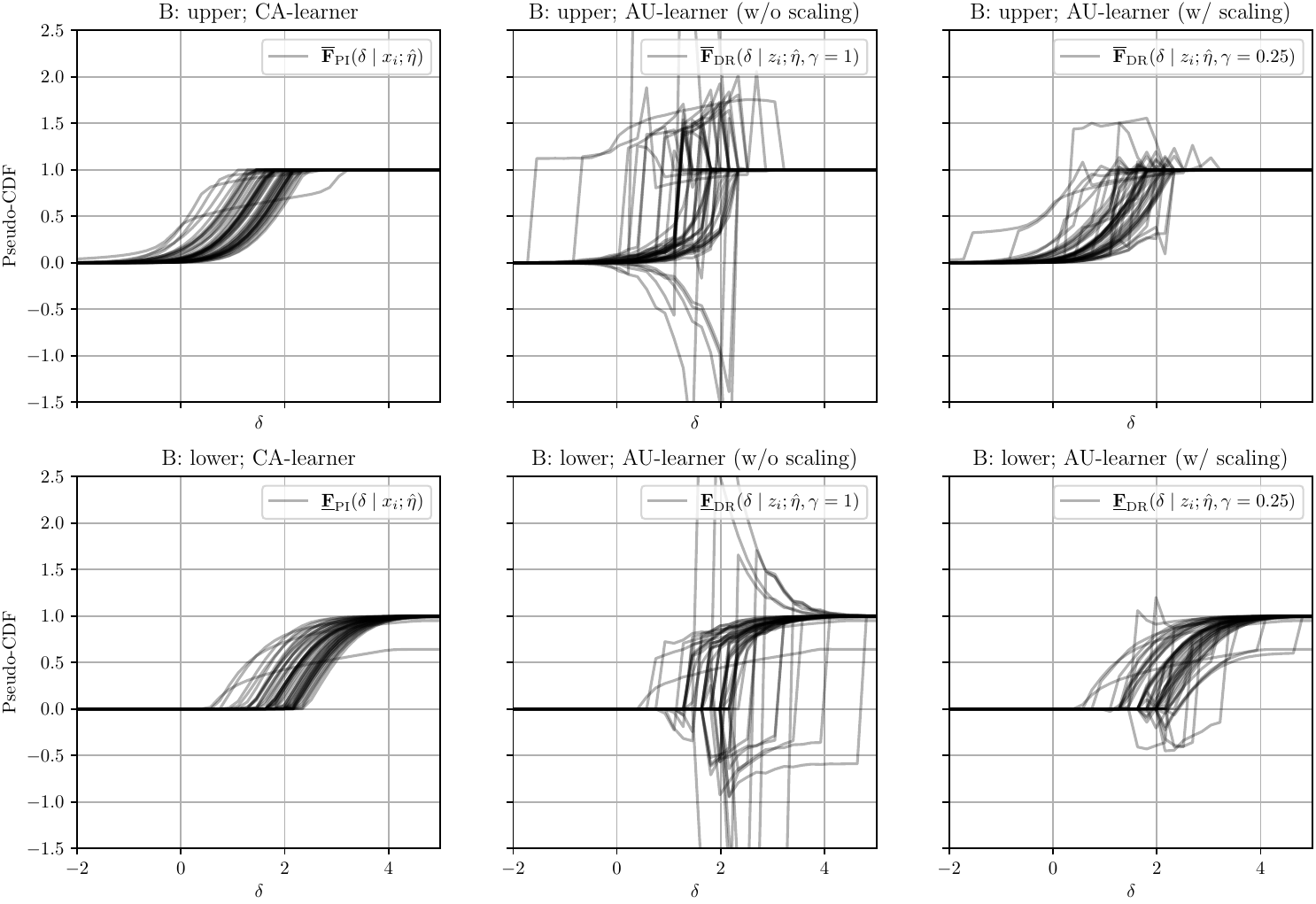}
    \vspace{-0.6cm}
    \caption{Comparison of estimated pseudo-CDFs based on $i = \{1, \dots, 50\}$ instances of the semi-synthetic IHDP100 dataset \cite{hill2011bayesian}. Here, we compare CA-learner's pseudo-CDFs (first column) with two variants of \longAUlearner: w/o scaling ($\gamma = 1$, second column), and w/ scaling ($\gamma = 0.25$, third column). The scaling hyperparameter $\gamma = 0.25$ facilitates pseudo-CDFs to better comply with $[0, 1]$-boundedness and monotonicity constraints.}
    \label{fig:pseudo-scaling}
\end{figure}

\clearpage
\section{Details on \AUCNFs} \label{app:aucnfs}
\vspace{-0.2cm}
\subsection{Architecture}
\begin{figure}[th]
    \centering
    \includegraphics[width=\textwidth]{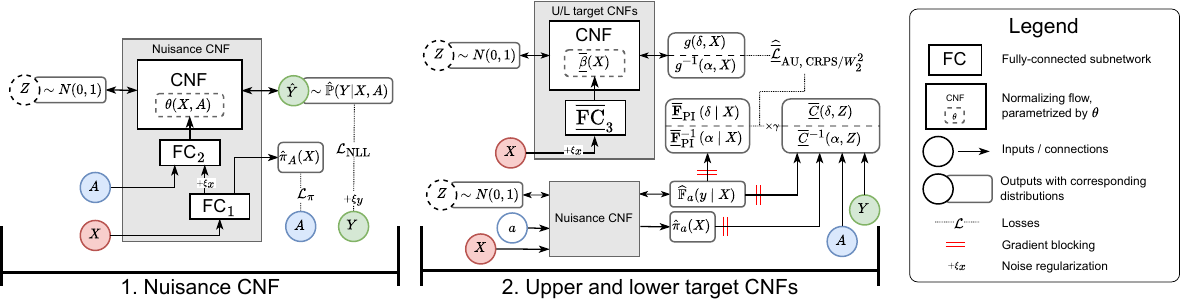}
    \vspace{-0.5cm}
    \caption{Overview of our \AUCNFs. \AUCNFs combine several conditional normalizing flows (CNFs), which we call a nuisance CNF and upper/lower target CNFs. The nuisance CNF is a first stage model and aims at estimating the nuisance functions, \ie, the propensity score, $\hat{\pi}_a(x) = a \hat{\pi}(x) + (1-a) \hat{\pi}(x)$; and the conditional outcome CDFs, $\widehat{F}_a(y \mid x)$. Upper/lower target CNFs are the second stage working models, $\overline{{\mathcal{G}}}$ and ${\underline{\mathcal{G}}}$, respectively. They aim at minimizing one of the losses of \longAUlearner, $\widehat{\underline{\overline{\mathcal{L}}}}_{\text{AU, CRPS} / W_2^2}$.}
    \label{fig:architecture}
\end{figure}

Our \AUCNFs allow us to implement the Algorithm~\ref{alg:au-crps} of our \longAUlearner (see Fig.~\ref{fig:architecture}) by combining several conditional normalizing flows (CNFs) \cite{rezende2015variational,trippe2018conditional}. It consists of a (i)~nuisance CNF and (ii)~two target CNFs (upper and lower). (1)~The nuisance CNF aims to fit the nuisance functions, $(\hat{\pi}, \widehat{\mathbb{F}}_0, \widehat{\mathbb{F}}_1)$ or, equivalently, $(\hat{\pi}, \widehat{\mathbb{F}}_0^{-1}, \widehat{\mathbb{F}}_1^{-1})$. (2)~Upper and lower target CNFs constitute the second stage working models, namely, $\overline{{\mathcal{G}}}$ and ${\underline{\mathcal{G}}}$, and minimize the loss of our \longAUlearner.

\textbf{(1) Nuisance CNF.} The nuisance CNF has three components, similarly to \cite{melnychuk2023normalizing}. These are two fully-connected subnetworks (FC$_1$ and FC$_2$) and a CNF, parametrized by $\theta$. The two subnetworks FC$_1$ and FC$_2$ form a hypernetwork, which outputs the conditional parameters, $\theta = \theta(X, A)$. This allows us to flexibly model the conditional outcome distribution.

The nuisance CNF has the following joint loss for the nuisance functions: $\mathcal{L}_\text{N} = \mathcal{L}_{\text{NLL}} + \alpha \mathcal{L}_\pi$. Here, $\mathcal{L}_{\text{NLL}}$ is a conditional negative log-likelihood loss, $\mathcal{L}_\pi$ is a binary cross-entropy, and $\alpha > 0$ is a hyperparameter. We additionally employed noise regularization to regularize the conditional negative log-likelihood loss \cite{rothfuss2019noise}.

\textbf{(2) Upper and lower target CNFs.} The upper and lower target CNFs use the pseudo-CDFs / pseudo-quantiles, generated by the nuisance CNF, and then implement a second stage loss of our \longAUlearner. Both target CNFs have the same structure. Specifically, they have a fully-connected subnetwork, $\underline{\overline{\text{FC}}}_3$, and a CNF, parametrized by $\underline{\overline{\beta}}$. Analogously, $\underline{\overline{\text{FC}}}_3$ serves as a hypernetwork so that the parameters can be conditioned on $X$: $\underline{\overline{\beta}} = \underline{\overline{\beta}}(X)$.

To fit the target CNFs, we use a second stage loss of our \longAUlearner, namely, Eq.~\eqref{eq:au-crps} or Eq.~\eqref{eq:au-w22}. For that, we discretize the $\mathcal{Y}$-space or the $[0, 1]$-interval of $u$ into $n_d$ values and infer argmin/argmax values based on those grids. Then, to approximate the integrals, we do the same for the $\mathit{\Delta}$-space and the $[0, 1]$-interval of $\alpha$. The later creates a $\delta / \alpha$-grid with $n_\delta / n_\alpha$ points. Those grids are later used for a rectangle quadrature integration. Furthermore, we also regularize the target CNFs by applying the noise regularization  \cite{rothfuss2019noise}.

\vspace{-0.2cm}
\subsection{Implementation} 

\textbf{Implementation.} We implemented our \AUCNFs using PyTorch and Pyro. For the CNFs of both stages of learning, we used neural spline flows \cite{durkan2019neural} with a standard normal distribution as a base distribution. Neural spline flows build an invertible transformation based on invertible rational-quadratic splines and, thus, allow the direct inference of the (conditional) log-probability, CDF, and quantiles. Neural spline flows are characterized by two main hyperparameters, namely, a number of knots $n_{\text{knots}}$ and a span of the transformation interval, $[-B, B]$. The number of knots, $n_{\text{knots}}$, controls the expressiveness of the flow. The span $B$ defines the support of the transformation. In our experiments, we tune the number of knots $n_{\text{knots}}$ and set the span $B$ via a heuristic depending on sample max/min values (as $\mathcal{Y}$ is assumed to be compact).  

\textbf{Training.} To train our \AUCNFs, we make use of Algorithm~\ref{alg:au-crps}. However, both first and second stage models are fit on the same training data $\mathcal{D}$ without cross-fitting as (regularized) CNFs as neural networks belong to the Donsker class of estimators \cite{van2000asymptotic}. Training of our \AUCNFs proceeds as follows: (1)~we fit the nuisance CNF; (2)~we freeze the nuisance CNF and generate pseudo-CDFs/pseudo-quantiles; and (3)~we train the upper and lower target CNFs. The hyperparameters are then as follows: 
\begin{enumerate}
    \item \textbf{First stage.} We used stochastic gradient descent (SGD) with a minibatch size $b_{\text{N}}$, $n_{e, \text{N}} = 200$ epochs and a learning rate $\eta_\text{N}$. Furthermore, we set the loss coefficient to $\alpha = 1$. Both the number of hidden units of FC$_1$/FC$_2$ and the size of the output of the FC$_1$ are set to 10. The nuisance CNF has the number of knots $n_{\text{knots, N}}$ and the span $B = \max_i(\tilde{y}_i) - \min_i(\tilde{y}_i) + 5$, where $\tilde{y}_i$ are standard normalized outcomes $y_i$. The intensities of the noise regularization for the input and the output are set to $\sigma^2_x$ and $\sigma^2_y$, respectively.
    \item \textbf{Intermediate stage.} We set $n_d = 200$ and $n_\delta = n_\alpha = 50$. Furthermore, we set the scaling hyperparameters $\gamma = 0.25$ for the CRPS loss and  $\gamma = 0.01$ for the $W_2^2$ loss. We clipped too low propensity scores (lower than 0.05).
    \item \textbf{Second stage.}  The upper and lower target CNFs are also fit via SGD with the minibatch size  $b_{ \text{T}} = 64$, $n_{e, \text{T}} = 200$ epochs, and the learning rate $\eta_\text{T} = 0.005$. The intensities of the noise regularization for the input are the same as for the nuisance CNF, $\sigma^2_x$. The number of hidden units of FC$_3$ is also set to 10. The target CNFs have the number of knots twice larger than the nuisance flow, $n_{\text{knots, T}} = 2 \, n_{\text{knots, N}}$, and the span $B = \max_i(a_i \, \tilde{y}_i) - \min_i(a_i \, \tilde{y}_i) + \max_i((1 - a_i) \, \tilde{y}_i) - \min_i((1- a_i) \, \tilde{y}_i) + 5$, where $\tilde{y}_i$ are standard normalized outcomes $y_i$. To further stabilize the training of the target CNFs, we employed an exponential moving average (EMA) of the target CNFs parameters \cite{polyak1992acceleration} with a smoothing hyperparameter $\lambda = 0.995$.
\end{enumerate}

We demonstrate the detailed training procedure of our \AUCNFs (CRPS) in Algorithm~\ref{alg:au-cnfs} (\AUCNFs ($W^2_2$) follow analogously).

\begin{algorithm}[t]
\caption{Training procedure of our \AUCNFs (CRPS)} \label{alg:au-cnfs}
\begin{algorithmic}[1]
    \footnotesize
    \State {\textbf{Input.}} Training dataset $\mathcal{D} = \{x_i, a_i, y_i\}_{i=1}^n$; scaling $\gamma \in (0, 1]$; hyperparameter $\alpha$; number of epochs $n_{e, \text{N}}, n_{e, \text{T}}$; minibatch sizes $b_{\text{N}}, b_{\text{T}}$; learning rates $\eta_\text{N}, \eta_\text{T}$; intensities of the noise regularization $\sigma^2_x, \sigma^2_y$; EMA smoothing $\lambda$; $\delta$-grid $\{\delta_j \in \mathit{\Delta}\}_{j=1}^{n_\delta}$; $\mathcal{Y}$-grid $\{y_j \in \mathcal{Y}\}_{j=1}^{n_d}$ 
    \State {\textbf{Init.}} Parameters of the nuisance CNF: FC$_1^{(0)}$ and FC$_2^{(0)}$ \Comment{\textit{First stage}} 
    \For{$i$ = 0 {\bfseries to} $\lceil n_{e, \text{N}} \cdot n / b_{\text{N}} \rceil$ }
        \State Draw a minibatch $\mathcal{B} = \{X, A, Y\}$ of size $b_{\text{N}}$ from $\mathcal{D}$ 
        \State $\big(R, \hat{\pi}_a(X) \big) \gets $ FC$_1^{(i)}(X)$
        \State Noise regularization: $\xi_x \sim N(0, \sigma^2_x), \xi_y \sim N(0, \sigma^2_y), \quad \big(\tilde{R}, \tilde{Y} \big) \gets \big(R + \xi_x, Y + \xi_y \big)$
        \State $\theta(X, A) \gets $ FC$_2^{(i)}(A, \tilde{R})$
        \State $\hat{\mathbb{P}}(Y \mid X, A) \gets $ density of a CNF with parameters $\theta(X, A)$
        \State $\hat{\mathcal{L}}_\text{N}(\hat{\mathbb{P}}, \hat{\pi}) \gets \mathbb{P}_{ b_{\text{N}}} \big\{ - \log \hat{\mathbb{P}}(Y = \tilde{Y} \mid X, A) + \alpha \operatorname{BCE} (\hat{\pi}_A(X), A)\big\}$
        \State $\big($ FC$_1^{(i+1)}$, FC$_2^{(i+1)} \big) \gets$ optimization step wrt. $\hat{\mathcal{L}}_\text{N}(\hat{\mathbb{P}}, \hat{\pi})$ with the learning rate $\eta_\text{N}$
    \EndFor
    \State {\textbf{Output.}} Estimator of the nuisance functions $\hat{\eta} = (\hat{\pi}, \widehat{\mathbb{F}}_0, \widehat{\mathbb{F}}_1)$
    \Statex
    \For{$i$ = 0 {\bfseries to} $n$ } \Comment{\textit{Intermediate stage}}
        \State Infer argmax/argmin of sup/inf-convolutions based on the $\mathcal{Y}$-grid: $\big\{\hat{y}^{\underline{\overline{*}}}_{\mathcal{Y}\text{-grid}}(\delta_j \mid x_i)\big\}_{j=1}^{n_\delta}$
        \State Clip propensity scores for bias-correction terms $\big\{\underline{\overline{C}}(\delta_j, z_i; \hat{\eta})\big\}_{j=1}^{n_\delta}$ (Eq.~\eqref{eq:bias-corr-crps}) 
        \State Use $\hat{\eta}$ to infer the pseudo-CDF for the $\delta$-grid: $\{{\underline{\overline{\mathbf{F}}}}_{\text{AU}}(\delta_j, z_i; \hat{\eta})\big\}_{j=1}^{n_\delta}$ (Eq.~\eqref{eq:au-crps})
    \EndFor
    \Statex
    \State {\textbf{Init.}} Parameters of the upper and lower target CNFs: ${\overline{\text{FC}}}_3^{(0)}$ and $\underline{{\text{FC}}}_3^{(0)}$ \Comment{\textit{Second stage}}
    \For{$i$ = 0 {\bfseries to} $\lceil n_{e, \text{T}} \cdot n / b_{\text{T}} \rceil$ }
        \State Draw a minibatch $\mathcal{B} = \{X, A, Y\}$ of size $b_{\text{T}}$ from $\mathcal{D}$
        \State Noise regularization: $\xi_x \sim N(0, \sigma^2_x), \quad \tilde{X} \gets X + \xi_x$
        \State $\overline{\beta(X)} \gets \overline{\text{FC}}_3^{(i)}(\tilde{X}), \quad \underline{\beta(X)} \gets \underline{\text{FC}}_3^{(i)}(\tilde{X})$
        \State $\widehat{{\overline{g}}}(\cdot, X) \gets $ CDF of a CNF with parameters $\overline{\beta(X)}, \quad \widehat{{\underline{g}}}(\cdot, X) \gets $ CDF of a CNF with parameters $\underline{\beta(X)}$
        \State $\widehat{\overline{{\mathcal{L}}}}_\text{AU, CRPS}(\widehat{{\overline{g}}}, \hat{\eta}) = \mathbb{P}_{b_{\text{T}}} \Big\{ \frac{1}{n_\delta} \sum_{j=1}^{n_\delta} \big({{\overline{\mathbf{F}}}}_{\text{AU}}(\delta_j, Z; \hat{\eta}) - \widehat{{\overline{g}}}(\delta_j, X) \big)^2 \Big\}$
        \State $\widehat{\underline{{\mathcal{L}}}}_\text{AU, CRPS}(\widehat{{\underline{g}}}, \hat{\eta}) = \mathbb{P}_{b_{\text{T}}} \Big\{ \frac{1}{n_\delta} \sum_{j=1}^{n_\delta} \big({{\underline{\mathbf{F}}}}_{\text{AU}}(\delta_j, Z; \hat{\eta}) - \widehat{{\underline{g}}}(\delta_j, X) \big)^2 \Big\}$
        \State $\overline{\text{FC}}_3^{(i+1)} \gets $ optimization step wrt. $\widehat{\overline{{\mathcal{L}}}}_\text{AU, CRPS}(\widehat{{\overline{g}}}, \hat{\eta})$ with the learning rate $\eta_\text{T}$
        \State $\underline{\text{FC}}_3^{(i+1)} \gets $ optimization step wrt. $\widehat{\underline{{\mathcal{L}}}}_\text{AU, CRPS}(\widehat{{\underline{g}}}, \hat{\eta})$ with the learning rate $\eta_\text{T}$
        \State EMA update for $\overline{\text{FC}}_3^{(i+1)}$ and $\underline{\text{FC}}_3^{(i+1)}$ with smoothing $\lambda$
    \EndFor
    \State {\textbf{Output.}} Estimator of Makarov bounds with CNFs: EMA smoothed  $\widehat{{\overline{g}}}$ and $\widehat{\underline{{g}}}$ 
\end{algorithmic}
\end{algorithm}

\textbf{Hyperparameter tuning.} We performed extensive hyperparameter tuning only for the nuisance CNF. The following hyperparameters are subjects to tuning: the minibatch size $n_{b, \text{N}}$, the learning rate $\eta_\text{N}$, the number of knots $n_{\text{knots, N}}$, and the intensities of the noise regularization, $\sigma^2_x$ and $\sigma^2_y$. Further details of hyperparameter tuning are provided in Appendix~\ref{app:hparams}. The hyperparameters of the target CNFs for all the experiments are either kept fixed or are inherited from the nuisance CNF.

\clearpage
\section{Hyperparameter tuning} \label{app:hparams}
We performed hyperparameters tuning of the nuisance function estimators for all the baselines based on five-fold cross-validation using the training dataset. For each baseline, we did a grid search wrt. different tuning criteria (see the details in Table~\ref{tab:hyperparams}). The optimal hyperparameters can be found as YAML files in our GitHub.

\begin{table*}[th]
    \caption{Hyperparameter tuning for baselines.}
    \label{tab:hyperparams}
    \vspace{-0.2cm}
    \begin{center}
    \scalebox{0.81}{
        \begin{tabu}{l|l|l|r}
            \toprule
            Model & Sub-model & Hyperparameter & Range / Value \\
            \midrule
            \multirow{4}{*}{\begin{tabular}{l} Plug-in \\ DKME \cite{muandet2021counterfactual,melnychuk2023normalizing}\end{tabular}} & \multirow{4}{*}{\begin{tabular}{l} --- \end{tabular}} & Kernel smoothness ($\sigma_k = 2 h_k^2$) & 0.0001, 0.001, 0.01, 0.1, 1, 10, 20 \\
                                 && Regularization parameter ($\varepsilon$) & 0.0001, 0.001, 0.01, 0.1, 1, 10 \\
                                 && Tuning strategy & full grid search \\
                                 && Tuning criterion & MSE of ridge regression \\                  
            \midrule
            \multirow{16}{*}{\begin{tabular}{l} CA-CNFs, \\ \AUCNFs \end{tabular} } & \multirow{9}{*}{\begin{tabular}{l} nuisance CNF \\ ($\equalhat$ Plug-in CNF) \\ ($\equalhat$ IPTW-CNF) \end{tabular}} &  Number of knots ($n_{\text{knots,N}}$) & 5, 10, 20\\
                                  && Intensity of noise regularization ($\sigma^2_x$) & 0.0, $0.01^2$, $0.05^2$, $0.1^2$ \\
                                  && Intensity of noise regularization ($\sigma^2_y$) & 0.0, $0.01^2$, $0.05^2$, $0.1^2$ \\
                                  && Learning rate ($\eta_\text{N}$) & 0.001, 0.005 \\
                                  && Minibatch size ($b_\text{N}$) & 32, 64 \\
                                  && Tuning strategy & random grid search with 50 runs \\
                                  && Tuning criterion & $\mathcal{L}_\text{NLL}$ \\
                                  && Number of epochs ($n_{e, \text{N}}$) & 200 \\
                                  && Optimizer & SGD (momentum = 0.9) \\
                                  \cmidrule{2-4}
                                  & \multirow{7}{*}{target CNFs} &  Number of knots ($n_{\text{knots,T}}$) & $2 \, n_{\text{knots,N}}$ \\
                                  && Intensity of noise regularization & $\sigma^2_x$ \\
                                  && Learning rate ($\eta_\text{T}$) & 0.005 \\
                                  && Minibatch size ($b_\text{T}$) & 64 \\
                                  && Tuning strategy & w/o tuning \\
                                  && Number of epochs ($n_{e, \text{T}}$) & 200 \\
                                  && Optimizer & SGD (momentum = 0.9) \\
            \bottomrule
        \end{tabu}}
    \end{center}
\end{table*}

\clearpage
\section{Dataset details} \label{app:data-details}
\vspace{-0.2cm}
\subsection{Synthetic data}
Our synthetic data generator is adapted from \cite{kallus2019interval,melnychuk2024bounds}. Although the original synthetic benchmark contains hidden confounding, we include the confounder as the second observed covariate. We created three settings with different conditional outcome distributions: (1)~normal, (2)~multi-modal and (3)~exponential. Specifically, synthetic covariates, $X_1, X_2$, a treatment, $A$, and an outcome, $Y$, are sampled from the following data generating mechanisms:
\begin{equation}
    \begin{cases}
        X_1 \sim \text{Unif}(-2, 2), \\
        X_2 \sim N(0, 1),  \\
        A \sim \text{Bern}\left(\frac{1}{1 + \exp(-(0.75 \, X_1 - X_2 + 0.5))}\right) , \\
        \mu_A(X) := (2\, A - 1) \, X_1 + A - 2 \, \sin(2 \, X_1 + X_2) - 2\,X_2\,(1 + 0.5\,X_1) , \\
        Y \sim \mathbb{P}_{j}(Y \mid \mu_A(X), A),
    \end{cases}
\end{equation}
where $X_1, X_2$ are mutually independent and $\mathbb{P}_{j}(Y \mid \mu_A(X), A)$ are defined by three settings $j \in \{1, 2, 3\}$:
\begingroup\makeatletter\def\f@size{8}\check@mathfonts
\begin{align}
    \mathbb{P}_{1}(Y \mid \mu_A(X), A) &= N(\mu_A(X), 1), \\
    \mathbb{P}_{2}(Y \mid \mu_A(X), A) &= \begin{cases}
        \operatorname{Mixture}\big\{0.7 \, N(\mu_0(X) - 0.5, 1.5^2) + 0.3 \, N(\mu_0(X) + 1.5, 0.5^2)   \big\}, & \text{if } A = 0, \\
        \operatorname{Mixture}\big\{0.3 \, N(\mu_0(X) - 2.5, 0.35^2) + 0.4 \, N(\mu_0(X) + 0.5, 0.75^2) + 0.3 \, N(\mu_0(X) + 2, 0.5^2) \big\}, & \text{if } A = 1, \\
    \end{cases} \\
    \mathbb{P}_{2}(Y \mid \mu_A(X), A) &= \operatorname{Exp}(1/\abs{\mu_A(X)}) ,
\end{align}
\endgroup
where $N(\mu, \sigma^2)$ is the normal distribution and where $\operatorname{Exp}(\lambda)$ is the exponential distribution. 

The synthetic benchmark allows us to infer or approximate the ground-truth Makarov bounds. For the (1)~normal distribution, they are given by the analytical solution \cite{fan2010sharp}, namely, the CDFs of half-normal distributions. However, in the settings (2) and (3), they need to be approximated numerically. Thus, for the (2)~multi-modal distribution, we only infer the Makarov bounds on the CDF, as the quantiles are not directly available for the mixture distribution. For the (3)~exponential distribution, however, we can infer both the Makarov bounds on the CDF and the quantiles.

\vspace{-0.2cm}
\subsection{HC-MNIST dataset}
HC-MNIST dataset was introduced as a high-dimensional, semi-synthetic dataset \cite{jesson2021quantifying} based on the MNIST image dataset \cite{lecun1998mnist}. The HC-MNIST dataset builds on $n_{\text{train}}=60,000$ train and  $n_{\text{test}}=10,000$ test images. HC-MNIST takes original high-dimensional images and maps them onto a one-dimensional manifold, where potential outcomes depend in a complex way on the average intensity of light and the label of an image. The treatment also uses this one-dimensional summary, $\phi$, together with an additional (hidden) synthetic confounder, $U$ (we consider this hidden confounder as another observed covariate). HC-MNIST is then defined by the following data-generating mechanism:
\begin{equation}
    \begin{cases}
        U \sim \text{Bern}(0.5), \\
        X \sim \text{MNIST-image}(\cdot), \\
        \phi := \left( \operatorname{clip} \left( \frac{\mu_{N_x} - \mu_c}{\sigma_c}; - 1.4, 1.4\right) - \text{Min}_c \right) \frac{\text{Max}_c - \text{Min}_c} {1.4 - (-1.4)}, \\
        \alpha(\phi; \Gamma^*) := \frac{1}{\Gamma^* \operatorname{sigmoid}(0.75 \phi + 0.5)} + 1 - \frac{1}{\Gamma^*}, \\
        \beta(\phi; \Gamma^*) := \frac{\Gamma^*}{\operatorname{sigmoid}(0.75 \phi + 0.5)} + 1 - \Gamma^*, \\
        A \sim \text{Bern}\left( \frac{u}{\alpha(\phi; \Gamma^*)} + \frac{1 - u}{\beta(\phi; \Gamma^*)}\right), \\
        Y \sim N\big((2A - 1) \phi + (2A - 1) - 2 \sin( 2 (2A - 1) \phi ) - 2 (2U - 1)(1 + 0.5\phi), 1\big),
    \end{cases}
\end{equation}
where $c$ is a label of the digit from the sampled image $X$; $\mu_{N_x}$ is the average intensity of the sampled image; $\mu_c$ and $\sigma_c$ are the mean and standard deviation of the average intensities of the images with the label $c$; and $\text{Min}_c= -2 + \frac{4}{10}c, \text{Max}_c = -2 + \frac{4}{10}(c + 1)$. The parameter $\Gamma^*$ defines what factor influences the treatment assignment to a larger extent, i.e., the additional confounder or the one-dimensional summary. We set $\Gamma^* = \exp(1)$. For further details, we refer to \cite{jesson2021quantifying}.

Similarly to the synthetic data with the normal distribution, the ground-truth Makarov bounds for the HC-MNIST dataset are given by the analytical solution, namely, the CDFs of half-normal distributions \cite{fan2010sharp}. 

\vspace{-0.2cm}
\subsection{IHDP100 dataset}
The Infant Health and Development Program (IHDP100) \cite{hill2011bayesian,shalit2017estimating} is a standard semi-synthetic benchmark for treatment effect estimation. It contains 100 train/test splits with $n_\text{train} = 672$, $n_\text{test} = 75$, and $d_x = 25$. Yet, this dataset contains severe overlap violations, which makes the methods using propensity re-weighting unstable \cite{curth2021nonparametric,curth2021really}.

The IHDP100 dataset samples synthetic outcomes from the conditional normal distribution, $N(\mu_i, 1)$, where $\mu_i$ are CAPOs provided in the dataset. Therefore, the ground-truth Makarov bounds are given by the half-normal distributions \cite{fan2010sharp}.

\clearpage
\section{Additional results} \label{app:add-results}

In the following, we provide additional results for our experiments with synthetic data, the results of the semi-synthetic IHDP100 benchmark, and the runtime information for all the baselines.

\vspace{-0.2cm}
\subsection{Synthetic data}

We provide additional results of our synthetic benchmark in Fig.~\ref{fig:synth-res-w2}. Therein, the out-of-sample performance is reported wrt. $W_2$ evaluation score for two settings, i.e., normal and exponential setting.\footnote{We omitted the multi-modal setting, as the mixture distribution does not provide direct ground-truth quantiles and, thus, Makarov bounds on the quantiles.} Our \AUCNFs achieve superior performance in the normal setting and perform well in the exponential setting. Notably, the IPTW-CNF also perform well in the exponential setting, mainly due to the orthogonality wrt. potential outcome distributions.   

\begin{figure}[th]
    \centering
    \includegraphics[width=0.9\textwidth]{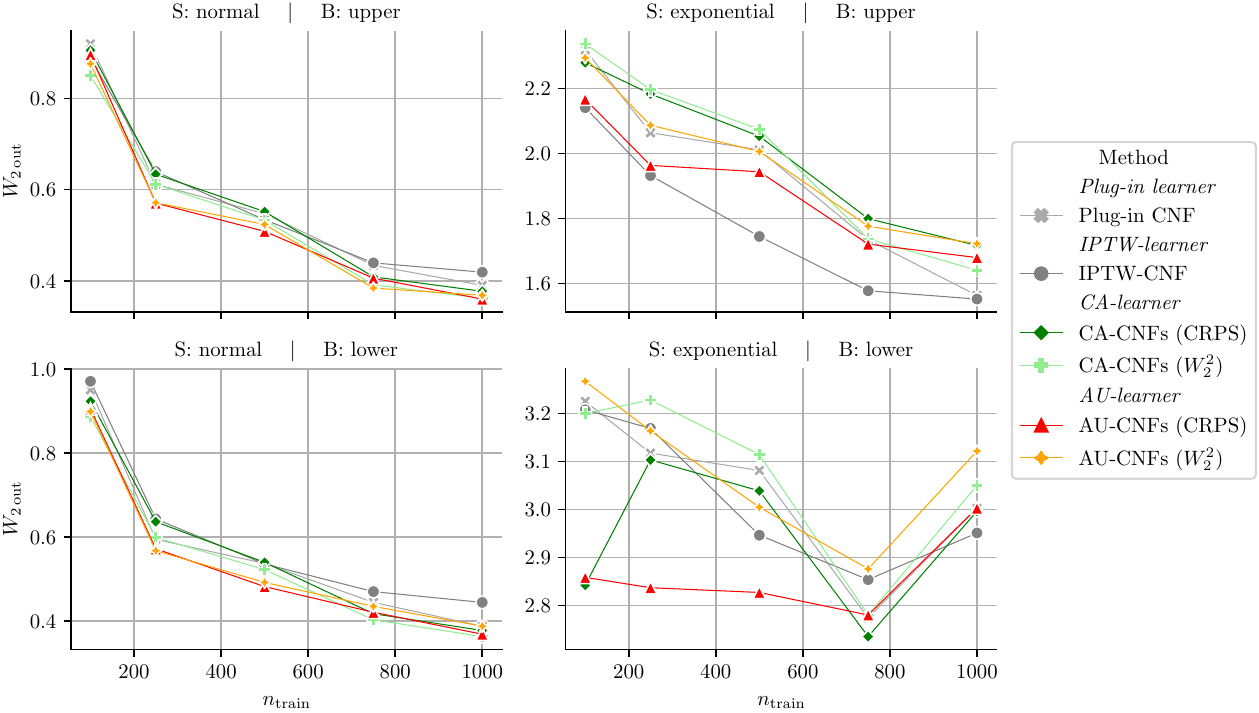}
    \caption{Results for synthetic experiments with varying size of training data, $n_{\text{train}}$, in the settings: normal and exponential setting. Reported: mean out-sample $W_2$ over 20 runs. The results for Plug-in DKME are omitted for the $W_2$ evaluation score, as Plug-in DKME does not provide direct quantiles inference.}
    \label{fig:synth-res-w2}
\end{figure}

\vspace{-0.2cm}
\subsection{IHDP100 dataset}

We report the in- and out-sample results for the IHDP100 dataset in Table~\ref{tab:ihdp}. As expected, our CA-CNFs achieve the best performance. This happens due to the severe overlap violations in the IHDP100 dataset \cite{curth2021nonparametric,curth2021really}, and learners with propensity-score re-weighting are theoretically expected to perform worse.

\begin{table}[th]
    \centering
    \caption{Results for IHDP100 dataset. Reported: median in-sample and out-sample rCRPS $\pm$ sd / $W_2$ $\pm$ sd over 100 train/test splits. The results for Plug-in DKME are omitted for the $W_2$ evaluation score, as Plug-in DKME does not provide direct quantiles inference.}
    \vspace{-0.1cm}
    \label{tab:ihdp}
    \scalebox{0.8}{\begin{tabu}{l|cc|cc}
\toprule
 & \multicolumn{4}{c}{B: upper} \\
 & $\text{rCRPS}_{\text{in}}$ & $\text{rCRPS}_{\text{out}}$ & $W_{2 \, \text{in}}$ & $W_{2 \, \text{out}}$ \\
\midrule
Plug-in DKME & 0.718 $\pm$ 0.831 & 0.778 $\pm$ 0.865 & --- & --- \\
\vspace{0.9mm}
Plug-in CNF & 0.302 $\pm$ 0.269 & 0.317 $\pm$ 0.260 & 0.631 $\pm$ 1.195 & 0.644 $\pm$ 1.237 \\
\vspace{0.9mm}
IPTW-CNF & 0.367 $\pm$ 0.198 & 0.380 $\pm$ 0.206 & 0.783 $\pm$ 0.971 & 0.783 $\pm$ 1.067 \\
CA-CNFs (CRPS) & \textbf{0.281 $\pm$ 0.284} & 0.299 $\pm$ 0.294 & \underline{0.609 $\pm$ 1.508} & \underline{0.649 $\pm$ 1.625} \\
\vspace{0.9mm}
CA-CNFs ($W^2_2$) & \underline{0.283 $\pm$ 0.273} & \textbf{0.292 $\pm$ 0.284} & \textbf{0.583 $\pm$ 1.318} & \textbf{0.606 $\pm$ 1.425} \\
AU-CNFs (CRPS) & 0.314 $\pm$ 0.264 & 0.324 $\pm$ 0.278 & 0.681 $\pm$ 1.385 & 0.709 $\pm$ 1.503 \\
AU-CNFs ($W^2_2$) & 0.285 $\pm$ 0.300 & \underline{0.297 $\pm$ 0.308} & 0.589 $\pm$ 1.375 & 0.620 $\pm$ 1.467 \\
\bottomrule
\end{tabu}
} \\
    \scalebox{0.8}{\begin{tabu}{l|cc|cc}
\toprule
 & \multicolumn{4}{c}{B: lower} \\
 & $\text{rCRPS}_{\text{in}}$ & $\text{rCRPS}_{\text{out}}$ & $W_{2 \, \text{in}}$ & $W_{2 \, \text{out}}$ \\
\midrule
Plug-in DKME & 0.709 $\pm$ 0.827 & 0.757 $\pm$ 0.866 & --- & --- \\
\vspace{0.9mm}
Plug-in CNF & 0.312 $\pm$ 0.212 & 0.326 $\pm$ 0.227 & 0.650 $\pm$ 0.973 & 0.660 $\pm$ 1.073 \\
\vspace{0.9mm}
IPTW-CNF & 0.384 $\pm$ 0.208 & 0.381 $\pm$ 0.227 & 0.814 $\pm$ 1.028 & 0.781 $\pm$ 1.193 \\
CA-CNFs (CRPS) & \textbf{0.303 $\pm$ 0.310} & \textbf{0.308 $\pm$ 0.329} & \underline{0.675 $\pm$ 1.617} & \underline{0.658 $\pm$ 1.797} \\
\vspace{0.9mm}
CA-CNFs ($W^2_2$) & \underline{0.307 $\pm$ 0.302} & \underline{0.311 $\pm$ 0.323} & \textbf{0.639 $\pm$ 1.451} & \textbf{0.642 $\pm$ 1.625} \\
AU-CNFs (CRPS) & 0.330 $\pm$ 0.254 & 0.344 $\pm$ 0.267 & 0.726 $\pm$ 1.379 & 0.744 $\pm$ 1.550 \\
AU-CNFs ($W^2_2$) & 0.308 $\pm$ 0.284 & 0.314 $\pm$ 0.304 & 0.642 $\pm$ 1.362 & 0.647 $\pm$ 1.549 \\
\bottomrule
\multicolumn{5}{l}{Lower $=$ better (best in bold, second best underlined) }
\end{tabu}
}
\end{table}

\vspace{-0.2cm}
\subsection{Runtime comparison}
Table~\ref{tab:runtimes} provides the runtime comparison of different methods used to estimate Makarov bounds. Therein, our \AUCNFs are well scalable.

\begin{table}[hbp]
    \centering
    \scalebox{0.9}{
    \begin{tabu}{l|c|c}
        \toprule
        Method &  Training stages  &  Average duration (in mins) \\
        \midrule
        Plug-in DKME & First stage & $\approx 10.7$ \\
        Plug-in CNFs & First stage & $\approx 1.4$ \\
        IPTW-CNF & First stage & $\approx 1.5$ \\
        CA-CNFs & First \& second stages & $\approx 3.3$ \\
        AU-CNFs & First \& second stages & $\approx 4.1$ \\
        \bottomrule
    \end{tabu}}
    \vspace{0.2cm}
    \caption{Total runtime (in seconds) for different methods to estimate Makarov bounds. Reported: average runtime duration (lower is better). Experiments were carried out on 2 GPUs (NVIDIA A100-PCIE-40GB) with IntelXeon Silver 4316 CPUs @ 2.30GHz.}
    \label{tab:runtimes}
\end{table}

\clearpage
\section{Case study: Lockdown effectiveness} \label{app:case-study}

In the following, we provide a case study, where we apply our \longAUlearner to a real-world problem. Here, we want to study the effectiveness of lockdowns during the COVID-19 pandemic by using the observational data collected in the first half-year of 2020 \cite{banholzer2021estimating}. Specifically, we aim to estimate the probability that the incidence falls after the implementation of the strict lockdown, \ie, a probability of individual benefit from treatment (intervention) (PITB).

\vspace{-0.2cm}
\subsection{Dataset} 

We used multi-county data provided by \cite{banholzer2021estimating}.\footnote{The data is available at \url{https://github.com/nbanho/npi_effectiveness_first_wave/blob/master/data/data_preprocessed.csv}.} The outcome $Y \in [-7, 0]$ is defined as the relative case growth per week (in log), namely, the number of new cases divided by the number of cumulative cases. Then, the treatment $A \in \{0, 1\}$ is taken as an implementation of the strict lockdown one week before. We also choose three ($d_x = 3$) pre-treatment covariates $X$: the relative case growths from the previous week, the relative case growth from two weeks ago, and the implementation of the strict lockdown from two weeks ago. We assume that the data is i.i.d. and that the causal assumptions (1)--(3) are satisfied. We filtered out observations where the number of cumulative cases is fewer than 20. As a result, we ended up with $n = n_0 + n_1 = 152 + 112$ treated and untreated observations, respectively. 

\vspace{-0.2cm}
\subsection{Results} 

We present the results for our case study in Fig.~\ref{fig:lockdown-results}. Therein, we report two quantities: the estimated bounds on the probability of the individual treatment benefit (PITB), $\mathbb{P}(Y[1] - Y[0] \le 0 \mid x)$, for 20 countries during week 15 of 2020. Additionally, we show bounds on a population analogue of PITB, namely, a probability of the population treatment benefit, $\mathbb{P}(Y[1] - Y[0] \le 0)$. We estimated the bounds on the PITB with both AU-CNFs (CRPS) and AU-CNFs ($W_2^2$), and both methods produced very similar results (which implies a robustness of our \longAUlearner). For the population analogue of the PITB, we first efficiently estimated the distributions of the potential outcomes with interventional normalizing flows (INFs) \cite{melnychuk2023normalizing} and then used them to infer the Makarov bounds at the population level.

\begin{figure}[th]
    \centering
    \vspace{-0.4cm}
    \includegraphics[width=0.95\textwidth]{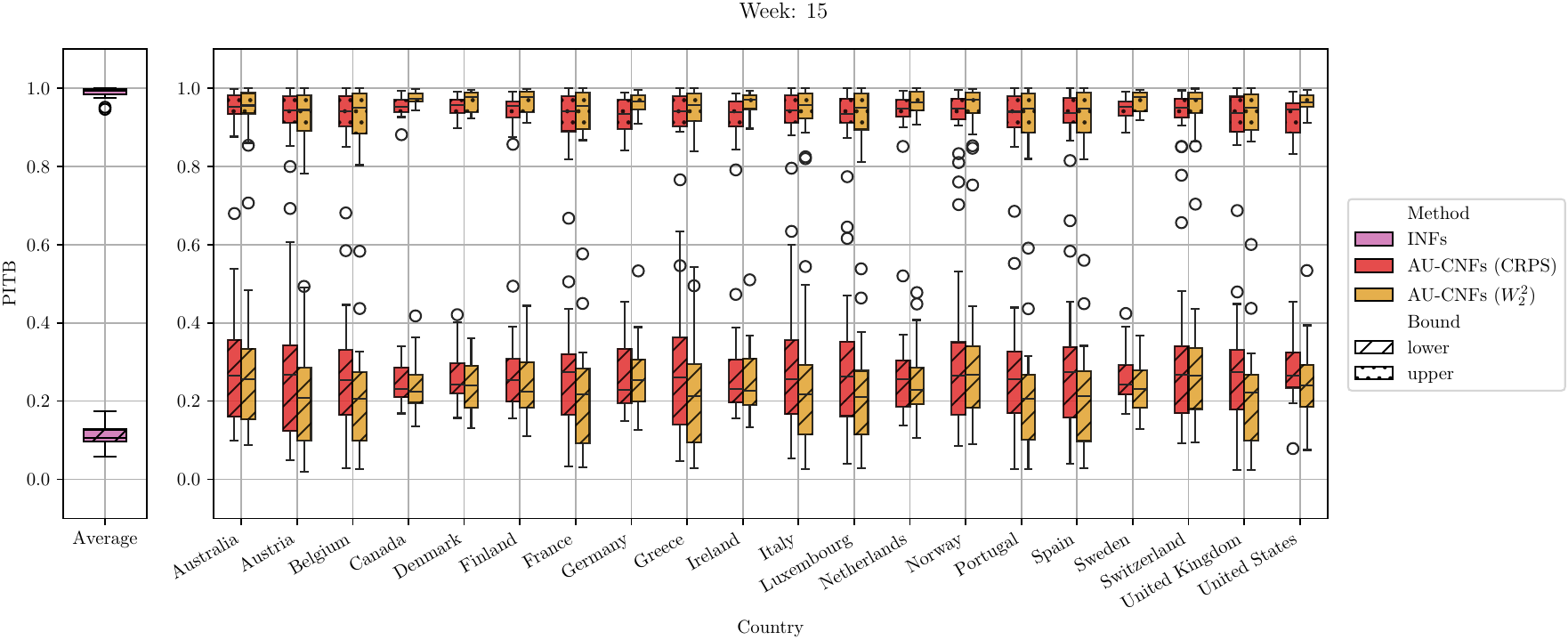}
    \vspace{-0.3cm}
    \caption{Results for the real-world case study analyzing the effectiveness of lockdowns during the COVID-19 pandemic. Reported: in-sample estimated bounds on the probability of the individual treatment benefit (PITB), $\mathbb{P}(Y[1] - Y[0] \le 0 \mid x)$, over 20 runs for 20 countries during week 15 of 2020. Also, we show bounds on a probability of the population treatment benefit, i.e., $\mathbb{P}(Y[1] - Y[0] \le 0)$. These are displayed in the small figure on the left. Each estimated bound is shown as two boxplots; hence, we also display the epistemic uncertainty.}
    \label{fig:lockdown-results}
    \vspace{-0.1cm} 
\end{figure}

There are two important takeaways: (1)~The bounds on the PITB are more shifted towards 1, suggesting the drop in the incidence is highly probable after the implementation of the strict lockdown in all the studied countries. (2)~The bounds on PITB are much tighter than their population analogue (\eg, average upper-lower bound width is $0.66$ for AU-CNFs (CRPS) and $0.88$ for INFs). The latter implies that individualization enhances decision-making and makes the Makarov bounds on the aleatoric uncertainty tighter and, thus, more informative.

\end{document}